\theoremstyle{plain}
\newtheorem{theorem}{Theorem}[section]
\newtheorem{proposition}[theorem]{Proposition}
\newtheorem{corollary}[theorem]{Corollary}
\theoremstyle{definition}
\theoremstyle{remark}
\newcommand{\vecfnt}{\bm}
\newcommand{\setfnt}{\mathsf}
\newcommand{\defeq}{\coloneqq}
\newcommand{\zint}[2]{{\left[#1\mathrel{..}#2\right]}}
\newcommand{\R}{\mathbb{R}}
\newcommand{\Z}{\mathbb{Z}}
\newcommand{\dd}{\,d}
\newcommand{\Mid}{\mathrel{\Vert}}
\newcommand{\cindep}{\mathrel{\perp\!\!\!\perp}}
\newcommand{\cdep}{\mathrel{\centernot{\perp\!\!\!\perp}}}
\newcommand{\bestdl}[1]{\underline{#1}}
\newcommand{\best}[1]{\fbox{#1}}
\DeclareMathOperator{\sigmoid}{sigmoid}
\DeclareMathOperator{\softmax}{softmax}
\DeclareMathOperator{\expec}{\mathbb{E}}
\DeclareMathOperator{\var}{Var}
\DeclareMathOperator{\Sbox}{Sbox}
\DeclareMathOperator{\ent}{\mathbb{H}}
\DeclareMathOperator{\mi}{\mathbb{I}}
\DeclareMathOperator{\pmi}{pmi}
\DeclareMathOperator{\kl}{\mathbb{KL}}
\DeclareMathOperator*{\argmax}{\arg\,\max}
\DeclareMathOperator*{\argmin}{\arg\,\min}
\DeclareMathOperator{\snr}{SNR}
\DeclareMathOperator{\logsigmoid}{logsigmoid}
\DeclareMathOperator{\logsumexp}{logsumexp}
\DeclareMathOperator{\logsoftmax}{logsoftmax}
\DeclareMathOperator{\hammingweight}{HW}
\newcommand{\all}{\textsf{ALL}}
\newcommand{\blsnr}{SNR}
\newcommand{\blsosd}{SOSD}
\newcommand{\blcpa}{CPA}
\newcommand{\blgradvis}{GradVis}
\newcommand{\blsaliency}{Saliency}
\newcommand{\blinputxgrad}{Input $*$ Grad}
\newcommand{\bllrp}{LRP}
\newcommand{\bloccpoi}{OccPOI}
\newcommand{\bloccl}{1-Occlusion}
\newcommand{\blmoccl}{$m$-Occlusion}
\newcommand{\blocclso}{$2^{\mathrm{nd}}$-order 1-Occlusion}
\newcommand{\blmocclso}{$2^{\mathrm{nd}}$-order $m$-Occlusion}
\newcommand{\blmoccls}{$m^*$-Occlusion}
\newcommand{\blmocclsos}{$2^{\mathrm{nd}}$-order $m^*$-Occlusion}
\title{Learning to Localize Leakage\\of Cryptographic Sensitive Variables}
\author{\name Jimmy Gammell \quad Anand Raghunathan \quad Abolfazl Hashemi \quad Kaushik Roy \\
\email \{jgammell, araghu, abolfazl, kaushik\}@purdue.edu \\
Elmore Family School of Electrical and Computer Engineering, Purdue University
}
\begin{document}

\maketitle

\begin{abstract}
While cryptographic algorithms such as the ubiquitous Advanced Encryption Standard (AES) are secure, \textit{physical implementations} of these algorithms in hardware inevitably `leak' sensitive data such as cryptographic keys. A particularly insidious form of leakage arises from the fact that hardware consumes power and emits radiation in a manner that is statistically associated with the data it processes and the instructions it executes. Supervised deep learning has emerged as a state-of-the-art tool for carrying out \textit{side-channel attacks}, which exploit this leakage by learning to map power/radiation measurements throughout encryption to the sensitive data operated on during that encryption. In this work we develop a principled deep learning framework for determining the relative leakage due to measurements recorded at different points in time, in order to inform \textit{defense} against such attacks. This information is invaluable to cryptographic hardware designers for understanding \textit{why} their hardware leaks and how they can mitigate it (e.g. by indicating the particular sections of code or electronic components which are responsible). Our framework is based on an adversarial game between a classifier trained to estimate the conditional distributions of sensitive data given subsets of measurements, and a budget-constrained noise distribution which probabilistically erases individual measurements to maximize the loss of this classifier. We demonstrate our method’s efficacy and ability to overcome limitations of prior work through extensive experimental comparison on 6 publicly-available power/EM trace datasets from AES, ECC and RSA implementations. Our PyTorch code is available \href{https://github.com/jimgammell/learning_to_localize_leakage}{here}.
\end{abstract}

\section{Introduction}\label{sec:intro}

The Advanced Encryption Standard (AES)\footnote{For clarity of exposition we discuss AES here, but our technique also applies to other algorithms.} \citep{daemen1999, daemen2001} is widely used and trusted for protecting sensitive data. For example, it is approved by the United States National Security Agency for protecting top secret information \citep{nist2003}, it is a major component of the Transport Layer Security (TLS) protocol \citep{rescorla2000} which underlies the security of HTTPS \citep{rescorla2000}, and it is used in payment card readers to secure card information before transmission to financial institutions \citep{bluefin2023}.

AES aims to keep data secret when it is transmitted over insecure channels that are accessible to unknown and untrusted parties (e.g. via wireless transmissions which may be intercepted, or storage on hard drives accessible to untrusted individuals). Prior to transmission, the data is first encoded and partitioned into a sequence of fixed-length bitstrings called \textit{plaintexts}. Each plaintext is then \textit{encrypted} into a \textit{ciphertext} by applying an invertible function from a family of functions indexed by an integer called a \textit{cryptographic key}. This family of functions is designed so that if the key is sampled uniformly at random, then the plaintext and ciphertext are marginally independent. The key is known to the sender and intended recipients of the transmission, and is kept secret from potential eavesdroppers. Thus, the intended recipients can use the key to \textit{decrypt} the ciphertext back into the original plaintext, while eavesdroppers who possess the ciphertext but not the key learn nothing about the plaintext.

Clearly, such an algorithm is effective only if the cryptographic key remains outside of the hands of attackers. AES is believed to be `algorithmically secure' in the sense that it is infeasible to determine the cryptographic key by exploiting its intended inputs and outputs: plaintexts and ciphertexts  \citep{mouha2021, tao2015}. Despite this, \textit{physical implementations} of AES in hardware `leak' information about their cryptographic keys. This phenomenon, called \textit{side-channel leakage}, occurs because hardware inevitably emits measurable physical signals that are statistically associated with the data it processes and the instructions it executes \citep{mangard2007}. There are many diverse physical side-channels which can leak, such as program/operation execution time \citep{kocher1996, lipp2018, kocher2018}, temperature \citep{hutter2014}, and sound due to vibration of electronic components \citep{genkin2014}. In this work the side-channels we consider are \textit{power and electromagnetic (EM) radiation over time} \citep{kocher1999, mangard2007}, which are major security vulnerabilities for AES implementations \citep{genkin2016, bronchain2020}.

\textit{Side-channel attacks} exploit this leakage to break cryptographic implementations by revealing secret internal variables such as cryptographic keys. In this work we consider \textit{profiling} side-channel attacks \citep{chari2003, mangard2007}, which assume a worst-case scenario where the attacker possesses a clone of the target device and can repeatedly measure its power/radiation over time while encrypting arbitrary plaintexts using arbitrary keys. These sequences of power/radiation measurements are recorded as real vectors called \textit{traces}, where each element encodes a measurement at a fixed point in time relative to the start of encryption. The attacker can thereby gather data from the clone device to model the conditional distribution of the secret variable given a trace. They can then defeat the target device by measuring its power/radiation traces, feeding them to the model, and revealing its secret internal variables.

Supervised deep learning has emerged as a state-of-the-art technique for this modeling task, achieving comparable or superior performance to prior approaches while requiring far less data preprocessing and feature selection \citep{maghrebi2016, benadjila2020, zaid2020, wouters2020, lu2021, bursztein2023}. Older non-deep learning attacks were mostly based on parametric statistics and had major limitations such as restrictive modeling assumptions \citep{messerges2000, chari2003, agrawal2005, schindler2005, hospodar2011} and inability to scale to long traces \citep{chari2003, archambeau2006}. Deep learning overcomes these limitations and consequently poses a major and growing threat to a wide assortment of security measures and evaluations that were designed with the limitations of older attacks in mind.

\begin{figure}[t]
    \centering
    \includegraphics[width=0.9\linewidth]{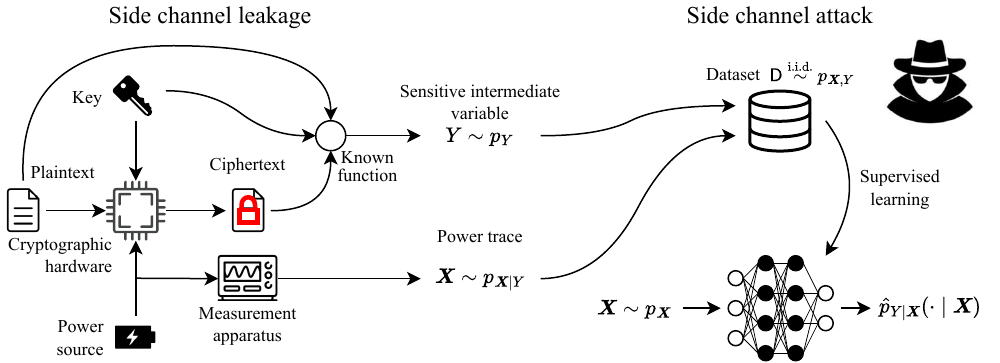}
    \caption{Diagram illustrating our probabilistic framing of side-channel leakage in the special case of power side-channel leakage from a symmetric-key (e.g. AES) cryptographic implementation. A cryptographic device consumes power over time while encrypting data. The power consumption leaks the secret key because it is statistically associated with key-dependent internal variables. Our work considers profiling side-channel attacks, a worst-case scenario where attackers can freely model the relationship between an implementation's secret data and its power consumption over time, then use this model to attack the same device.}
    \label{fig:setting}
\end{figure}

In this work, we seek to leverage deep learning to \textit{defend} against side-channel attacks by identifying specific points in time at which power/radiation measurements leak sensitive data. Our aim is to aid the designers of implementations in understanding \textit{why} their implementations leak (e.g. by indicating the particular sections of code or electronic components which are responsible) and thereby enable targeted mitigation of the leaks. Our key contributions are:
\begin{itemize}
    \item We propose a principled information theoretic quantity which measures the `leakiness' of individual power/EM radiation measurements. Unlike prior approaches, ours is sensitive to arbitrary statistical associations between a chosen secret variable and subsets of measurements. Our quantity is implicitly defined through a constrained optimization problem.
    \item We propose a novel deep learning algorithm called Adversarial Leakage Localization (\all{}) which approximately solves this optimization problem. \all{} is based on an adversarial game played between a neural net `attacker' trained to classify secret data using power/radiation traces, and a budget-constrained noise distribution trained to `defend' against the attack by introducing noise to individual measurements in the traces. Due to the budget constraint, noise cannot be added everywhere and must be rationed for the leakiest measurements. After training we can thereby surmise the leakiness of each measurement from its noisiness.
    \item We compare \all{} with 11 baseline methods on 6 publicly-available power and EM radiation side-channel leakage datasets from implementations of the AES, ECC, and RSA cryptographic standards. To our knowledge this is by far the most comprehensive and quantitative comparison of leakage localization algorithms which has been done, and we release our code and procedure in the hope of facilitating reproducibility and benchmarking of future work in this area.
\end{itemize}

\section{Background and Setting}\label{sec:background}

See Table \ref{tab:abbrev-notation} for a summary of the important notation we will use throughout the rest of the paper. Our setting is standard in the context of profiling power/EM side-channel analysis \citep{chari2003, mangard2007} and is illustrated in Fig. \ref{fig:setting}. We assume to have a cryptographic device which encrypts data in a manner dependent on secret variable $y \in \setfnt{Y},$ where $\setfnt{Y}$ is a finite set (e.g. consisting of bitstrings encoding all possible values of the variable). We assume to have some measurement apparatus that allows us to measure the power consumption or EM radiation throughout encryption. We view the resulting measurement sequences as vectors $\vecfnt{x} \in \R^T,$ called \textit{traces}, where $T \in \Z_{++}$ denotes the number of measurements per trace. We view the secret variable as a random variable $Y \sim p_Y$ where $p_Y$ is a simple (e.g. uniform) distribution under our control. The resulting trace $\vecfnt{X} \mid Y \sim p_{\vecfnt{X} \mid Y}$ then comes from a complicated and \textit{a priori} unknown distribution dictated by factors such as the hardware, environment, and measurement setup. Here each element $X_t$ of the random vector $\vecfnt{X} \equiv (X_1, \dots, X_T)$ represents a power/radiation measurement at a fixed point in time relative to the start of encryption. In this work we assume that the distributions $p_{\vecfnt{X}, Y}(\cdot \mid y)$ exist and have full support for all $y \in \setfnt{Y},$ which is reasonable because empirically power consumption usually has a `random' component which is well-described by additive Gaussian noise \citep{mangard2007}. Most profiling side-channel attacks amount to collecting a dataset $\setfnt{D} \overset{\mathrm{i.i.d.}}{\sim} p_{\vecfnt{X}, Y}$ and using supervised (deep or otherwise) learning to model $p_{Y \mid \vecfnt{X}}.$

\begin{table}[t]
    \captionsetup[table]{skip=6pt}
    \centering
    \caption{A summary of the important quantities and variables used throughout this work.}
    \resizebox{\textwidth}{!}{\begin{tabular}{ll}
    \toprule
    \textbf{Notation} & \textbf{Explanation} \\
    \midrule
    $p_A,$ $p_{A \mid B},$ etc. & Probability distribution of random variable $A,$ conditional distribution of $A$ given $B,$ etc. \\
    $\expec f(A, B),$ $\expec_A f(A, B)$ & Expected value of $f(A, B)$ (left: over all randomness, right: over randomness of $A$) \\
    $\mi[A; B \mid C] \in \R_+$ & Conditional mutual information between $A$ and $B$ given $C$ \\
    $T \in \Z_{++}$ & Number of measurements/timesteps in power trace \\
    $\vecfnt{X} \equiv (X_1, \dots, X_T) \in \R^T$ & Power trace (random variable) consisting of $T$ measurements \\
    $Y \in \setfnt{Y}$ & Secret data (random variable) from finite set $\setfnt{Y}$ (e.g. $\setfnt{Y}=\{0, 1, \dots, 255\}$ for \texttt{uint8} $Y$) \\
    $\vecfnt{\gamma} \equiv (\gamma_1, \dots, \gamma_T) \in [0, 1]^T$ & Occlusion probabilities, where $\gamma_t$ denotes probability of occluding $X_t$ \\
    $C \in \R_{++}$ & Budget for occlusion probability vector, which incurs cost $\sum_{t=1}^{T} c(\gamma_t) $ \\
    $\overline{\gamma} = C/(C+T) \in (0, 1)$ & Re-parameterized version of $C$ which is less-sensitive to dataset \\
    $\vecfnt{\mathcal{A}}_{\vecfnt{\gamma}} \equiv (\mathcal{A}_{\vecfnt{\gamma}, 1}, \dots, \mathcal{A}_{\vecfnt{\gamma}, T}) \in \{0, 1\}^T$ & Occlusion mask (random variable), where $\operatorname{Prob}(\mathcal{A}_{\vecfnt{\gamma}, t} = 0) = \gamma_t$ \\
    $\vecfnt{X} \odot \vecfnt{\mathcal{A}}_{\vecfnt{\gamma}} \in \R^T$ & Occluded trace (elementwise product of trace and occlusion mask) \\
    $\Phi_{\vecfnt{\theta}}(\cdot \mid \vecfnt{X} \odot \vecfnt{\mathcal{A}}_{\vecfnt{\gamma}}, \vecfnt{\mathcal{A}}_{\vecfnt{\gamma}}) \in [0, 1]$ & Predicted distribution of secret data given occluded trace and mask, by classifier w/ weights $\vecfnt{\theta}$ \\
    \bottomrule
\end{tabular}}
    \label{tab:abbrev-notation}
\end{table}

Given these jointly-distributed $\vecfnt{X},\,Y,$ we seek to define for each $X_t$ a scalar quantifying its `leakiness,' i.e. the extent to which it can be exploited by attackers to learn $Y$ from $\vecfnt{X}.$ Towards this end it is useful to consider the Shannon conditional mutual information \citep{shannon1948}
\begin{equation}\label{eqn:cond_mi}
    \mi[Y; X_t \mid \setfnt{S}] \defeq \expec\left[ \log p_{Y \mid X_t, \setfnt{S}}(Y \mid X_t, \setfnt{S}) - \log p_{Y \mid \setfnt{S}}(Y \mid \setfnt{S}) \right], \quad \setfnt{S} \subseteq \{X_1, \dots, X_T\} \setminus \{X_t\}.
\end{equation}
Intuitively, $\mi[Y; X_t \mid \setfnt{S}]$ tells us the extent to which our uncertainty about $Y$ is reduced upon observing $X_t,$ provided we have already observed the elements of $\setfnt{S}.$

Each individual quantity $\mi[Y; X_t \mid \setfnt{S}]$ tells us something about the leakiness of $X_t.$ However, for each $t$ there are $2^{T-1}$ such quantities, and it is not obvious how they should be combined into a single scalar. Clearly, $X_t$ should be considered leaky if $\mi[Y; X_t] > 0$ and non-leaky if $\mi[Y; X_t \mid \setfnt{S}] = 0$ $\forall \setfnt{S}.$ More-subtly, there are many practical scenarios in which we have \textit{second-order leakage}, where $\mi[Y; X_t] = 0$ but $\mi[Y; X_t \mid X_{t'}] > 0$ for some $t' \neq t$ (i.e. $X_t$ \textit{alone} reveals nothing about $Y,$ but reveals something useful about $Y$ \textit{when combined} with $X_{t'}$). For example, many cryptographic implementations use a Boolean masking countermeasure \citep{chari1999, benadjila2020} whereby a sensitive variable is decomposed into a pair of independent `random shares' which are operated on at distant points in time. Hence, the naïve choice of $\mi[Y; X_t]$ as our definition of the leakiness of $X_t$ would not work.

Another naive choice would be to define the leakiness of $X_t$ as $\mi[Y; X_t \mid \{X_1, \dots, X_T\} \setminus \{X_t\}].$ This addresses the insensitivity of $\mi[Y; X_t]$ to second-order leakage. However, it introduces a new shortcoming: when there are many leaky measurements with `redundant' information, we may have $\mi[Y; X_t \mid \{X_1, \dots, X_T\} \setminus \{X_t\}] \approx 0$ even if $\mi[Y; X_t] \gg 0.$ In other words, $X_t$ has little \textit{new} information about $Y$ which is not already provided by the other measurements. As we will show, this phenomenon creates issues for many prior deep learning-based leakage localization algorithms.

We will subsequently propose a natural notion of leakiness which is sensitive to all the quantities described by Eqn. \ref{eqn:cond_mi}. Before we do so, let us consider relevant prior work and its limitations.

\section{Existing work and its limitations}\label{sec:existing_work}

We consider prior work in the side-channel analysis literature which may be leveraged for leakage localization. One prominent category of such work is \textit{parametric statistics-based methods} which use non-deep learning techniques to look for pairwise associations between the measurements $X_t$ and $Y.$ The other is \textit{neural net attribution-based methods}, where 1) a profiling side-channel attack is carried out with supervised deep learning, and 2) the neural net is `interpreted' to determine the relative importance of its input features. Refer to Appendix \ref{appsec:related_work} for further details.

\subsection{First-order parametric statistics-based methods}\label{sec:first_order_background}

First-order parametric statistics-based methods \citep{mangard2007, chari2003, brier2004} are widely used for understanding leakage due to their simplicity, interpretability, and low computational cost. However, these cannot detect leakage of order 2 or higher, and make restrictive assumptions about the relationship between $\vecfnt{X}$ and $Y.$ Thus, such methods are ill-suited to our work's `black-box' leakage localization setting where we make minimal assumptions about the cryptographic implementation being evaluated.

In practice, leakage mitigation will likely be done in a `white-box' setting where hardware designers understand the implementation and have access to its internal variables. In this setting, $n^{\mathrm{th}}$-order leakage can often be decomposed into first-order leakage of $\geq n$ internal variables, which can then be localized individually with first-order methods. However, such analysis is error-prone and relies on careful analysis of the implementation. For example, \citet{benadjila2020} released the second order-leaking ASCADv1-fixed dataset and analyzed its leakage by decomposing it into 2 pairs of first-order-leaking internal variables. Subsequently, \citet{egger2022} noted additional internal variables which contribute to leakage but were missed in the initial analysis. We view black-box deep learning-based methods such as ours as \emph{complementary} with parametric white-box analysis: the latter provides an interpretable and hyperparameter-free assessment of known-leaky internal variables, while the former can detect leakage stemming from both known and unknown sources.

\subsection{Neural net attribution-based methods}

There is a great deal of prior work on localizing leakage by applying interpretability techniques to neural nets which have been trained to perform side-channel attacks \citep{masure2019, hettwer2020, jin2020, zaid2020, wouters2020, valk2021, golder2022, li2022, perin2022, schamberger2023, yap2023, li2024, yap2025}. Most of these techniques can be summarized as follows: 1) use standard supervised deep learning techniques to train a model $\hat{p} \approx p_{Y \mid \vecfnt{X}}$ using data, and 2) use interpretability techniques to estimate the influence of each input feature on the model, on average over the dataset. For example, the Gradient Visualization (\blgradvis{}) technique of \cite{masure2019} estimates the leakiness of $X_t$ by $\expec_{\vecfnt{X}, Y} \left\lvert -\tfrac{\partial}{\partial x_t} \log \hat{p}(Y \mid \vecfnt{x})\rvert_{\vecfnt{x} = \vecfnt{X}} \right\rvert,$ and the \bloccl{} technique of \cite{hettwer2020} estimates it as $\expec_{\vecfnt{X}, Y}\left\lvert \hat{p}(Y \mid \vecfnt{X}) - \hat{p}(Y \mid (\vecfnt{1} - \vecfnt{I}_t)\odot\vecfnt{X}) \right\rvert$ where $\vecfnt{I}_t$ denotes column $t$ of the identity matrix. In this work we consider as baselines the recent \bloccpoi{} method \citep{yap2025}, the \blmoccl{} and \blmocclso{} techniques \citep{schamberger2023}, as well as \blgradvis{} \citep{masure2019}, \blsaliency{} \citep{simonyan2014, hettwer2020}, \bloccl{} \citep{zeiler2014, hettwer2020}, \bllrp{} \citep{bach2015, hettwer2020}, and \blinputxgrad{} \citep{shrikumar2017, wouters2020}. Note that these subsume the deep learning baselines considered by \cite{yap2025, schamberger2023}.

Most of these methods are prone to detecting only some leaking measurements while ignoring others. \blgradvis{}, \blsaliency{}, \bloccl{}, \bllrp{} and \blinputxgrad{} compute the leakiness of $X_t$ by occluding or differentiating the input $x_t$ to $\hat{p}(Y \mid X_1, \dots, X_{t-1}, x_t, X_{t+1}, \dots, X_T)$ and observing its change in output. However, as discussed in Sec. \ref{sec:background}, when many of the measurements carry `redundant' information, one may have $\mi[Y; X_t \mid \{X_1, \dots, X_T\} \setminus \{X_t\}] \approx 0$ even if $\mi[Y; X_t] \gg 0.$ In this case a well-fit $\hat{p}$ becomes essentially constant with respect to $x_t,$ causing these methods to spuriously estimate low leakiness for $X_t.$

While \blmoccl{}, \blmocclso{}, and \bloccpoi{} occlude multiple inputs simultaneously and are thus less-susceptible to this issue, they have their own shortcomings. \blmoccl{} is like \bloccl{} except that it occludes $m$-diameter windows rather than single points, which has an undesirable smoothing effect and is helpful only when the `redundant' measurements are temporally-local. \blmocclso{} entails occluding all pairs of windows, which is expensive because it requires $\Theta(T^2)$ passes through the dataset with the neural net. Additionally, while it provides the interesting and unique ability to discern whether leakage is first-order or second-order, we find that it gives little improvement over \blmoccl{} when adapted to estimate the leakiness of a single point. Unlike the other considered methods, \bloccpoi{} does not assign a leakiness estimate to every measurement. Rather, it is a heuristic which aims to identify a non-unique minimal-cardinality set of measurements sufficient for $\hat{p}$ to attain some classification performance when all other measurements are occluded. Additionally, it is expensive because it requires $\Omega(T)$ non-parallelizable passes through the dataset with the neural net.

\section{Our method: Adversarial Leakage Localization (\texorpdfstring{\all{}}{ALL})}\label{sec:method}

Here we propose a novel algorithm called Adversarial Leakage Localization (\all{}) for localizing leakage which addresses the shortcomings of prior work. In line with Sec. \ref{sec:background}, we have jointly-distributed power/EM radiation traces $\vecfnt{X} \defeq (X_1, \dots, X_T)$ and secret data $Y.$ We seek to define for each $X_t$ a scalar quantifying its `leakiness,' i.e. its usefulness to attackers for learning $Y$ from $\vecfnt{X}.$

Intuitively, \all{} is based on an adversarial game played between a neural net `attacker' trained to predict $Y$ from $\vecfnt{X},$ and a budget-constrained noise distribution trained to `defend' against the attack by introducing noise to individual measurements to maximize the loss of the classifier. Because of the budget constraint, increasing the noise applied to one measurement requires reducing the noise of other measurements. Thus, noise cannot simply be applied everywhere, and must be `triaged' so that leakier measurements get more noise. After training we can surmise the leakiness of a measurement from the amount of noise which has been applied to it.

In this section we first propose a constrained optimization problem which implicitly defines a notion of `leakiness' which is sensitive to all the terms $\mi[Y; X_t \mid \setfnt{S}]:\,\setfnt{S} \subseteq \{X_1, \dots, X_T\} \setminus \{X_t\}.$ We then derive \all{} as a practical deep learning algorithm which approximately solves this optimization problem. We conclude by explicitly contrasting \all{} with prior work. Refer to Appendix \ref{appsec:method} for an extended version of this section with proofs and derivations, and Algorithm \ref{alg:implementation} for pseudocode.

\subsection{Implicit definition of leakiness through a constrained optimization problem}

We define a vector $\vecfnt{\gamma} \in [0, 1)^T$ which we name the \textit{occlusion probabilities}. $\vecfnt{\gamma}$ parameterizes a distribution over a binary random vector with range $\{0, 1\}^T$ which we call the \textit{occlusion mask}: $\vecfnt{\mathcal{A}}_{\vecfnt{\gamma}} \equiv (\mathcal{A}_{\vecfnt{\gamma}, 1}, \dots, \mathcal{A}_{\vecfnt{\gamma}, T}) \sim p_{\vecfnt{\mathcal{A}}_{\vecfnt{\gamma}}}:\,\mathcal{A}_{\vecfnt{\gamma}, t} \sim \operatorname{Bernoulli}(1-\gamma_t).$ For arbitrary vectors $\vecfnt{x} \in \R^T,\,\vecfnt{\alpha} \in \{0, 1\}^T,$ let us denote $\vecfnt{x}_{\vecfnt{\alpha}} \defeq (x_t: t=1, \dots, T: \alpha_t = 1),$ i.e. the sub-vector of $\vecfnt{x}$ containing its elements for which the corresponding element of $\vecfnt{\alpha}$ is 1. We can accordingly use $\vecfnt{\mathcal{A}}_{\vecfnt{\gamma}}$ to obtain random sub-vectors $\vecfnt{X}_{\vecfnt{\mathcal{A}}_{\vecfnt{\gamma}}}$ of $\vecfnt{X}.$ Note that $\gamma_t$ denotes the probability that $X_t$ will \textit{not} be an element of $\vecfnt{X}_{\vecfnt{\mathcal{A}}_{\vecfnt{\gamma}}}.$

We assign to each element of $\vecfnt{\gamma}$ a strictly-increasing `cost' $c: [0, 1) \to \R_+: x \mapsto \tfrac{x}{1-x},$ with the properties $c(0) = 0$ and $\lim_{x \to 1} c(x) = \infty.$ We seek to solve the constrained optimization problem
\begin{equation}\label{eqn:main_optimization_problem}
    \min_{\vecfnt{\gamma} \in [0, 1)^T} \quad \mathcal{L}(\vecfnt{\gamma}) \defeq \mi[Y; \vecfnt{X}_{\vecfnt{\mathcal{A}}_{\vecfnt{\gamma}}} \mid \vecfnt{\mathcal{A}}_{\vecfnt{\gamma}}] \quad \text{such that} \quad \sum_{t=1}^{T} c(\gamma_t) = C
\end{equation}
where $C \in \R_{++}$ is a `budget' hyperparameter. Intuitively, the mutual information term tells us the extent to which the `occluded' trace $\vecfnt{X}_{\vecfnt{\mathcal{A}}_{\vecfnt{\gamma}}}$ `leaks' $Y,$ and $\vecfnt{\gamma}$ is optimized to distribute a fixed `budget of occlusion probability' among the individual elements of $\vecfnt{X}$ to minimize this leakage.

As discussed in Appendix \ref{appsec:optimization_problem}, during this optimization process each $\gamma_t$ is `pushed' up towards 1 in proportion to a weighted sum over \textit{all} values $\mi[Y; X_t \mid \setfnt{S}]:\,\setfnt{S} \subseteq \{X_1, \dots, X_T\} \setminus \{X_t\}.$ Thus, \all{} is sensitive to all associations between $Y$ and subsets of $\{X_1, \dots, X_T\}.$ This is in contrast to parametric methods which consider only pairwise associations between each $X_t$ and Y, methods like \bloccl{}, \blgradvis{}, \blsaliency{}, \blinputxgrad{} and \bllrp{} which are sensitive to associations between $Y$ and the sets $\{X_1, \dots, X_T\}, \{X_1, \dots, X_T\} \setminus \{X_t\},$ and \bloccpoi{}, \blmoccl{}, and \blmocclso{}, which consider larger yet still tiny subsets of the power set of $\{X_1, \dots, X_T\}.$

Due to the budget constraint, increasing $\gamma_t$ requires reducing other $\gamma_\tau,$ $\tau \neq t.$ Let us denote by $\vecfnt{\gamma}^*$ a solution to Eqn. \ref{eqn:main_optimization_problem}. Each $\gamma_t^*$ will be closer to 1 if $X_t$ is `leakier' in the sense that it has greater mutual information with $Y,$ conditioned on other $X_\tau,\,\tau \neq t.$ Thus, we propose using $\gamma_t^*$ to measure the `leakiness' of $X_t.$

%Note that $c$ is strictly increasing with $c(0) = 0$ and $\lim_{x \to 1} c(x) = \infty.$ As shown in Appendix \ref{appsec:optimization_problem}, 
%Additionally, in Appendix \ref{appsec:optimization_problem}, we show that
%\begin{equation}
%    \frac{\partial \mathcal{L}(\vecfnt{\gamma})}{\partial \gamma_t} = -\sum_{\substack{\vecfnt{\alpha} \in \{0, 1\}^T \\ \alpha_t = 0}} p_{\vecfnt{\mathcal{A}}_{\vecfnt{\gamma}, -t}}(\vecfnt{\alpha}_{-t}) \mi[Y; X_t \mid \vecfnt{X}_{\vecfnt{\alpha}}].
%\end{equation}
%Informally we see that each $\gamma_t$ is `pushed' towards 1 in proportion to a weighted average of the terms $\mi[Y; X_t \mid \vecfnt{X}_{\vecfnt{\alpha}_{-t}}]$ for $\vecfnt{\alpha}_{-t} \in \{0, 1\}^{T-1}.$ Due to the budget constraint, increasing $\gamma_t$ necessarily reduces other $\gamma_\tau,\, \tau \neq t.$ If $\vecfnt{\gamma}^*$ is a solution to our optimization problem, we expect each $\gamma_t^*$ to be closer to 1 if $X_t$ is `leakier' in the sense that it has greater mutual information with $Y,$ conditioned on other $X_\tau,\,\tau\neq t.$ Thus, we propose using $\gamma_t^*$ to measure the `leakiness' of $X_t.$

\subsection{Deep learning-based implementation}

\begin{figure}[t]
    \centering
    \includegraphics[width=0.9\linewidth]{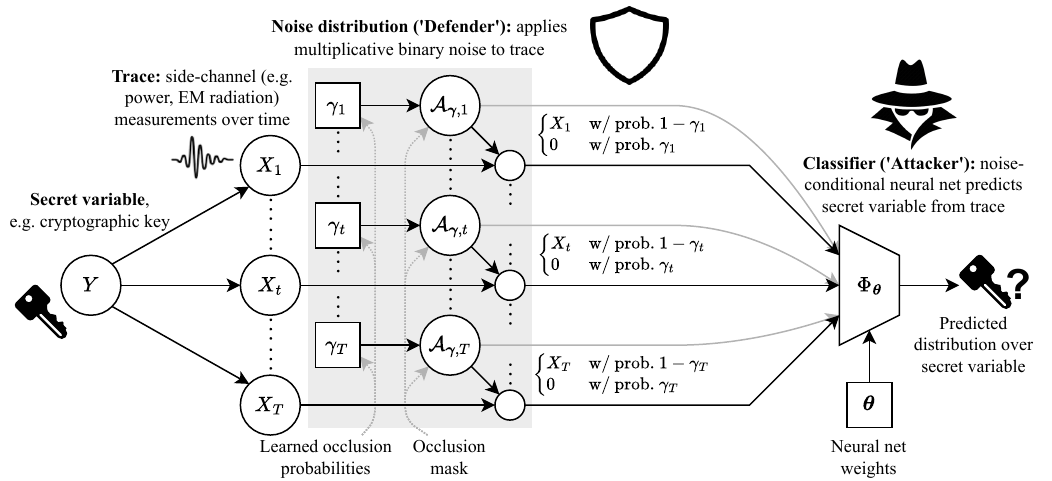}
    \caption{A diagram illustrating our Adversarial Leakage Localization (\all{}) algorithm. A classifier $\Phi_{\vecfnt{\theta}}$ is trained to `attack' a cryptographic implementation by predicting its secret data $Y$ from power/EM radiation traces $\vecfnt{X} \equiv (X_1, \dots, X_T).$ Simultaneously, a noise distribution is trained to `defend against the attack' by occluding the classifier's individual input features $X_t$ with probabilities $\gamma_t,$ subject to a budget constraint which prevents trivially occluding every feature with probability 1. Because of the constraint, increasing $\gamma_t$ necessarily entails decreasing $\gamma_\tau$ for some $\tau \neq t,$ so the noise distribution must preferentially apply noise to leakier features. Thus, after training we may interpret $\gamma_t$ as the `leakiness' of $X_t.$}
    \label{fig:method}
\end{figure}

We will now re-frame this problem in a way that is amenable to standard deep learning techniques. Refer to Fig. \ref{fig:method} for a diagram.

We first re-parameterize it into an unconstrained problem by defining the variable $\vecfnt{\eta} \defeq \softmax(\tilde{\vecfnt{\eta}}),\,\tilde{\vecfnt{\eta}} \in \R^T$ and letting $\vecfnt{\gamma}$ be a function from $\R^T \to [0, 1]^T$ satisfying
\begin{equation}
    c(\gamma_t(\tilde{\vecfnt{\eta}})) = C\eta_t \iff \gamma_t(\tilde{\vecfnt{\eta}}) = \sigmoid(\log C + \log(\softmax(\tilde{\vecfnt{\eta}})_t)).
\end{equation}
We can now optimize with respect to $\tilde{\vecfnt{\eta}}$ instead of $\vecfnt{\gamma},$ letting us drop the constraint because it is satisfied for any $\tilde{\vecfnt{\eta}}.$ Note that it is cheap and numerically-stable to map $\tilde{\vecfnt{\eta}}$ to $\vecfnt{\gamma}$ in PyTorch.

Next, as described in Appendix \ref{appsec:estimating_mi_with_dl}, we can approximate the mutual information term of Eqn. \ref{eqn:main_optimization_problem} with a neural net. Note that $\mi[Y; \vecfnt{X}_{\vecfnt{\mathcal{A}}_{\vecfnt{\gamma}}} \mid \vecfnt{\mathcal{A}}_{\vecfnt{\gamma}}] = \sum_{\vecfnt{\alpha} \in \{0, 1\}^T} p_{\vecfnt{\mathcal{A}}_{\vecfnt{\gamma}}}(\vecfnt{\alpha}) \mi[Y; \vecfnt{X}_{\vecfnt{\alpha}}]$ where each $\mi[Y; \vecfnt{X}_{\vecfnt{\alpha}}] = \expec \log p_{Y \mid \vecfnt{X}_{\vecfnt{\alpha}}}(Y \mid \vecfnt{X}_{\vecfnt{\alpha}}) - \expec \log p_Y(Y).$ The right terms can be dropped because their corresponding terms in the full expression do not depend on $\vecfnt{\gamma}.$ The conditional distributions in the left terms can each be approximated by using supervised deep learning with cross-entropy loss to classify $Y$ from $\vecfnt{X}_{\vecfnt{\alpha}}.$ There are $2^T$ such distributions and it would be infeasible to train this many neural nets independently. Instead, similarly to \cite{lippe2022}, we train a single neural net to estimate all the distributions by occluding its inputs according to $\vecfnt{\mathcal{A}}_{\vecfnt{\gamma}}$ and feeding $\vecfnt{\mathcal{A}}_{\vecfnt{\gamma}}$ as an auxiliary input. Thus, we can approximate Eqn. \ref{eqn:main_optimization_problem} with the optimization problem
\begin{equation}\label{eqn:practical}
    \min_{\tilde{\vecfnt{\eta}} \in \R^T} \max_{\vecfnt{\theta} \in \R^P} \quad \mathcal{L}_{\mathrm{adv}}(\tilde{\vecfnt{\eta}},\,\vecfnt{\theta}) \defeq \expec \log \Phi_{\vecfnt{\theta}}\left( Y \mid \vecfnt{X} \odot \vecfnt{\mathcal{A}}_{\vecfnt{\gamma}(\tilde{\vecfnt{\eta}})}, \vecfnt{\mathcal{A}}_{\vecfnt{\gamma}(\tilde{\vecfnt{\eta}})} \right)
\end{equation}
where $\Phi_{\vecfnt{\theta}}: \setfnt{Y} \times \R^T \times \{0, 1\}^T \to (0, 1)$ is a neural net with weights $\vecfnt{\theta}$ and softmax output activation, and $\Phi_{\vecfnt{\theta}}(y \mid \vecfnt{x} \odot \vecfnt{\alpha}, \vecfnt{\alpha})$ denotes its estimated probability that $Y=y$ given $\vecfnt{X}_{\vecfnt{\alpha}} = \vecfnt{x}_{\vecfnt{\alpha}}.$ This can be approximately solved using alternating SGD-style algorithms, similarly to GANs \citep{goodfellow2014}.

To use SGD-like algorithms we must estimate $\nabla_{\vecfnt{\theta}} \mathcal{L}_{\mathrm{adv}}(\tilde{\vecfnt{\eta}}, \vecfnt{\theta})$ and $\nabla_{\tilde{\vecfnt{\eta}}} \mathcal{L}_{\mathrm{adv}}(\tilde{\vecfnt{\eta}}, \vecfnt{\theta})$ with Monte Carlo integration. The former is routine in the context of DNN training. However, the latter is nontrivial because $\mathcal{L}_{\mathrm{adv}}$ has the form $\expec_{\vecfnt{\alpha} \sim p_{\tilde{\vecfnt{\eta}}}} f(\vecfnt{\alpha})$ where $\vecfnt{\alpha}$ is discrete. There is a large body of work on gradient estimation for functions of this nature, which can broadly be categorized into unbiased REINFORCE~\citep{williams1992}-based estimators with variance reduction strategies, and biased estimators based on relaxing $p_{\tilde{\vecfnt{\eta}}}$ into a continuous distribution for which we can use the reparameterization trick \citep{rezende2014, kingma2014}. In our experiments we use the biased CONCRETE estimator \citep{maddison2017} with fixed temperature $\tau = 1$ because it is cheap and simple, and we find it yields nearly the same performance as more-complicated estimators we tried. We conjecture that we get strong performance with this simple and biased estimator because our performance metrics are sensitive only to the \textit{relative} leakiness assigned to measurements, and the bias does not significantly affect this. Note that $f(\vecfnt{\alpha}) \equiv \expec \log \Phi_{\vecfnt{\theta}}(Y \mid \vecfnt{X} \odot \vecfnt{\alpha},\,\vecfnt{\alpha})$ has been defined for $\vecfnt{\alpha} \in \{0, 1\}^T,$ but after the CONCRETE relaxation we may have any $\vecfnt{\alpha} \in [0, 1]^T.$ Thus, we replace it with the modified function $f(\vecfnt{\alpha}) \equiv \expec \log \Phi_{\vecfnt{\theta}}(Y \mid \vecfnt{X} \odot \vecfnt{\alpha} + \vecfnt{\varepsilon} \odot (\vecfnt{1} - \vecfnt{\alpha}), \vecfnt{\alpha})$ where $\vecfnt{\varepsilon} \sim \mathcal{N}(0, 1)^T.$

Our method is mainly sensitive to 3 hyperparameters: the learning rates of $\vecfnt{\theta}$ and $\tilde{\vecfnt{\eta}},$ and the budget hyperparameter $C.$ Rather than tuning $C$ directly, we find it easier to tune the hyperparameter $\overline{\gamma} \defeq \tfrac{C}{C+T}.$ $\overline{\gamma}$ is equal to the occlusion probability of each measurement when $\tilde{\vecfnt{\eta}}$ is constant, and is less-sensitive to the data dimensionality $T$ than $C.$

\subsection{Differences from prior work}

Whereas \blgradvis{}, \blsaliency{}, \bllrp{}, \blinputxgrad{} and \bloccl{} effectively perturb single input features to the classifier and analyze the change in its outputs, \all{} generally perturbs many inputs simultaneously. This is useful in settings where there are many `redundant' leaking measurements and the impact of perturbing only one of them is small.

Like \all{}, \blmoccl{}, \blmocclso{} and \bloccpoi{} also simultaneously occlude multiple input features. The key differences of our method are: 1) \all{} samples from a distribution over all $2^T$ possible occlusion masks optimized to maximally hurt the performance of the classifier, whereas prior work iterates over a tiny subset of possible occlusion masks chosen heuristically. 2) \all{} leverages the gradient of classifier loss with respect to relaxed occlusion masks, whereas prior work uses only zeroeth-order information from forward passes with `hard' occlusion masks. 3) We simultaneously optimize the mask distribution to maximally hurt the classifier, and the classifier weights to be optimal for the current mask distribution. In contrast, prior work trains the classifier with standard supervised learning techniques, then `interprets' the fixed classifier with occlusion.

\section{Experimental results}\label{sec:results}
\subsection{Synthetic datasets where we know `ground truth' leakiness}

\paragraph{Toy setting where \texorpdfstring{\all{}}{ALL} succeeds and prior work fails}\label{sec:toy_guassian}

\begin{figure}[ht]
    \centering
   \includegraphics[width=0.5\textwidth]{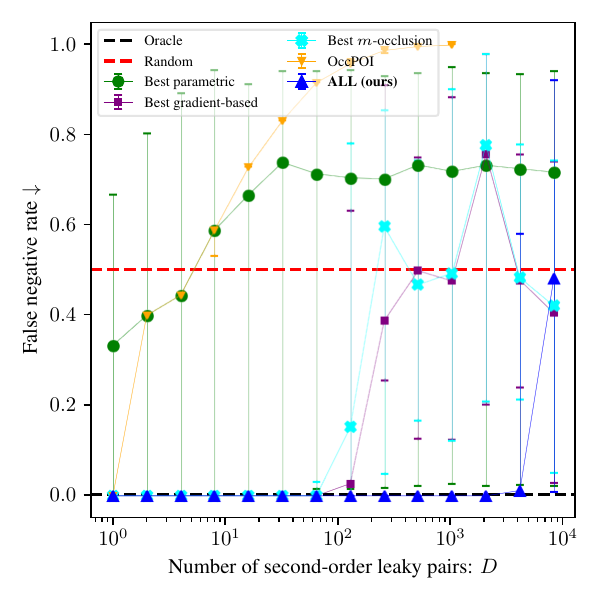}
    \caption{A toy setting described in Sec. \ref{sec:toy_guassian} where \all{} (ours) significantly outperforms baselines. We sample 1 non-leaky feature and $D$ second-order leaky pairs, then plot the false negative rate, defined as the proportion of points incorrectly assigned leakiness less than or equal to that of the non-leaky point, as we increase $D.$ \all{} (ours) succeeds for $D$ up to $64\times$ higher the best prior deep learning-based approach, and the first-order parametric methods completely fail in this setting. Dots denote median and error bars denote min--max over 5 random seeds.}
    \label{fig:toy_gaussian_experiments}
\end{figure}

As we will show, these differences lead to significant performance gains, as well as speedup relative to \blmocclso{} and \bloccpoi{}.

We generate a sequence of binary-label $2D+1$-feature classification datasets consisting of ordered pairs $(\vecfnt{X}, Y)$ sampled independently as follows (see Appendix \ref{appsec:toy_setting} for details): $Y \sim \mathcal{U}\{0, 1\},$ $R \sim \mathcal{U}\{0, 1\},$ $M_i \sim \mathcal{U}\{0, 1\}$ for $i=1, \dots, D, X_{R} \sim \mathcal{N}(R, 1),$ $X_{M_i} \sim \mathcal{N}(M_i, 1),$ $X_{Y \oplus M_i} \sim \mathcal{N}(Y \oplus M_i, 1)$ for $i=1, \dots, D.$ Here we denote by $\oplus$ the exclusive-or operation and $\vecfnt{X} \defeq (X_{R}, X_{M_1}, X_{Y \oplus M_i}, \dots, X_{M_D}, X_{M_D \oplus Y}).$ Intuitively, we can view the features $X_R,$ $X_{M_i},$ and $X_{Y \oplus M_i}$ as noisy observations of $R,$ $M_i,$ and $Y \oplus M_i,$ respectively. Note that $R$ tells us nothing about $Y,$ and while \textit{individually} each variable $M_i,\,Y \oplus M_i$ is independent of $Y$, the \textit{pairs} of variables $\{M_i,\,Y \oplus M_i\}$ allow us to determine $Y$ through the identity $Y = Y \oplus M_i \oplus M_i.$ An ideal leakage localization algorithm should indicate that each feature $X_{M_i},$ $X_{Y \oplus M_i}$ is leaking, while $X_R$ is not.

In Fig. \ref{fig:toy_gaussian_experiments}, we plot the performance of \all{} and prior work as we sweep the value of $D$. While prior deep learning-based methods succeed for small $D$, they fail when $D$ grows large because the individual contribution of each $\{X_{M_i}, X_{Y \oplus M_i}\}$ is `drowned out' in the sense that $\mi[Y; \{X_{M_i}, X_{Y \oplus M_i}\} \mid \{X_{M_j}, X_{Y \oplus M_j}: j \neq i\}]$ vanishes. \all{} succeeds for $D$ up to $64\times$ larger than prior work because as the classifier becomes more-reliant on particular features they are subject to a higher occlusion probability, mitigating this effect. For completeness we also include first-order parametric methods in our comparison; these fail because they are not sensitive to the second-order associations between $Y$ and $\{X_{M_i},\,X_{Y \oplus M_i}\}.$

\paragraph{Simulated AES-128 datasets}

\begin{figure}[ht]
    \centering
    \includegraphics[width=0.6\textwidth]{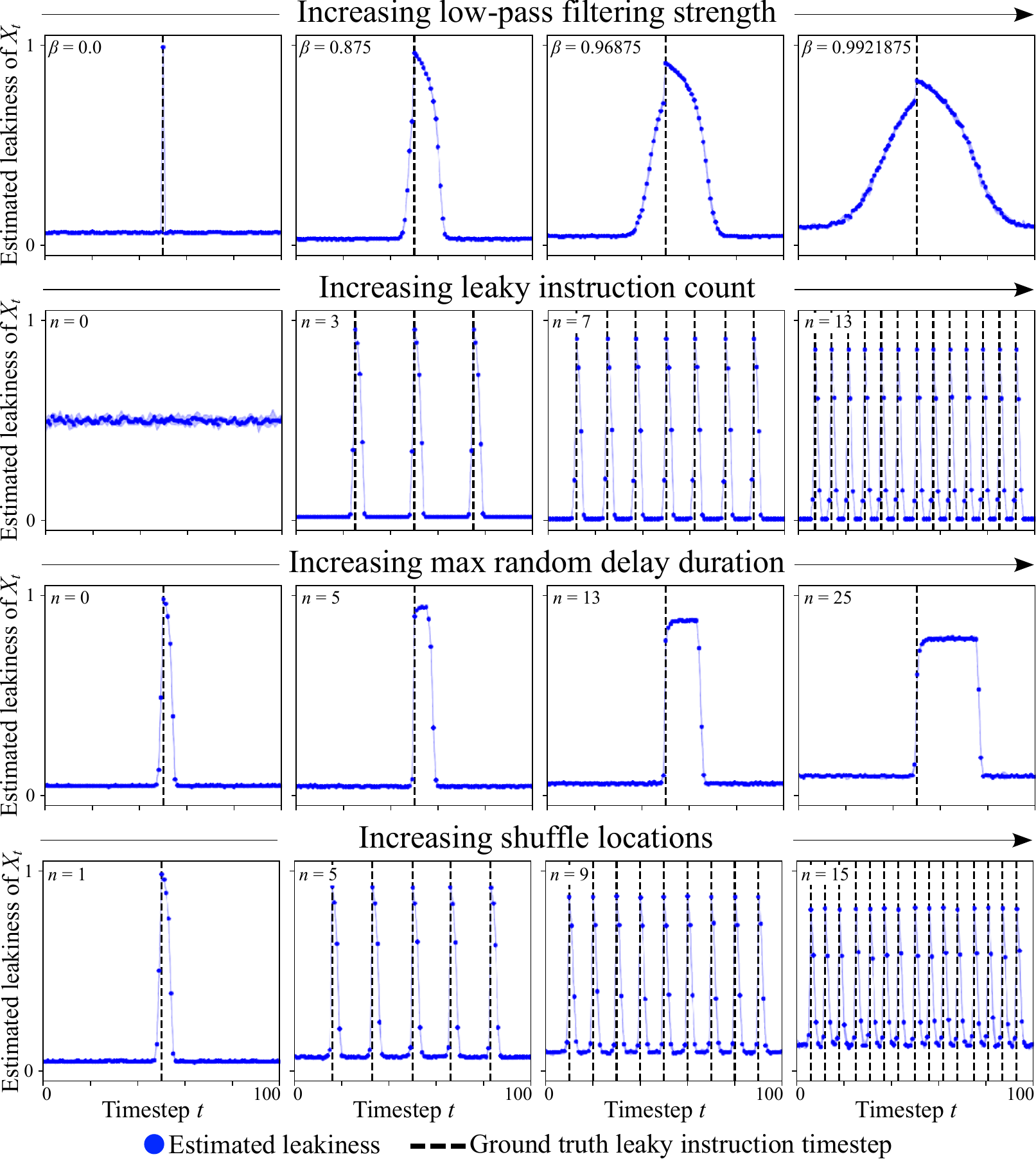}%trim={left, buttom, right, top}
    \caption{Output $\vecfnt{\gamma^}* \equiv \vecfnt{\gamma}(\tilde{\vecfnt{\eta}}^*)$ of our \all{} algorithm when applied to simulated AES-128 power trace datasets based on the Hamming weight model of \cite[ch.~4]{mangard2007}, as described in Appendix \ref{appsec:synthetic_aes}. Leakiness estimated by \all{} is consistent with the ground truth timestep at which leaky instruction(s) are executed across varying low-pass filtration strength, leaky instruction count, random no-op insertion, and random shuffling.}
    \label{fig:synthetic_aes_experiments}
\end{figure}

We next apply \all{} to a variety of simulated AES datasets. These are a useful complement to subsequent experiments on real datasets because we can validate \all{}'s outputs against ground truth knowledge about which timesteps are leaking, as well as gain insight into its behavior when individually varying particular dataset properties. Traces are simulated using the Hamming weight leakage model of \cite[ch.~4]{mangard2007}, which decomposes total power consumption as $X_t = X_{\mathrm{data},\,t} + X_{\mathrm{op},\,t} + X_{\mathrm{resid},\,t}.$ Here $X_{\mathrm{data},\,t}$ is a function of the Hamming weight of the data currently being operated on, $X_{\mathrm{op},\,t}$ is a function of the operation currently being applied to the data, and $X_{\mathrm{resid},\,t}$ models remaining sources of noise as a Gaussian random variable. Further details can be found in Appendix \ref{appsec:synthetic_aes}. Note that while this model applies to a particular device studied by \citet{mangard2007}, the relationship between data and power consumption is device-dependent and often eludes simple characterization.

In Fig. \ref{fig:synthetic_aes_experiments} we simulate several factors of variation which may be expected to occur in realistic settings, and observe the change in behavior of \all{}. \textit{First}, we apply a varying-strength discrete low-pass filter to the simulated traces. As the strength increases, the peak of the estimated leakiness remains centered at the timestep of the leaky instruction while becoming more-diffuse. \textit{Second}, we vary the number of leaky instructions and see that \all{} consistently produces similar-height peaks at every leaky instruction. \textit{Third}, we introduce a random delay before the leaky instruction, which causes \all{}'s peak diffuses over the set of timesteps at which the instruction may occur. \textit{Fourth}, we shuffle the location of the leaky instruction so that it may occur at various points in time. As when varying the number of instructions, \all{} produces similar-height peaks at each point in time at which the instruction sometimes occurs.

\subsection{Real power and EM radiation leakage datasets}

\begin{figure}[ht]
    \centering
    \includegraphics[width=\textwidth]{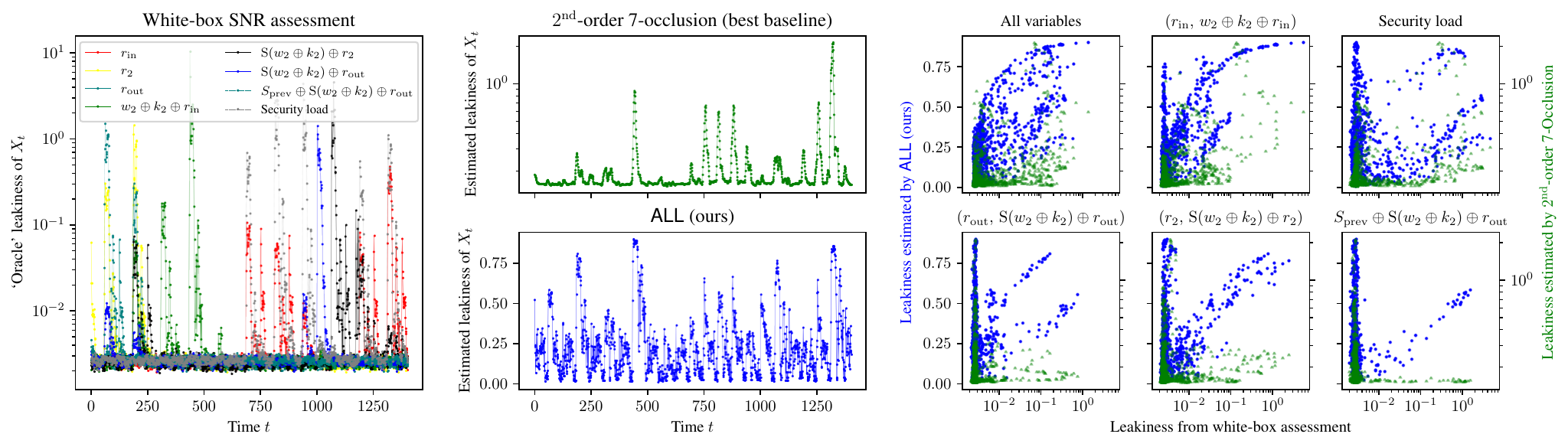}
    \caption{A qualitative comparison on ASCADv1-variable between the estimated leakiness by a white-box SNR-based assessment following \citet{egger2022}, \all{} (our method), and $2^{\mathrm{nd}}$-order 7-Occlusion (the strongest baseline). (\textbf{left}) Superimposed per-timestep leakiness of 8 internal AES variables which contribute to leakage of our targeted `secret variable' $Y \equiv \operatorname{S}(w_2 \oplus k_w).$ We consider $Y$ to leak to the extent that at least one of these internal variables leaks. (\textbf{center}) The per-timestep leakiness of $Y,$ as estimated by \all{} and the baseline. Ideally, these estimates should align with peaks in the white-box assessment. (\textbf{right}) The per-timestep leakiness as estimated by \all{} and the baseline, vs. the mean SNR of subsets of the internal AES variables (indicated in plot titles). In the plot labeled `All variables', \all{} exhibits a stronger positive association with the aggregate white-box assessment than the baseline. For individual variable subsets, \all{} consistently produces a $|/$-shaped structure with a diagonal trend indicating agreement between estimated leakiness and SNR for the variables of interest, and a vertical band corresponding to leakage from other variables. In contrast, the baseline often exhibits an L-shaped structure, with the horizontal band reflecting spuriously low leakiness estimates at timesteps where the internal variables are known to have significant leakage. It appears that the baseline is mainly sensitive to $(r_{\mathrm{in}},\,w_2 \oplus k_2 \oplus r_{\mathrm{in}})$ and possibly the security load (which leaks at similar times as $r_{\mathrm{in}}$), but understates or misses leakage due to the other variables.}
    \label{fig:ascadv1_variable_results}
\end{figure}

We compare our method to prior work on 6 publicly-available datasets of real recorded side-channel emissions and metadata covering diverse settings, as described in Appendix \ref{appsec:datasets}. See Appendix \ref{appsec:real_experiments} for an extended version of this section including full implementation details, additional experiments, and ablation studies.

\paragraph{Experimental setup}

Despite their methodological differences, all considered methods may be viewed as a function which maps a dataset to a sequence of scalars encoding the estimated leakiness of $X_t$ for each $t.$ For all deep learning baselines apart from \all{}, we first train a supervised classifier to predict secret variables $Y$ from power traces $(X_1, \dots, X_T).$ We then use the `importance' of $X_t$ to the classifier's prediction about $Y,$ on average over the profiling set, as the estimated leakiness of $X_t.$ The definition of `importance' is prescribed by the method. For \all{}, we simply run our method as described in Sec. \ref{sec:method}, then directly use the trained occlusion probability $\gamma_t^*$ as the estimated leakiness of $X_t.$

For the supervised classifiers and the classifier denoted $\Phi_\theta$ in \all{}, we use the same ReLU MLP architecture trained with the AdamW optimizer (see Appendix \ref{app:method-implementation-details}). For both supervised classification and \all{} training runs, we use the same minibatch size and training step count, tune the important hyperparameters for each dataset using random search with a 50-run budget, and leave the rest at reasonable defaults (see Appendix \ref{app:hyperparameter-tuning}). Supervised classifier hyperparameters are tuned to minimize correct-key rank (similar to maximizing accuracy). For \all{} we cannot use this approach, so we instead use a criterion based on occluding the inputs to a frozen supervised classifier and observing its change in performance (see Appendix \ref{appsec:model_selection}). Baselines are implemented using Captum \citep{kokhlikyan2020} where possible. OccPOI and ($2^{\mathrm{nd}}$-order) $m$-Occlusion are adapted to the setting of our paper as described in Appendix~\ref{app:method-implementation-details}.

\paragraph{Performance evaluation strategies and results}In Fig.~\ref{fig:ascadv1_variable_results}, in line with previous work \citep{masure2019, wouters2020, schamberger2023, yap2025}, we compute a white-box leakage assessment on ASCADv1-variable and qualitatively compare it to the outputs of \all{} and the strongest baseline. As is standard, we target the variable $Y \equiv \operatorname{S}(w_2 \oplus k_2).$ Because this AES implementation employs Boolean masking, $Y$ does not directly influence the power consumption. Nonetheless, there are 3 pairs of internal variables which \emph{do} directly influence power consumption and can be combined to determine $Y$ (as well as more-complicated identities involving the security load and $S_{\mathrm{prev}} \oplus \operatorname{S}(w_2 \oplus k_2) \oplus r_{\mathrm{out}}$) \citep{egger2022}. Ideally, a leakage localization algorithm should consider each $X_t$ leaky to the extent that at least one such variable with utility for determining $Y$ leaks. In Fig.~\ref{fig:ascadv1_variable_results} \all{} successfully identifies the leakage of all 3 pairs of variables, while the best considered baseline clearly identifies leakage from only one of the pairs.

In the interest of scaling our comparison to many baselines and datasets, we next consider quantitative performance metrics. For real-world datasets quantitative performance evaluation is challenging because we lack `ground truth' knowledge about leakage, and there is no consensus on the best way to do so. We thus employ 4 performance evaluation metrics which are conceptually similar to evaluation strategies in prior work. Because the specific numbers used to encode leakiness by the various baselines are not directly comparable, we use metrics which are sensitive only to the \emph{relative} leakiness of different timesteps -- i.e. under which a vector of estimated leakiness values $(\gamma_1^*, \dots, \gamma_T^*)$ has the same performance as $(f(\gamma_1^*), \dots, f(\gamma_T^*))$ for any strictly-increasing $f: \R \to \R.$

Table~\ref{tab:main_paper_performance} lists performance across four complementary metrics which emphasize different aspects of performance. See Appendix~\ref{appsec:performance_metrics} for details about each metric. The \emph{oracle agreement} metric assesses agreement with a white-box leakiness evaluation, and is given by the Spearman rank correlation coefficient between estimated leakiness and the average SNR of known leaky internal variables (similar to Fig.~\ref{fig:ascadv1_variable_results}). The \emph{template attack minimum traces to disclosure (MTD)}\footnotemark metric assesses the utility of the estimated-leakiest measurements for conducting a side-channel attack -- higher-fidelity leakiness estimates are expected to allow leakier measurements to be fed to the attacker, leading to better attack performance. The \emph{forward DNN occlusion test}\footnotemark[\value{footnote}] measures the extent to which the performance of a deep learning-based side-channel attack is preserved as we occlude all but the estimated-leakiest measurements, and is mainly sensitive to spurious or incorrectly-ordered high estimates. The \emph{reverse DNN occlusion test}\footnotemark[\value{footnote}] measures the extent to which performance is preserved as we occlude all but the estimated-least-leaky measurements, and is mainly sensitive to spurious low estimates. As seen in Table~\ref{tab:main_paper_performance}, \all{} outperforms all baselines on the majority of datasets under every metric except for the forward DNN occlusion test.
\footnotetext{The template attack MTD metric is similar to evaluations done in \citet{masure2019, yap2025}. The DNN occlusion tests were inspired by the ZB-KGE and KRPC algorithms of \citet{hettwer2020}. As described in Appendix~\ref{appsec:performance_metrics}, we alter implementation details to summarize these as scalars and improve reliability on second-order datasets.}

\begin{table}[t]
    \captionsetup[table]{skip=6pt}
    \centering
    \caption{A comparison of the considered deep learning-based leakage localization algorithms on 6 datasets and under 4 quantitative performance metrics. We report mean $\pm$ 1 standard deviation over 5 random seeds. \textbf{Bold numbers} denote the method with the best mean performance (ties broken by lower variance). $\uparrow$ denotes that a higher number is better, and $\downarrow$ denotes that a lower number is better. (\textbf{Oracle agreement $\uparrow$}) The Spearman rank correlation coefficient between the predicted leakiness and an `oracle' leakiness derived from a white-box SNR-based assessment exploiting knowledge of the internal variables contributing to leakage of the target. (\textbf{Template attack MTD $\downarrow$}) The minimum traces to disclosure (MTD) of a template attack when using the predicted leakiness for point of interest (feature) selection. (\textbf{Forward DNN occlusion test $\downarrow$}) The mean single-trace correct-key rank of a DNN-based attacker when occluding all but the $k$ predicted-leakiest timesteps, on average for $k\in\{1, \dots, T\}.$ (\textbf{Reverse DNN occlusion test $\uparrow$}) The mean single-trace correct-key rank of a DNN-based attacker when occluding all but the $k$ predicted-\emph{least-leaky} timesteps, on average for $k\in\{1, \dots, T\}.$}
    \resizebox{\linewidth}{!}{\begin{tabular}{c lcccccc}
\toprule
& & \multicolumn{2}{c}{\textbf{2nd-order datasets}} & \multicolumn{4}{c}{\textbf{1st-order datasets}} \\
& \textbf{Method} & ASCADv1-f & ASCADv1-r & DPAv4 & AES-HD & OTiAiT & OTP \\
\cmidrule{2-2} \cmidrule(lr){3-4} \cmidrule(lr){5-8}
\multirow{10}{*}{\rotatebox{90}{\textbf{Oracle agreement $\uparrow$}}} & \blgradvis{} & $0.48 \pm 0.02$ & $0.27 \pm 0.01$ & $0.198 \pm 0.009$ & $0.07 \pm 0.01$ & $0.55 \pm 0.05$ & $0.57 \pm 0.02$ \\
& \blsaliency{} & $0.47 \pm 0.02$ & $0.26 \pm 0.01$ & $0.198 \pm 0.008$ & $0.07 \pm 0.01$ & $0.67 \pm 0.06$ & $0.58 \pm 0.02$ \\
& \blinputxgrad{} & $0.47 \pm 0.02$ & $0.25 \pm 0.01$ & $0.202 \pm 0.009$ & $0.08 \pm 0.02$ & $0.71 \pm 0.05$ & $0.60 \pm 0.02$ \\
& \bllrp{} & $0.47 \pm 0.02$ & $0.25 \pm 0.01$ & $0.202 \pm 0.009$ & $0.08 \pm 0.02$ & $0.71 \pm 0.05$ & $0.60 \pm 0.02$ \\
& \bloccpoi{} & $0.07 \pm 0.01$ & $0.064 \pm 0.004$ & $0.030 \pm 0.008$ & $0.044 \pm 0.009$ & $0.07 \pm 0.02$ & $0.01 \pm 0.02$ \\
& \bloccl{} & $0.47 \pm 0.02$ & $0.25 \pm 0.01$ & $0.202 \pm 0.009$ & $0.08 \pm 0.01$ & $0.71 \pm 0.05$ & $0.60 \pm 0.02$ \\
& \blmoccls{} & $0.49 \pm 0.02$ & $0.41 \pm 0.01$ & $0.32 \pm 0.01$ & $0.18 \pm 0.05$ & $0.72 \pm 0.04$ & $0.77 \pm 0.01$ \\
& \blocclso{} & $0.51 \pm 0.01$ & $0.27 \pm 0.01$ & $0.206 \pm 0.009$ & $0.08 \pm 0.01$ & $0.74 \pm 0.05$ & $0.60 \pm 0.02$ \\
& \blmocclsos{} & $0.52 \pm 0.01$ & $0.42 \pm 0.01$ & $\mathbf{0.330 \pm 0.009}$ & $0.19 \pm 0.05$ & $0.75 \pm 0.04$ & $0.788 \pm 0.007$ \\
& \all{} (ours) & $\mathbf{0.794 \pm 0.006}$ & $\mathbf{0.60 \pm 0.01}$ & $0.317 \pm 0.002$ & $\mathbf{0.22 \pm 0.03}$ & $\mathbf{0.782 \pm 0.001}$ & $\mathbf{0.848 \pm 0.003}$ \\\midrule
%\multirow{10}{*}{\rotatebox{90}{\textbf{Oracle agreement $\uparrow$}}} & \blgradvis{} & $0.49 \pm 0.02$ & $0.27 \pm 0.02$ & $0.20 \pm 0.01$ & $0.07 \pm 0.01$ & $0.5 \pm 0.2$ & $0.59 \pm 0.02$ \\
%& \blsaliency{} & $0.48 \pm 0.02$ & $0.27 \pm 0.02$ & $0.20 \pm 0.01$ & $0.07 \pm 0.01$ & $0.67 \pm 0.06$ & $0.59 \pm 0.03$ \\
%& \blinputxgrad{} & $0.49 \pm 0.02$ & $0.25 \pm 0.02$ & $0.20 \pm 0.01$ & $0.08 \pm 0.01$ & $0.71 \pm 0.05$ & $0.61 \pm 0.03$ \\
%& \bllrp{} & $0.49 \pm 0.02$ & $0.25 \pm 0.02$ & $0.20 \pm 0.01$ & $0.08 \pm 0.01$ & $0.71 \pm 0.05$ & $0.61 \pm 0.03$ \\
%& \bloccpoi{} & $0.05 \pm 0.02$ & $0.061 \pm 0.006$ & $0.03 \pm 0.02$ & $0.0484 \pm 0.0005$ & $0.093 \pm 0.007$ & $0.02 \pm 0.04$ \\
%& \bloccl{} & $0.49 \pm 0.02$ & $0.25 \pm 0.02$ & $0.20 \pm 0.01$ & $0.08 \pm 0.01$ & $0.71 \pm 0.05$ & $0.61 \pm 0.03$ \\
%& \blmoccls{} & $0.50 \pm 0.02$ & $0.41 \pm 0.02$ & $0.32 \pm 0.01$ & $0.18 \pm 0.05$ & $0.73 \pm 0.05$ & $0.77 \pm 0.02$ \\
%& \blocclso{} & $0.52 \pm 0.02$ & $0.27 \pm 0.02$ & $0.20 \pm 0.01$ & $0.07 \pm 0.01$ & $0.74 \pm 0.05$ & $0.60 \pm 0.03$ \\
%& \blmocclsos{} & $0.53 \pm 0.01$ & $0.43 \pm 0.03$ & $\mathbf{0.326 \pm 0.009}$ & $0.19 \pm 0.04$ & $0.73 \pm 0.03$ & $0.79 \pm 0.01$ \\
%& \all{} (ours) & $\mathbf{0.79 \pm 0.01}$ & $\mathbf{0.643 \pm 0.005}$ & $0.317 \pm 0.002$ & $\mathbf{0.21 \pm 0.04}$ & $\mathbf{0.781 \pm 0.002}$ & $\mathbf{0.849 \pm 0.008}$ \\\midrule
\multirow{10}{*}{\rotatebox{90}{\textbf{Tmpl. attack MTD $\downarrow$}}} & \blgradvis{} & $686 \pm 100$ & $1162 \pm 1000$ & $2.7 \pm 0.1$ & $20014 \pm 6000$ & $1.4 \pm 0.3$ & $1.378 \pm 0.007$ \\
& \blsaliency{} & $726 \pm 100$ & $1412 \pm 2000$ & $2.7 \pm 0.1$ & $19438 \pm 6000$ & $1.14 \pm 0.02$ & $1.379 \pm 0.005$ \\
& \blinputxgrad{} & $675 \pm 100$ & $1194 \pm 2000$ & $2.6 \pm 0.1$ & $19893 \pm 6000$ & $1.14 \pm 0.02$ & $1.378 \pm 0.003$ \\
& \bllrp{} & $675 \pm 100$ & $1194 \pm 2000$ & $2.6 \pm 0.1$ & $19893 \pm 6000$ & $1.14 \pm 0.02$ & $1.378 \pm 0.003$ \\
& \bloccpoi{} & $787 \pm 100$ & $942 \pm 200$ & $71 \pm 30$ & $25000.000$ & $\mathbf{1.08 \pm 0.03}$ & $1.47 \pm 0.05$ \\
& \bloccl{} & $667 \pm 100$ & $1376 \pm 2000$ & $2.65 \pm 0.08$ & $20011 \pm 6000$ & $1.14 \pm 0.02$ & $1.379 \pm 0.003$ \\
& \blmoccls{} & $673 \pm 70$ & $727 \pm 400$ & $9 \pm 1$ & $16283 \pm 10$ & $1.17 \pm 0.02$ & $1.382 \pm 0.007$ \\
& \blocclso{} & $709 \pm 100$ & $1086 \pm 1000$ & $2.65 \pm 0.08$ & $20222 \pm 6000$ & $1.14 \pm 0.02$ & $1.378 \pm 0.003$ \\
& \blmocclsos{} & $642 \pm 60$ & $710 \pm 400$ & $9 \pm 1$ & $\mathbf{16033 \pm 700}$ & $1.16 \pm 0.03$ & $1.381 \pm 0.008$ \\
& \all{} (ours) & $\mathbf{459 \pm 40}$ & $\mathbf{394 \pm 20}$ & $\mathbf{2.22 \pm 0.01}$ & $17582 \pm 5000$ & $1.11 \pm 0.02$ & $\mathbf{1.363 \pm 0.007}$
\\\midrule
\multirow{10}{*}{\rotatebox{90}{\textbf{Fwd. DNN occlusion $\downarrow$}}} & \blgradvis{} & $108.6 \pm 0.5$ & $96.8 \pm 0.3$ & $9.5 \pm 0.6$ & $125.6 \pm 0.3$ & $1.9 \pm 0.2$ & $1.013 \pm 0.002$ \\
& \blsaliency{} & $108.5 \pm 0.4$ & $96.3 \pm 0.4$ & $9.5 \pm 0.7$ & $125.6 \pm 0.3$ & $1.8 \pm 0.1$ & $1.014 \pm 0.001$ \\
& \blinputxgrad{} & $108.5 \pm 0.4$ & $96.8 \pm 0.4$ & $9.4 \pm 0.7$ & $125.6 \pm 0.3$ & $\mathbf{1.7 \pm 0.2}$ & $\mathbf{1.013 \pm 0.001}$ \\
& \bllrp{} & $108.5 \pm 0.4$ & $96.8 \pm 0.4$ & $9.4 \pm 0.7$ & $125.6 \pm 0.3$ & $\mathbf{1.7 \pm 0.2}$ & $\mathbf{1.013 \pm 0.001}$ \\
& \bloccpoi{} & $122.3 \pm 0.8$ & $120.8 \pm 0.2$ & $58 \pm 2$ & $127.4 \pm 0.3$ & $2.6 \pm 0.2$ & $1.09 \pm 0.04$ \\
& \bloccl{} & $108.5 \pm 0.4$ & $96.7 \pm 0.4$ & $9.4 \pm 0.7$ & $125.6 \pm 0.3$ & $\mathbf{1.7 \pm 0.2}$ & $\mathbf{1.013 \pm 0.001}$ \\
& \blmoccls{} & $108.2 \pm 0.5$ & $\mathbf{95.7 \pm 0.6}$ & $9.0 \pm 0.6$ & $\mathbf{125.3 \pm 0.2}$ & $1.8 \pm 0.2$ & $\mathbf{1.013 \pm 0.001}$ \\
& \blocclso{} & $108.4 \pm 0.4$ & $97.0 \pm 0.4$ & $9.4 \pm 0.7$ & $125.5 \pm 0.3$ & $\mathbf{1.7 \pm 0.2}$ & $\mathbf{1.013 \pm 0.001}$ \\
& \blmocclsos{} & $108.2 \pm 0.5$ & $95.9 \pm 0.6$ & $\mathbf{9.0 \pm 0.5}$ & $\mathbf{125.3 \pm 0.2}$ & $\mathbf{1.7 \pm 0.2}$ & $\mathbf{1.013 \pm 0.001}$ \\
& \all{} (ours) & $\mathbf{107.3 \pm 0.4}$ & $104 \pm 1$ & $9.9 \pm 0.7$ & $125.5 \pm 0.3$ & $1.8 \pm 0.1$ & $1.014 \pm 0.001$ \\
\midrule
\multirow{10}{*}{\rotatebox{90}{\textbf{Rev. DNN occlusion $\uparrow$}}} & \blgradvis{} & $125.9 \pm 0.2$ & $127.6 \pm 0.1$ & $122 \pm 1$ & $128.0 \pm 0.3$ & $4.2 \pm 0.4$ & $1.34 \pm 0.07$ \\
& \blsaliency{} & $125.8 \pm 0.2$ & $127.4 \pm 0.2$ & $122 \pm 1$ & $128.0 \pm 0.3$ & $5.1 \pm 0.3$ & $1.33 \pm 0.06$ \\
& \blinputxgrad{} & $125.7 \pm 0.3$ & $127.5 \pm 0.2$ & $121.9 \pm 0.9$ & $128.1 \pm 0.3$ & $5.2 \pm 0.3$ & $1.34 \pm 0.06$ \\
& \bllrp{} & $125.7 \pm 0.3$ & $127.5 \pm 0.2$ & $121.9 \pm 0.9$ & $128.1 \pm 0.3$ & $5.2 \pm 0.3$ & $1.34 \pm 0.06$ \\
& \bloccpoi{} & $122.3 \pm 0.4$ & $124.6 \pm 0.2$ & $43 \pm 1$ & $127.0 \pm 0.3$ & $3.6 \pm 0.3$ & $1.09 \pm 0.04$ \\
& \bloccl{} & $125.8 \pm 0.3$ & $127.4 \pm 0.2$ & $122 \pm 1$ & $128.1 \pm 0.3$ & $5.2 \pm 0.3$ & $1.34 \pm 0.06$ \\
& \blmoccls{} & $126.0 \pm 0.2$ & $127.4 \pm 0.2$ & $121 \pm 1$ & $\mathbf{128.5 \pm 0.2}$ & $5.3 \pm 0.2$ & $1.30 \pm 0.04$ \\
& \blocclso{} & $125.8 \pm 0.3$ & $127.5 \pm 0.2$ & $122.0 \pm 0.9$ & $128.1 \pm 0.3$ & $5.3 \pm 0.2$ & $1.34 \pm 0.06$ \\
& \blmocclsos{} & $126.1 \pm 0.2$ & $127.4 \pm 0.2$ & $121.3 \pm 0.9$ & $\mathbf{128.5 \pm 0.2}$ & $5.3 \pm 0.2$ & $1.30 \pm 0.04$ \\
& \all{} (ours) & $\mathbf{126.4 \pm 0.2}$ & $\mathbf{127.96 \pm 0.06}$ & $\mathbf{125 \pm 1}$ & $128.3 \pm 0.2$ & $\mathbf{5.6 \pm 0.2}$ & $\mathbf{1.39 \pm 0.05}$ \\
\bottomrule
\end{tabular}}
    \label{tab:main_paper_performance}
\end{table}

\section{Conclusion}\label{sec:conclusion}

We have proposed a novel algorithm for localizing side-channel leakage from cryptographic implementations. Unlike prior work, ours is sensitive to arbitrary statistical associations responsible for leakage and operates in a `black box' manner, requiring only a supervised learning-style dataset with labels denoting the secret variable under consideration. In light of the ever-increasing efficacy of deep side-channel attack algorithms and the failure of existing work to detect all leakage they may exploit, our work marks a critical step towards understanding and mitigating the emerging vulnerabilities of cryptographic hardware.

\clearpage
\subsubsection*{Broader Impact Statement}\label{appsec:broader_impacts}
The goal of our work is to enhance the security of cryptographic implementations against side-channel attacks by identifying the points in time at which they reveal sensitive information, thereby facilitating targeted defenses and mitigation strategies. We foresee \all{} as being a useful complement to widely-adopted parametric leakage localization techniques due to its ability to identify leakage in a `black box' manner without being limited by the domain knowledge of its users.

Our work is primarily defensive in nature. We do not introduce strategies that directly improve the performance of profiling side-channel attacks, and we solely consider cryptographic datasets which have been made available for research purposes and are already widely studied and understood. Nonetheless, improving the ability to identify and understand the weaknesses of cryptographic systems could potentially benefit attackers as well as defenders. We believe the utility of our work for defense outweighs this risk.

\subsubsection*{Acknowledgments}

We are thankful to Sakshi Choudhary, Zachary Ellis, Timur Ibrayev, Amogh Joshi, Amitangshu Mukherjee, Deepak Ravikumar, Arjun Roy, and Utkarsh Saxena for helpful discussions and feedback. The authors acknowledge the support from the Purdue Center for Secure Microelectronics Ecosystem -- CSME\#210205. This work was funded in part by CoCoSys JUMP 2.0 Center, supported by DARPA and SRC. 

\bibliography{references}
\bibliographystyle{tmlr}

\appendix\clearpage

\section{Notation and variable names}\label{appsec:notation}

See table \ref{tab:notation} below for a list of the notation we use, and table \ref{tab:variable_names} for a list of the main variables we define.
\begin{table}[h]
    \centering
    \caption{List of the notation used in our paper.}
    \begin{tabular}{ll}
        \toprule
        \textbf{Symbol} & \textbf{Description} \\
        \midrule
        Variable case: $X, x$ & Upper case: random variable, lower-case: actualization \\
        $\R$, $\Z$ & Set of real numbers, set of integers \\
        $\zint{a}{b}$ where $a < b,$ $a, b \in \Z$ & Interval of integers $\{a, \dots, b\}$ \\
        Serif font: $\setfnt{S}$ & Other sets \\
        $\setfnt{A} \subseteq \setfnt{B}$ & $\setfnt{A}$ is a non-strict subset of $\setfnt{B}$ \\
        $\setfnt{A} \setminus \setfnt{B}$ & Complement of $\setfnt{B}$ in $\setfnt{A},$ i.e. $\setfnt{A} \setminus \setfnt{B} \defeq \{x \in \setfnt{A} : x \notin \setfnt{B}\}$ \\
        $\setfnt{S}_+$ & Nonnegative elements of $\setfnt{S} \subseteq \R$ \\
        $\setfnt{S}_{++}$ & Positive elements of $\setfnt{S} \subseteq \R$ \\
        Bold font: $\vecfnt{x}$ & Vector in $\R^D$ for some $D \in \Z_{++}$ \\
        $\vecfnt{a} \odot \vecfnt{b}$ & Elementwise product of $\vecfnt{a}$ and $\vecfnt{b}$ \\
        $f(\vecfnt{x})$ where $f: \R \to \R$ & Elementwise application of $f$ to $\vecfnt{x}$ \\
        $\nabla_{\vecfnt{x}} f(\vecfnt{x}, \dots)$ & Gradient with respect to $\vecfnt{x}$ \\
        $\vecfnt{x}_{\vecfnt{\alpha}}$ where $\vecfnt{\alpha} \in \{0, 1\}^D$ & Sub-vector of $\vecfnt{x}$ according to $\vecfnt{\alpha}$: $(x_d: d=1, \dots, D: \alpha_d = 1)$ \\
        $X \sim p$ & $X$ is a random variable with distribution $p$ \\
        $p_A$ & The distribution of $A$ \\
        $p_{A, B}$ & The joint distribution of $A$ and $B$ \\
        $p_{A \mid B}$ & The conditional distribution of $A$ given $B$ \\
        $\vecfnt{X} \sim p^N$ & $\vecfnt{X}$ is a random vector with elements $X_1, X_2, \dots \overset{\mathrm{i.i.d.}}{\sim} p$ \\
        $\mathcal{N}(\mu; \sigma^2)$ & Normal distribution with mean $\mu$, scale $\sigma$ \\
        $\mathcal{U}(\setfnt{S})$ & Uniform distribution over set $\setfnt{S}$ \\
        $\expec f(A, B, \dots)$ & Expectation w.r.t. all random variables $A, B, \dots$ in expression \\
        $\expec_A f(A, B, \dots)$ & Expectation w.r.t. only to $A$ \\
        $\mi[A; B]$ & Mutual information between $A$ and $B$ \\
        $\mi[A; B \mid C]$ & Conditional mutual information between $A$ and $B$ given $C$ \\
        $A \cindep B$, $A \cdep B$ & $A$ is marginally independent, dependent on $B$ \\
        $A \cindep B \mid C$, $A \cdep B \mid C$ & $A$ is conditionally independent, dependent on $B$ given $C$ \\
        \bottomrule
    \end{tabular}
    \label{tab:notation}
\end{table}
\begin{table}[h]
    \centering
    \caption{List of the main variables defined in our paper.}
    \begin{tabular}{ll}
        \toprule
        \textbf{Variable} & \textbf{Description} \\
        \midrule
        $T \in \Z_{++}$ & Dimensionality of power trace \\
        $\vecfnt{X} \sim p_{\vecfnt{X}}$ & Power trace (random variable) \\
        $\vecfnt{x} \in \R^T$ & Power trace (actualization) \\
        $X_t$, $x_t$ & Power measurement at time $t$, i.e. $t$-th element of power trace \\
        $\setfnt{Y} \subseteq \Z$ & Set of values the targeted variable may take on \\
        $Y \sim p_Y$ & Targeted variable (random variable) \\
        $y \in \setfnt{Y}$ & Targeted variable (actualization) \\
        $\setfnt{D} \subseteq \R^T \times \setfnt{Y}$ & Dataset of power traces/targeted variable pairs, sampled i.i.d. from $p_{\vecfnt{X}, Y}$ \\
        $\vecfnt{\gamma} \in [0, 1]^T$ & Occlusion probabilities \\
        $\gamma_t$ & Occlusion probability at timestep $t$ \\
        $\vecfnt{\gamma}^* \in [0, 1]^T$ & The optimal value of $\vecfnt{\gamma}$ after solving Eqn. \ref{eqn:main_optimization_problem} \\
        $\tilde{\vecfnt{\eta}} \in \R^T$ & Unconstrained logits which parameterize $\vecfnt{\gamma}$ \\
        $\vecfnt{\mathcal{A}}_{\vecfnt{\gamma}} \sim p_{\vecfnt{\mathcal{A}}_{\vecfnt{\gamma}}}$ & Multiplicative binary noise vector parameterized by occlusion probabilities $\vecfnt{\gamma}$ \\
        $\vecfnt{\alpha} \in \{0, 1\}^T$ & Actualization of $\vecfnt{\mathcal{A}}_{\vecfnt{\gamma}}$ \\
        $c: [0, 1) \to \R_+$ & Cost function for occlusion probability elements $\gamma_t$ \\
        $C \in \R_{++}$ & Budget for occlusion probabilities: they must satisfy $C = \sum_{t=1}^{T} c(\gamma_t)$ \\
        $\overline{\gamma} \in (0, 1)$ & Reparameterized version of $C$ which is more stable w/ data dimensionality \\
        $\Phi_\theta: \R^T \times [0, 1]^T \to \R$ & Noise-conditional neural net w/ weights $\vecfnt{\theta};$ returns softmax logits for $Y$ \\
        \bottomrule
    \end{tabular}
    \label{tab:variable_names}
\end{table}

\section{Extended background}\label{appsec:background}

Here we provide a high-level overview of the AES algorithm and power side-channel attacks aimed at a machine learning audience. Since our algorithm views the cryptographic algorithm and hardware as a black box to be characterized with data, a deep understanding is not necessary to understand and appreciate our work. Thus, we omit many details and aim to impart an intuitive understanding of these topics. Interested readers may refer to \cite{daemen2001} for a detailed introduction to the AES algorithm, to \cite{mangard2007} for a detailed introduction to power side-channel attacks, and to \cite{picek2023} for a survey of supervised deep learning-based power side-channel attacks on AES implementations. Additionally, note that while for clarity of exposition we focus here on power side-channel attacks on AES implementations, our algorithm requires only a supervised learning-style dataset of side-channel emission traces and metadata which enables computation of the target variable, so is applicable in a more-general setting. We have demonstrated that it works with various target variables on AES, ECC and RSA implementations with both power and EM radiation measurements, and suspect it is relevant in far more contexts.

\subsection{Cryptographic algorithms}

Data is often transmitted over insecure channels which leave it accessible not only to intended recipients, but also to unknown and untrusted parties. For example, when a signal is wirelessly transmitted from one antenna to another, an eavesdropper could set up a third antenna between the two and intercept the signal. Alternately, data stored on a hard drive by one user of a computer may be accessed by a different user. Cryptographic algorithms aim to preserve the privacy of data under such circumstances by transforming it so that it is meaningful only in combination with additional data which is known to its intended recipients but not to the untrusted parties.

\begin{figure}
    \centering
    \includegraphics[width=0.8\linewidth]{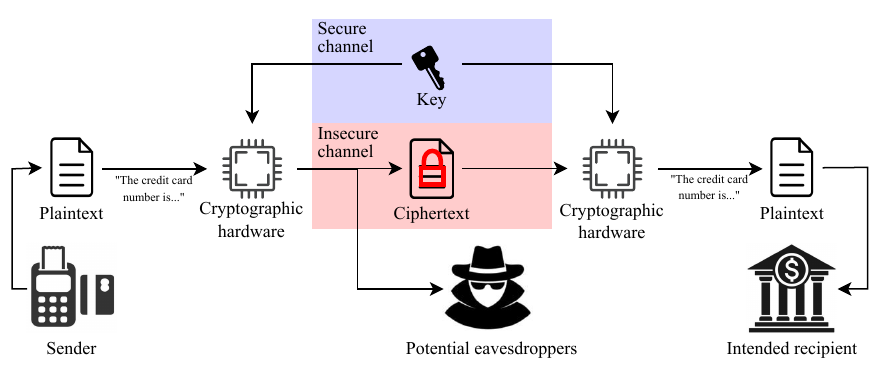}
    \caption{Diagram illustrating the main components of symmetric-key cryptographic algorithms, which enable secure transmission of data over insecure channels where it may be intercepted by eavesdroppers. The data is first partitioned and encoded as a sequence of plaintexts. Each plaintext is transformed into a ciphertext by an invertible function indexed by a cryptographic key. The key is transmitted over a secure channel to intended recipients of the data, allowing them to invert the function and recover the original plaintext. The set of functions is designed so that absent this key, the ciphertext gives no information about the plaintext. Thus, the data remains secure even if eavesdroppers have access to the ciphertext.}
    \label{fig:encryption-diagram}
\end{figure}

We focus here on the ubiquitous advanced encryption standard (AES), which is a symmetric-key cryptographic algorithm. See Fig. \ref{fig:encryption-diagram} for a diagram illustrating the important components of such algorithms. The unencrypted data to be transmitted is encoded and partitioned into a sequence of fixed-length bitstrings called \textit{plaintexts}. The cryptographic algorithm encrypts each plaintext into a \textit{ciphertext} by applying an invertible function from a set of functions indexed by an integer called the \textit{cryptographic key}. This set of functions is designed so that of one were to sample a key and plaintext uniformly at random from the sets of all possible keys and plaintexts, then the plaintext and ciphertext would be marginally independent. Thus, such an algorithm may be used to securely transmit data by ensuring that the sender and recipient of the data know a shared key,\footnote{The key is typically shared using an asymmetric-key cryptographic algorithm such as RSA or ECC. Asymmetric-key cryptography is slow and resource-intensive, so when a sufficiently-large amount of data must be transmitted, it is more-efficient to share the key with an asymmetric-key algorithm and then transmit data using a symmetric-key algorithm than to simply transmit the data with an asymmetric-key algorithm.} and that the key is kept secret from all potential eavesdroppers on the data.

\subsection{Side-channel attacks}

Many symmetric-key cryptographic algorithms are believed to be secure in the sense that it is not feasible to determine their cryptographic key by encrypting known plaintexts and observing the resulting ciphertexts. Any such algorithm with a finite number of possible keys is vulnerable to `brute-force' attacks based on arbitrarily guessing and checking keys until success, but doing so requires checking half of all possible keys in the average case, which is unrealistic for algorithms such as AES which has either $2^{128},$ $2^{192},$ or $2^{256}$ possible keys. To our knowledge the best known such attack against AES reduces the required number of guesses by less than a factor of $8$ compared to a naive brute force attack \citep{mouha2021, tao2015}.

However, while algorithms may be secure when considering only their intended inputs and outputs, \textit{hardware executing these algorithms} will inevitably emit measurable physical signals which are statistically associated with their intermediate variables and operations. Examples of such signals include a device's power consumption over time \citep{kocher1999}, the amount of time it takes to execute a program or instruction \citep{kocher1996, lipp2018, kocher2018}, electromagnetic radiation it emits \citep{quisquater2001, genkin2016}, and sound due to vibrations of its electronic components \citep{genkin2014}. This phenomenon is called \textit{side-channel leakage}, and can be exploited to determine sensitive data such as a cryptographic key through \textit{side-channel attacks}.

As a simple example of side-channel leakage, consider the following Python function which checks whether a password is correct:
\begin{lstlisting}[language=Python]
def is_correct(provided_password: str, correct_password: str) -> bool:
    if len(provided_password) != len(correct_password):
        return False
    for i in range(len(provided_password)):
        if provided_password[i] != correct_password[i]:
            return False
    return True
\end{lstlisting}
Suppose the password consists of $n$ characters, each with $c$ possible values. Consider an attacker seeking to determine the correct password by feeding various guessed passwords until the function returns \verb|True|. Naively, the attacker could simply guess and check all possible $m$-length passwords for $m=1, \dots, n.$ This would require $O(c^n)$ calls to the function, which would be extremely costly for realistically-large $c$ and $n$. However, an attacker with knowledge of the function's implementation could dramatically reduce this cost by observing that the function's \textit{execution time} depends on \verb|correct_password|. Because the function exits immediately if \verb|len(provided_password) != len(correct_password)|, the attacker can determine the length of \verb|correct_password| in $O(n)$ time by feeding increasing-length guesses to \verb|is_correct| until its execution time increases. Next, because \verb|is_correct| exits the first time it detects an incorrect character, the attacker can sequentially determine each of the characters of \verb|correct_password| by checking all $c$ possible values of each character and noting that the correct value leads to an increase in execution time. Thus, although \verb|is_correct| secure against attackers which use only its intended inputs and outputs, it provides \textit{essentially no security} against attackers which measure its execution time.

\begin{figure}
    \centering
    \includegraphics[width=0.8\linewidth]{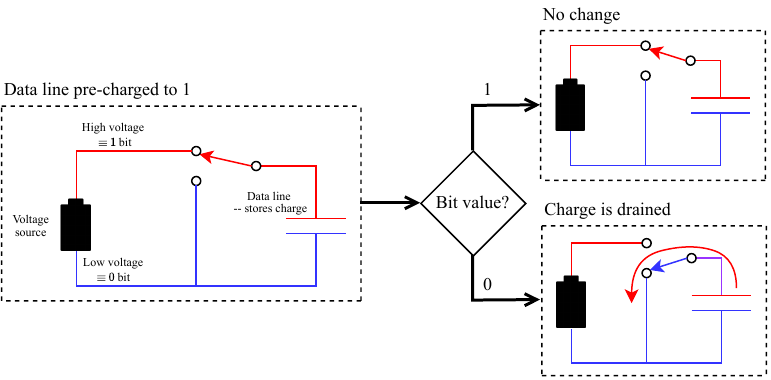}
    \caption{Diagram illustrating one reason there is power side-channel leakage in the device characterized by \cite[ch. 4]{mangard2007}. Data is transmitted over a bus consisting of multiple wires, with one wire representing each bit. Each wire represents a $0$ bit as some prescribed `low' voltage and a $1$ bit as a `high' voltage. Energy is consumed when the voltage of a wire changes from low to high because positive and negative charges, which are attracted to one-another, must be separated to create a high concentration of positive charge on the wire. When `writing' data to the bus, this particular device first `pre-charges' all wires to $1$, then drains charge from the wires which should represent $0$. Thus, because the $0$'s must be changed to $1$'s before the next write, energy is consumed in proportion to the number of $0$'s, thereby creating a statistical association between the device's power consumption and the data it operates on.}
    \label{fig:power_leakage_example}
\end{figure}

In this work we focus on side-channel leakage due to the power consumption over time of a device (as well as EM radiation, which is closely related due to being dominated by the time derivative of power consumption). A device's power consumption is inevitably statistically-associated with the operations it executes and the data it operates on, because these dictate which components are active and the order and manner in which they operate. There are many types of components with different functionality, and components with the same intended functionality are not identical due to imperfect manufacturing processes. These differences impact power consumption. While in general the association between power consumption and data is multifactorial and difficult to describe, in Fig. \ref{fig:power_leakage_example} we illustrate a simple relationship which accounts for a significant portion of the leakage in a device characterized by \cite{mangard2007}. 

\subsection{Power side-channel attacks on AES implementations}

\textit{Side-channel attacks} are techniques which exploit side-channel leakage to learn sensitive information such as cryptographic keys. There are many categories of attacks, but in this work we focus on a category called \textit{profiled} side-channel attacks on symmetric-key cryptographic algorithms. These attacks assume that the `attacker' has access to a clone of the actual cryptographic device to be attacked, and the ability to encrypt arbitrary plaintexts with arbitrary cryptographic keys, observe the resulting ciphertexts, and measure the side-channel leakage during encryption. In practice, these assumptions almost certainly overestimate the capabilities of attackers -- for example, while in some cases an attacker could plausibly identify the hardware and source code of a cryptographic implementation, purchase copies of this hardware, program them with the source code, and characterize these devices, the nature of the side-channel leakage of these purchased copies would differ from those of the actual device due to PVT (pressure, voltage, temperature) variations (e.g. due to imperfect manufacturing processes, environment, and measurement setup). It has been demonstrated that profiled side-channel attacks can be effective despite this, especially when numerous copies of the target hardware are used for profiling \citep{das2019, danial2021}. Regardless, this type of attack provides an upper bound on the vulnerability of a device to side-channel attacks, which is a useful metric for hardware designers.

While there are diverse types of profiled side-channel attacks, at a high level the following steps encompass the important elements of these attacks:
\begin{enumerate}
    \item Select some `sensitive' intermediate variable of the cryptographic algorithm which reveals the cryptographic key (or part of it).
    \item Compile a dataset of (side-channel leakage, intermediate variable) pairs by repeatedly randomly selecting a key and plaintext, encrypting the plaintext using the key and recording the resulting ciphertext and side-channel leakage during encryption, and computing the intermediate variable based on knowledge of the cryptographic algorithm.
    \item Use supervised learning to train a parametric function approximator to predict intermediate variables from recordings of side-channel leakage during encryption.
    \item Measure side-channel leakage during encryptions by the actual target device. Use the trained predictor to predict sensitive variables from side-channel leakage. Potentially, these predictions can be combined to get a better estimate of the key.
\end{enumerate}
In the case of power side-channel attacks on AES, it is generally infeasible to directly target the cryptographic key because care is taken by hardware designers to prevent it from directly influencing power consumption. Instead, it is common to target an intermediate variable which the algorithm directly operates on. A common target is the \verb|SubBytes| output, which is computed as
\begin{equation}
    y \defeq \Sbox(k \oplus w)
\end{equation}
where $k \in \{0, 1\}^{n_{\text{bits}}}$ is the key, $w \in \{0, 1\}^{n_{\text{bits}}}$ is the plaintext, $n_{\text{bits}} \in \Z_{++}$ is the number of bits of the key and plaintext, $\oplus$ is the bitwise exclusive-or operation, and $\Sbox: \{0, 1\}^{n_{\text{bits}}} \to \{0, 1\}^{n_{\text{bits}}}$ is an invertible function which is widely known and the same for all AES implementations. Note that if the plaintext is known, the key can be computed as
\begin{equation}
    k = \Sbox^{-1}(y) \oplus w.
\end{equation}
Additionally, it is common to independently target subsets of the bits of the cryptographic key (e.g. the individual bytes). This is reasonable because the common AES target variables are largely leaked by instructions which operate on individual bytes.

\subsubsection{Template attack: example of a classical profiled side-channel attack}\label{app:gaussian-template-attack}

In order to underscore the advantage of deep learning over previous side-channel attack algorithms, we will here describe the template attack algorithm of \cite{chari2003}, variations of which are the state-of-the-art non-deep learning based attacks. The attack is based on modeling the joint distribution of power consumption and intermediate variable as a Gaussian mixture model, as described in algorithm \ref{alg:template_attack}.
\begin{algorithm}
    \DontPrintSemicolon
    \SetKwInput{KwInput}{Input}
    \SetKwInput{KwOutput}{Output}
    \SetKwFunction{ktoi}{get\_y}
    \SetKwProg{func}{Function}{}{}
    \KwInput{Profiling (training) dataset $\setfnt{D} \defeq \{(\vecfnt{x}^{(n)}, y^{(n)}: n \in \zint{1}{N}\} \subseteq \R^T \times \{0, 1\}^{n_{\text{bits}}},$ attack (testing) dataset $\setfnt{D}_{\text{attack}} \defeq \{(\vecfnt{x}^{(n)}_{\text{a}}, w^{(n)}_{\text{a}}): n \in \zint{1}{N_{\text{a}}}\} \subseteq \R^T \times \{0, 1\}^{n_{\text{bits}}},$ `points of interest' $\vecfnt{T}_{\text{poi}} \defeq \{t_m: m=1, \dots, \tilde{T}\} \subseteq \zint{1}{T}$}
    \KwOutput{Predicted key $k^*$}
    \BlankLine
    \func{\ktoi($k$, $w$)}{
        \Return $\Sbox(k \oplus w)$\tcp*{calculate intermediate variable for given key}
    }
    \For{$n \in \zint{1}{N}$}{
        $\tilde{\vecfnt{x}}^{(n)} \gets \left(x^{(n)}_{t_m}: m=1, \dots, \tilde{T}\right)$\tcp*{prune power traces to `points of interest'}
    }
    \For{\textnormal{$y \in \{0, 1\}^{n_{\text{bits}}}$}}{
    \tcp{fit a multivariate Gaussian mixture model to the training dataset}
        $\setfnt{D}_y \gets \left\{\tilde{\vecfnt{x}}^{(n)}: n \in \zint{1}{N}, \, y^{(n)} = y\right\}$\;
        $N_y \gets \left\lvert\setfnt{D}_y\right\rvert$\;
        $\vecfnt{\mu}_y \gets \frac{1}{N_y}\sum_{\tilde{\vecfnt{x}} \in \setfnt{D}_y} \tilde{\vecfnt{x}}$\;
        $\vecfnt{\Sigma}_y \gets \frac{1}{N_y-1} \sum_{\tilde{\vecfnt{x}} \in \setfnt{D}_y} (\tilde{\vecfnt{x}} - \vecfnt{\mu}_y)(\tilde{\vecfnt{x}} - \vecfnt{\mu}_y)^\top$\;
    }
    \For{$n \in \zint{1}{N_{\text{a}}}$}{
        $\tilde{\vecfnt{x}}^{(n)}_{\text{a}} \gets \left(x^{(n)}_{\text{a}, t_m}: m=1, \dots, \tilde{T}\right)$\tcp*{prune power traces of attack dataset}
    }
    \tcp{predict key value which maximizes log-likelihood of attack dataset}
    $k^* \gets \argmax_{k \in \{0, 1\}^{n_{\text{bits}}}} \sum_{n=1}^{N_{\text{a}}} \left[ \log \mathcal{N}\left(\tilde{\vecfnt{x}}^{(n)}; \vecfnt{\mu}_{\ktoi(k, w_{\text{a}}^{(n)})}, \vecfnt{\Sigma}_{\ktoi(k, w_{\text{a}}^{(n)})}\right) + \log N_{\ktoi(k, w_{\text{a}}^{(n)})} \right]$\;
    \Return $k^*$\;
\caption{The Gaussian template attack algorithm of \cite{chari2003}}
\label{alg:template_attack}
\end{algorithm}

Note that this algorithm assumes that the joint distribution is well-described by a Gaussian mixture model, which may not hold in practice. Additionally, due to the near-cubic runtime of the matrix inversion of each $\vecfnt{\Sigma}_y$ required to compute the Gaussian density functions, this algorithm requires pruning power traces down to a small number of `high-leakage' timesteps. Follow-up work \citep{rechberger2005} proposed performing principle component analysis on the traces and modeling the coefficients of the top principle components rather than individual timesteps. Nonetheless, these constraints mean that the efficacy of this attack is contingent on simplifying assumptions and judgement of which points are `leaky' using simple statistical techniques and implementation knowledge, limiting its usefulness as a way for hardware designers to evaluate the amount of side-channel leakage from their device.

\subsubsection{Practical profiled deep learning side-channel attacks on AES implementations}\label{app:dl_sca}

%\begin{figure}
%    \centering
%    \includegraphics{figures/multitask_learning.pdf}
%    \caption{Illustration of an architecture which independently targets multiple bytes of a sensitive intermediate variable. One can amortize the cost of targeting multiple variables by using a shared backbone with output which is sent to a different head for each byte. Each head outputs the estimated distribution of the byte. We assume that the bytes are conditionally independent given the power trace so that the probability of a full sensitive variable value is given by the product of the probabilities of its bytes.}
%    \label{fig:multitask-learning}
%\end{figure}

Here we will give a common and concrete setting and method for performing profiled power side-channel attacks on AES implementations, which is used for all of our experiments.

Consider an AES-128 implementation, which has a 128-bit cryptographic key and plaintext. Typically, attackers target each of the 16 bytes of the key independently rather than attacking the full key at once. This practice tacitly assumes that the bytes of the sensitive variable are statistically-independent given the power trace, which is reasonable because many AES operations (including those which are commonly targeted) are performed independently on the individual bytes. Thus, it is a convenient way to simplify the attack without significantly impacting performance.

Additionally, it is difficult and uncommon to try to directly map power traces to associated cryptographic keys, because great care is taken by hardware designers to ensure that the key does not directly impact power consumption. Instead, attackers generally target `sensitive' intermediate variables which unavoidably directly impact power consumption and can be combined with the plaintext and ciphertext to learn the key. We consider one such intermediate variable which is referred to as the first \verb|SubBytes| output, and is equal to
\begin{equation}
    y \defeq \Sbox(k \oplus w),
\end{equation}
where $k \in \{0, 1\}^8$ is one byte of the cryptographic key, $w \in \{0, 1\}^8$ is the corresponding byte of the plaintext, $\oplus$ denotes the bitwise exclusive-or operation, and $\Sbox: \{0, 1\}^8 \to \{0, 1\}^8$ is an invertible function which is publicly-available and the same for all AES implementations. Note that if $w$ is known, as is assumed in the profiled side-channel attack setting, then $k$ can be recovered as
\begin{equation}
    k = w \oplus \Sbox^{-1}(y).
\end{equation}

In the context of profiled power side-channel analysis, one assumes to have a `profiling' dataset (i.e. a training dataset) and an `attack' dataset (i.e. a test dataset). Suppose we target $n_{\text{bytes}}$ bytes of the sensitive variable. In our setting, the profiling dataset consists of ordered pairs of power traces and their associated sensitive intermediate variables:
\begin{equation}
    \setfnt{D} \defeq \left\{(\vecfnt{x}^{(n)}, y^{(n)}): n \in \zint{1}{N}\right\} \subseteq \R^T \times \{0, 1\}^{n_{\text{bytes}} \times 8}
\end{equation}
and the attack dataset consists of ordered pairs of power traces and their associated plaintexts:
\begin{equation}\label{eqn:attack-dataset}
    \setfnt{D}_{\text{a}} \defeq \left\{(\vecfnt{x}^{(n)}_{\text{a}}, w^{(n)}_{\text{a}}): n \in \zint{1}{N_{\text{a}}}\right\} \subseteq \R^T \times \{0, 1\}^{n_{\text{bytes}} \times 8}.
\end{equation}

Many works prove the concept of their approaches by targeting only a single byte of the sensitive variable. When multiple bytes are targeted, it is common to either train a separate neural network for each byte of the sensitive variable, or to amortize the cost of targeting these bytes by training a single neural network with a shared backbone and a separate head for each byte. In this work we exclusively target single bytes, though it would be straightforward to extend our approach to the multitask learning setting.

%we take the latter approach when targeting multiple bytes, though most of our experiments target only a single byte. Targeting a single byte reduces the computational cost of experiments because generally $T$ must increase at least linearly with the number of bytes to capture all of the intervals of time during which these bytes were computed.

Consider a neural network architecture $\Phi: \setfnt{Y} \times \R^T \times \R^P \to \R_+: (y, \vecfnt{x}, \vecfnt{\theta}) \mapsto \Phi(y \mid \vecfnt{x}; \vecfnt{\theta}),$ where each $\Phi(\cdot \mid \vecfnt{x}; \vecfnt{\theta})$ is a probability mass function over $\setfnt{Y}.$ In the case of a multi-headed network with each head independently predicting a single byte, we compute this probability mass of $y \in \setfnt{Y}$ as the product of the mass assigned to each of its bytes. We train the network by approximately solving the optimization problem
\begin{equation}\label{eqn:dl_sca}
    \max_{\vecfnt{\theta} \in \R^P} \quad \mathcal{L}(\vecfnt{\theta}) \defeq \frac{1}{N} \sum_{n=1}^{N} \log \Phi(y^{(n)} \mid \vecfnt{x}^{(n)}; \vecfnt{\theta}).
\end{equation}
Given $\hat{\vecfnt{\theta}} \in \argmax_{\vecfnt{\theta} \in \R^P} \mathcal{L}(\vecfnt{\theta}),$ we then identify the key which maximizes our estimated likelihood of our attack dataset and key as follows:
\begin{equation}\label{eqn:pred-accu}
    \hat{k} \in \argmax_{k \in \{0, 1\}^{n_{\text{bytes}} \times 8}} \quad \sum_{n=1}^{N_{\text{a}}} \log \Phi\left( \left(\Sbox(k_i \oplus w^{(n)}_{\text{a}, i}): i=1, \dots, n_{\text{bytes}}\right) \mid \vecfnt{x}_{\text{a}}^{(n)}; \hat{\vecfnt{\theta}} \right)
\end{equation}
where we denote by $k_i$ and $w^{(n)}_i$ the individual bytes of $k$ and $w^{(n)}.$

\section{Extended related work}\label{appsec:related_work}

Here we consider existing work which has been applied to leakage localization in the context of power or EM radiation side-channel analysis. In line with the problem framing given in Sec. \ref{sec:background}, we view these methods as functions which map joint emission-target variable distributions $p_{\vecfnt{X}, Y}$ to vectors in $\R^T$ which assign to each emission measurement variable $X_t$ a scalar `leakiness' measurement. Prior approaches to leakage localization can largely be categorized as either 1) parametric statistics-based methods which check for pairwise associations between $X_t$ and $Y$, or 2) neural net attribution methods which use standard supervised deep learning techniques to train a model $\hat{p}_{Y \mid \vecfnt{X}} \approx p_{Y \mid \vecfnt{X}},$ then use `attribution' techniques to estimate the average `importance' of each feature $X_t$ to the predictions made by the model.

\subsection{First-order parameteric statistics-based methods}

In the side-channel attack literature it is common to use parametric first-order statistical methods to localize leakage. In this work we consider the signal to noise ratio (\blsnr{}) \citep{mangard2007}, the sum of squared differences (\blsosd{}) \citep{chari2003}, and correlation power analysis with a Hamming weight leakage model (\blcpa{}) \citep{brier2004} due to their popularity and efficacy for `point of interest' selection \citep{fan2014}. Below we summarize and discuss these methods.

The \blsnr{} is a standard tool for leakage localization and is defined as
\begin{equation}
    \snr(p_{\vecfnt{X}, Y}) \defeq \frac{\var_{Y \sim p_Y} \expec_{\vecfnt{X} \sim p_{\vecfnt{X} \mid Y}}[\vecfnt{X}]}{\expec_{Y \sim p_Y} \var_{\vecfnt{X} \sim p_{\vecfnt{X} \mid Y}}(\vecfnt{X})}.
\end{equation}
The \blsosd{} method was introduced as a point of interest (feature) selection technique for the Gaussian template attack, a parametric side-channel attack based on modeling emission measurements as a multivariate Gaussian mixture model with a component corresponding to each possible value of the target variable, then using Bayes' rule to estimate the conditional distribution of the target variable given the emission measurements. The \blsosd{} is defined as
\begin{equation}
    \operatorname{sosd}(p_{\vecfnt{X}, Y}) \defeq \sum_{y \in \setfnt{Y}} \sum_{y' \in \setfnt{Y}} \left(\expec_{\vecfnt{X} \sim p_{\vecfnt{X} \mid Y}(\cdot \mid y)} [\vecfnt{X}] - \expec_{\vecfnt{X} \sim p_{\vecfnt{X} \mid Y}(\cdot \mid y')}[\vecfnt{X}]\right)^2.
\end{equation}
\blcpa{} is based on the assumption that power consumption is a noisy linear function of the Hamming weight of the target variable, which is a useful model for certain devices. It is defined as the elementwise Pearson correlation between emission measurements and the target variable's Hamming weight:
\begin{equation}
    \operatorname{cpa}(p_{\vecfnt{X}, Y}) \defeq \frac{\expec[(\vecfnt{X} - \expec[\vecfnt{X}])(\hammingweight(Y) - \expec[\hammingweight(Y)])]}{\sqrt{\var(\vecfnt{X})\var(\hammingweight(Y))}}
\end{equation}
where $\hammingweight: \zint{0}{2^d-1} \to \zint{0}{d}$ maps integers to the sum of their bits when writing them as an unsigned integer.

While techniques such as these are invaluable due to their simplicity, interpretability and low cost, they have major shortcomings. Notably, they consider they are sensitive only to pairwise associations between single emission measurements and the target variable and will consider the measurement $X_t$ to be non-leaky when it is nonleaky in isolation but gives exploitable information \textit{when combined} with $X_\tau$ for some $\tau \neq t,$ i.e. when $\mi[Y; X_t] = 0$ but $\exists \tau \neq t$ such that $\mi[Y; X_t \mid X_\tau] > 0.$

Additionally, these techniques make strong assumptions about the nature of $p_{\vecfnt{X}, Y}$ which have generally been observed to hold in practice, but nonetheless introduce the risk of failing to detect leaking measurements. \blsnr{} and \blsosd{} are sensitive only to the influence of $Y$ on the mean of $\vecfnt{X},$ and would fail to identify $X_t$ is leaking if $Y$ changes its distribution while leaving its mean unchanged (e.g. if the variance of $X_t$ changes with $Y$). \blcpa{} is sensitive only to associations between $X_t$ and the Hamming weight of $Y,$ and additionally assumes a linear relationship between these variables.

While the present work concerns mainly `black box' leakage localization algorithms which make minimal assumptions about the cryptographic implementation being evaluated, in practice these parametric methods are often employed as tools in white-box analyses of implementations. For example, in our work we consider the ASCADv1 datasets \citep{benadjila2020}, which use a Boolean masking countermeasure and thus have mainly second-order leakage. This renders the above methods ineffective when directly analyzing leakage of their canonical target variable. However, \cite{egger2022} identified 4 \textit{pairs} of internal AES variables which individually have first-order leakage and may be combined to determine the target variable. Thus, if one is aware \textit{a priori} that these variables leak and has access to the internal randomly-generated Boolean mask variables, they may individually analyze leakage of these variables with the above methods and accumulate the results. We use such an approach to compute `ground truth' leakiness measurements when running experiments on the ASCADv1 datasets. We emphasize that this type of analysis is challenging and error-prone: multiple leaking variable identified by \cite{egger2022} were unknown or overlooked by \cite{benadjila2020}, who introduced the dataset. This underscores the importance of black box techniques such as ours to supplement white box analysis.

\subsection{Neural net attribution methods}\label{app:learning-baselines}

There is a great deal of prior work on localizing leakage based on interpretability techniques to determine the relative importance of input features to a deep neural net which has been trained to model $\hat{p}_{Y \mid \vecfnt{X}} \approx p_{Y \mid \vecfnt{X}}$ using standard supervised deep learning techniques \citep{masure2019, hettwer2020, jin2020, zaid2020, wouters2020, valk2021, wu2021, golder2022, li2022, perin2022, schamberger2023, yap2023, li2024, yap2025}. As baselines we consider the recent works \cite{yap2025, schamberger2023} as well as a variety of older neural net attribution techniques which were compared in \cite{masure2019, hettwer2020, wouters2020}. Note that our choice of deep learning baselines subsumes those of \cite{yap2025, schamberger2023}. Since these methods all `interpret' a trained deep neural net, we view them as functions mapping the data distribution $p_{\vecfnt{X}, Y}$ as well as a model $\hat{p}_{Y \mid \vecfnt{X}}$ s.t. $\softmax\hat{p}_{Y \mid \vecfnt{X}} \approx p_{Y \mid \vecfnt{X}}$ to a vector of leakiness estimates in $\R^T.$ Here we summarize and discuss the works \cite{masure2019, hettwer2020, wouters2020, schamberger2023, yap2025} which we consider as baselines.

To our knowledge \cite{masure2019} were the first to explore neural net interpretability for leakage localization, proposing the \blsaliency{}-like \blgradvis{} leakage assessment, defined as
\begin{equation}
    \operatorname{GradVis}(p_{\vecfnt{X}, Y}; \hat{p}_{Y \mid \vecfnt{X}}) \defeq \expec_{\vecfnt{X}, Y} \left\lvert -\nabla_{\vecfnt{x}} \logsoftmax \hat{p}_{Y \mid \vecfnt{X}}(Y \mid \vecfnt{x})\rvert_{\vecfnt{x} = \vecfnt{X}} \right\rvert.
\end{equation}
\cite{hettwer2020} subsequently compared the \bloccl{} \citep{zeiler2014}, \blsaliency{} \citep{simonyan2014}, and layerwise relevance propagation (\bllrp{}) \citep{bach2015} as leakage localization techniques. The \bloccl{} technique is based on computing the size of change in the model's prediction as each individual input feature is `occluded' (replaced by 0), and is defined as
\begin{equation}
    \operatorname{1-Occlusion}(p_{\vecfnt{X}, Y}; \hat{p}_{Y \mid \vecfnt{X}}) \defeq \expec_{\vecfnt{X}, Y}\left(\left\lvert \hat{p}_{Y \mid \vecfnt{X}}(Y \mid \vecfnt{X}) - \hat{p}_{Y \mid \vecfnt{X}}(Y \mid (\vecfnt{1} - \vecfnt{I}_t) \odot \vecfnt{X}) \right\rvert: t=1, \dots, T \right)
\end{equation}
where $\vecfnt{I}_t$ denotes the vector in $\R^T$ with element $t$ equal to 1 and all other elements equal to 0. \blsaliency{} is defined as
\begin{equation}
    \operatorname{Saliency}(p_{\vecfnt{X}, Y}; \hat{p}_{Y \mid \vecfnt{X}}) \defeq \expec_{\vecfnt{X}, Y} \left\lvert \nabla_{\vecfnt{x}} \hat{p}_{Y \mid \vecfnt{X}}(Y \mid \vecfnt{x}) \rvert_{\vecfnt{x} = \vecfnt{X}} \right\rvert.
\end{equation}
\bllrp{} \citep{bach2015} is a gradient-based explainability technique which is more-complicated than the above, and we refer readers to \cite{bach2015} for an explanation. \cite{wouters2020} applied the \blinputxgrad{} method \citep{shrikumar2017} to leakage localization; as its name suggests, this method is defined as
\begin{equation}
    \operatorname{Input\,*\,Grad}(p_{\vecfnt{X}, Y}; \hat{p}_{Y \mid \vecfnt{X}}) \defeq \expec_{\vecfnt{X}, Y} \left\lvert \vecfnt{X} \odot \nabla_{\vecfnt{x}} \hat{p}_{Y \mid \vecfnt{X}}(Y \mid \vecfnt{x}) \lvert_{\vecfnt{x} = \vecfnt{X}} \right\rvert.
\end{equation}

We find that all these techniques have a tendency to incorrectly assign low leakiness to certain measurements, particularly in scenarios where lots of features have significant leakage. We suspect that a primary reason for this is that these methods rely on perturbing the input $x_t$ to the function $\hat{p}_{Y \mid \vecfnt{X}}(y \mid x_1, \dots, x_t, \dots, x_T),$ but when $Y$ is almost entirely determined by $X_{\tau}$ for $\tau \neq t,$ it becomes nearly independent of $X_t$ conditioned on $\{X_1, \dots, X_T\} \setminus \{X_t\},$ i.e. $\hat{p}_{Y \mid \vecfnt{X}}(y \mid x_\tau: \tau=1, \dots, T) \approx \hat{p}_{Y \mid \vecfnt{X}}(y \mid x_\tau: \tau=1, \dots, t-1, t+1, \dots, T).$ We demonstrate this phenomenon with a simple Gaussian mixture model setting in Sec. \ref{appsec:vanishing_mi}.

We also consider the recent works \cite{schamberger2023, yap2025} which estimate leakiness by perturbing many inputs simultaneously to a classifier, and do not necessarily suffer from the same issue. \cite{schamberger2023} presents the \blmoccl{} technique, which is like \bloccl{} except that it occludes $m$-diameter windows rather than single points. This could plausibly overcome the aforementioned issue if the `redundant' points are temporally-local. However, it has an undesirable `smoothing' effect where the estimated leakiness of a single point is tied to those of nearby points. \cite{schamberger2023} also proposes to use \blmocclso{} to analyze leakiness, where \textit{pairs} of windows are occluded rather than only individual windows. This is computationally-expensive because it requires $\Theta(T^2)$ passes through the dataset where $T$ is the data dimensionality. Additionally, \cite{schamberger2023} proposes it as a means to determine whether a measurement has first-order leakage or is part of a second-order leaking pair, and does not explore its use for single-measurement leakiness estimation. In our experiments we find it only marginally-better than \bloccl{} for this task, and not worth the significantly-higher computational cost. \cite{yap2025} proposes the \bloccpoi{} technique, which aims to identify a non-unique minimal set of measurements sufficient for a neural net to attain some chosen classification performance when all other measurements are occluded. This differs from our other considered methods in that it does not assign a leakiness estimate to every measurement, and we find that it is ill-suited for our leakage localization task and performance metrics. Additionally, it is computationally-expensive to run, requiring $\Omega(T)$ \textit{non-parallelizable} passes through the dataset.

\subsection{Numerical experiment illustrating conditional mutual information decay when many redundant leaking measurements are present}\label{appsec:vanishing_mi}

\begin{figure}[t]
    \centering
    \includegraphics[width=0.5\linewidth]{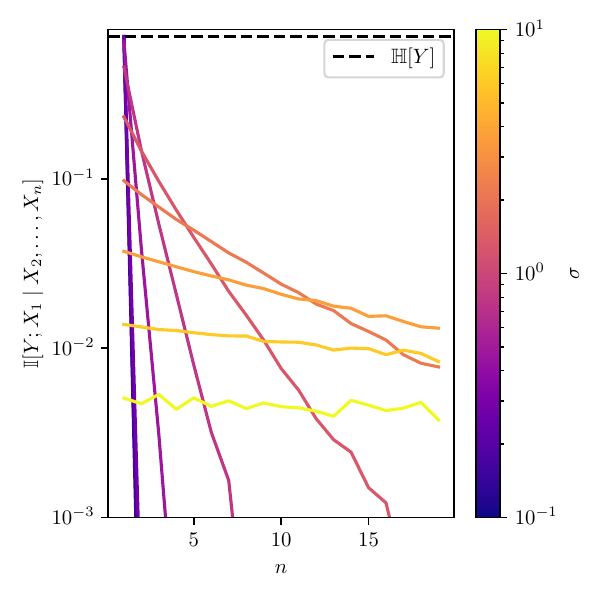}
    \caption{A numerical experiment evaluating the scaling behavior of $\mi[Y; X_n \mid X_1, \dots, X_{n-1}]$ vs. $n$ for various values of $\sigma,$ where $Y \sim \mathcal{U}\{-1, 1\}$ and $X_1, \dots, X_n \overset{\mathrm{i.i.d.}}{\sim} \mathcal{N}(Y, \sigma^2).$ Observe that for small $\sigma,$ each $X_i$ is approximately a point mass on $Y$ and we have $\mi[Y; X_1] \approx \ent[Y]$ and $\mi[Y; X_n \mid X_1, \dots, X_{n-1}] \approx 0$ for all $n > 1.$ When $\sigma$ is large, each $X_i$ gives us little information about $Y$ and we have $\mi[Y; X_n \mid X_1, \dots, X_{n-1}] \ll \ent[Y]$ approximately constant with $n$ sufficiently small.}
    \label{fig:conditional_mi_sweep}
\end{figure}

Here we provide a simple numerical experiment to illustrate conditional mutual information decay. Consider random variables $Y \sim \mathcal{U}\{-1, 1\}$ and $X_1, \dots, X_n \overset{\mathrm{i.i.d.}}{\sim} \mathcal{N}(Y, \sigma^2).$ In Fig. \ref{fig:conditional_mi_sweep} we plot the quantity $\mi[Y; X_n \mid X_1, \dots, X_{n-1}]$ vs. $n.$ We see that for various values of $\sigma$ the quantity decays rapidly with $n,$ and appears to be well-described by the function $\mi[Y; X_1 \mid X_2, \dots, X_n] \approx I k^n$ for some $I \in \R_+,$ $k \in (0, 1).$ Informally, the fact that conditional mutual information decays with $n$ makes sense for the following reason: each successive $X_i$ can be viewed as an independent `noisy observation' of $Y$ which reduces uncertainty about it without fully determining it. For all $n$ we have $0 \leq \mi[Y; X_1, \dots, X_n] = \mi[Y; X_1] + \mi[Y; X_2 \mid X_1] + \dots \mi[Y; X_n \mid X_1, \dots, X_{n-1}] \leq \ent[Y].$ If we assume that each $\mi[Y; X_m \mid X_1, \dots, X_{m-1}]$ is nonnegative, then for their sum to stay bounded they must converge to 0.

\section{Extended method with derivations}\label{appsec:method}

Given $\vecfnt{X}, Y \sim p_{\vecfnt{X}, Y}$ as defined in section \ref{sec:background}, where $\vecfnt{X} \defeq (X_1, \dots, X_T),$ we seek to assign to each timestep $t$ a scalar $\gamma_t^*$ indicating the `amount of leakage' about $Y$ due to $X_t.$ All of the quantities $\mi[X; X_t \mid \setfnt{S}]$ for $\setfnt{S} \subseteq \{X_1, \dots, X_T\} \setminus \{X_t\}$ are relevant to the `leakiness' of $X_t,$ but it is not clear how to weight these $2^T$ quantities into a single scalar measurement. Most prior work simply ignores most of these quantities: the first-order parametric methods \citep{mangard2007, brier2004, chari2003} consider only the pairwise terms $\mi[Y; X_t]$, and \blgradvis{} \citep{masure2019}, \blsaliency{} \citep{simonyan2014, hettwer2020}, \bloccl{} \citep{zeiler2014, hettwer2020}, \bllrp{} \citep{bach2015, hettwer2020}, \blinputxgrad{} \citep{shrikumar2017, wouters2020} may loosely be viewed as computing proxies for $\mi[Y; X_t \mid X_{\tau \neq t}].$ While \blmoccl{}, \blmocclso{} \citep{schamberger2023}, and \bloccpoi{} \citep{yap2025} are sensitive to more of these terms, they still ignore almost all of them in an \textit{ad hoc} manner.

In this section we propose a constrained optimization problem which implicitly defines an intuitively-reasonable definition of $\gamma_t^*$ which is sensitive to $\mi[Y; X_t \mid \setfnt{S}]$ for \textit{all} $\setfnt{S} \subseteq \{X_1, \dots, X_T\} \setminus \{X_t\}.$ We then propose an adversarial deep learning algorithm which lets us approximately solve it by modeling all the conditional distributions $p_{Y \mid \setfnt{S}}$ in an amortized manner which emphasizes those with a large impact on the objective. While our objective is a sum over $2^T$ occlusion mask-like values, in practice we find we can efficiently optimize it using the reparameterization trick \citep{kingma2014, rezende2014} with a CONCRETE-like relaxation \citep{jang2017, maddison2017} of our objective. This lets us exploit first-order gradient information, in contrast to `hard' occlusion-based methods such as \cite{schamberger2023, yap2025} which leverage only zeroth-order information.

\subsection{Optimization problem}\label{appsec:optimization_problem}

We define a vector $\vecfnt{\gamma} \in [0, 1]^T$ which we name the \textit{occlusion probabilities}. We use $\vecfnt{\gamma}$ to parameterize a distribution over binary vectors in $\{0, 1\}^T$ as follows:
\begin{equation}
    \vecfnt{\mathcal{A}}_{\vecfnt{\gamma}} \sim p_{\vecfnt{\mathcal{A}}_{\vecfnt{\gamma}}} \quad \text{where} \quad \mathcal{A}_{\vecfnt{\gamma}, t} = \begin{cases} 1 & \text{with probability } 1-\gamma_t \\ 0 & \text{with probability } \gamma_t, \end{cases}
\end{equation}
i.e. $\vecfnt{\mathcal{A}}_{\vecfnt{\gamma}}$ is a vector of independent Bernoulli random variables where the $t$-th element has parameter $p = 1-\gamma_t.$ For arbitrary vectors $\vecfnt{x} \in \R^T,$ $\vecfnt{\alpha} \in \{0, 1\}^T,$ let us denote $\vecfnt{x}_{\vecfnt{\alpha}} \defeq (x_t: t=1, \dots, T: \alpha_t = 1),$ i.e. the sub-vector of $\vecfnt{x}$ containing its elements for which the corresponding element of $\vecfnt{\alpha}$ is $1.$ We can accordingly use $\vecfnt{\mathcal{A}}_{\vecfnt{\gamma}}$ to obtain random sub-vectors $\vecfnt{X}_{\vecfnt{\mathcal{A}}_{\vecfnt{\gamma}}}$ of $\vecfnt{X}.$ Note that $\gamma_t$ denotes the probability that $X_t$ will \textit{not} be an element of $\vecfnt{X}_{\vecfnt{\mathcal{A}}_{\vecfnt{\gamma}}}$ (hence, `occlusion probability').

We assign to each element of $\vecfnt{\gamma}$ a `cost', defined as
\begin{equation}
    c: [0, 1] \to \R_+: x \mapsto \begin{cases} \frac{x}{1-x} & x < 1 \\ \infty & x = 1 \end{cases}.
\end{equation}
We seek to solve the constrained optimization problem
\begin{equation}\label{eqn:ideal-optimization-problem}
    \min_{\vecfnt{\gamma} \in [0, 1]^T} \quad \mathcal{L}_{\mathrm{ideal}}(\vecfnt{\gamma}) \defeq \mi[Y; \vecfnt{X}_{\vecfnt{\mathcal{A}}_{\vecfnt{\gamma}}} \mid \vecfnt{\mathcal{A}}_{\vecfnt{\gamma}}] \quad \text{such that} \quad \sum_{t=1}^{T} c(\gamma_t) = C
\end{equation}
for hyperparameter $C > 0.$ Note that $c$ is strictly-increasing with $c(0) = 0$ and $\lim_{\substack{x \to 1 \\ x < 1}} c(x) = \infty$ so that for finite $C$ any optimal $\vecfnt{\gamma}$ will be in $[0, 1)^T,$ and increasing some $\gamma_t$ necessarily entails reducing some other $\gamma_\tau$, $\tau \neq t.$ Additionally, for each $t$ we can re-write our objective as
\begin{align}
    \mathcal{L}_{\mathrm{ideal}}(\vecfnt{\gamma}) &= \sum_{\vecfnt{\alpha} \in \{0, 1\}^T} p_{\vecfnt{\mathcal{A}}_{\vecfnt{\gamma}}}(\vecfnt{\alpha}) \mi[Y; \vecfnt{X}_{\vecfnt{\alpha}}] &\\
    &= \sum_{\substack{\vecfnt{\alpha} \in \{0, 1\}^T \\ \alpha_t = 0}} p_{\vecfnt{\mathcal{A}}_{\vecfnt{\gamma}, -t}}(\vecfnt{\alpha}_{-t}) \left[(1-\gamma_t) \mi[Y; X_t, \vecfnt{X}_{\vecfnt{\alpha}}] + \gamma_t \mi[Y; \vecfnt{X}_{\vecfnt{\alpha}}] \right] \\
    &= \sum_{\substack{\vecfnt{\alpha} \in \{0, 1\}^T \\ \alpha_t = 0}} p_{\vecfnt{\mathcal{A}}_{\vecfnt{\gamma}, -t}}(\vecfnt{\alpha}_{-t}) \left[ \mi[Y; X_t, \vecfnt{X}_{\vecfnt{\alpha}}] - \gamma_t \mi[Y; X_t \mid \vecfnt{X}_{\vecfnt{\alpha}}] \right],
\end{align}
which implies
\begin{equation}
    \frac{\partial \mathcal{L}_{\mathrm{ideal}}(\vecfnt{\gamma})}{\partial \gamma_t} = -\sum_{\substack{\vecfnt{\alpha} \in \{0, 1\}^T \\ \alpha_t = 0}} p_{\vecfnt{\mathcal{A}}_{\vecfnt{\gamma}, -t}}(\vecfnt{\alpha}_{-t}) \mi[Y; X_t \mid \vecfnt{X}_{\vecfnt{\alpha}}].
\end{equation}

\subsection{Estimating mutual information with deep neural nets}\label{appsec:estimating_mi_with_dl}

We cannot solve equation \ref{eqn:ideal-optimization-problem} directly because we lack an expression for $p_{\vecfnt{X}, Y}.$ Here we derive an equivalent optimization problem which uses deep learning to characterize $p_{\vecfnt{X}, Y}$ using data.

Consider the family $\{\Phi_{\vecfnt{\alpha}}\}_{\vecfnt{\alpha} \in \{0, 1\}^T}$ with each element a deep neural net
\begin{equation}
    \Phi_{\vecfnt{\alpha}}: \setfnt{Y} \times \R^{\sum_{t=1}^{T} \alpha_t} \times \R^P \to [0, 1]: (y, \vecfnt{x}_{\vecfnt{\alpha}}, \vecfnt{\theta}) \mapsto \Phi_{\vecfnt{\alpha}}(y \mid \vecfnt{x}_{\vecfnt{\alpha}}; \vecfnt{\theta}).
\end{equation}
We assume each $\Phi_{\vecfnt{\alpha}}(\cdot \mid \vecfnt{x}; \vecfnt{\theta})$ is a probability mass function over $\setfnt{Y}$ (e.g. the neural net has a softmax output activation). We define the optimization problem
\begin{equation}\label{eqn:adversarial-optimization-problem}
    \min_{\vecfnt{\gamma} \in [0, 1]^T} \max_{\vecfnt{\theta} \in \R^P} \quad \mathcal{L}_{\mathrm{adv}}(\vecfnt{\gamma}, \vecfnt{\theta}) \defeq \expec \log \Phi_{\vecfnt{\mathcal{A}}_{\vecfnt{\gamma}}}(Y \mid \vecfnt{X}_{\vecfnt{\mathcal{A}}_{\vecfnt{\gamma}}}; \vecfnt{\theta}) \quad \text{such that} \quad \sum_{t=1}^{T} c(\gamma_t) = C.
\end{equation}
\begin{proposition}\label{prop:optimal-classifiers}
    Consider the objective function $\mathcal{L}_{\mathrm{adv}}$ of equation \ref{eqn:adversarial-optimization-problem}. Suppose there exists some $\vecfnt{\theta}^* \in \R^P$ such that $\Phi_{\vecfnt{\alpha}}(y \mid \vecfnt{x}_{\vecfnt{\alpha}}; \vecfnt{\theta}^*) = p_{Y \mid \vecfnt{X}_{\vecfnt{\alpha}}}(y \mid \vecfnt{x}_{\vecfnt{\alpha}})$ for all $\vecfnt{\alpha} \in \{0, 1\}^T,$ $\vecfnt{x} \in \R^T,$ $y \in \setfnt{Y}.$ Then
    \begin{equation}
        \vecfnt{\theta}^* \in \argmax_{\vecfnt{\theta} \in \R^P} \mathcal{L}_{\mathrm{adv}}(\vecfnt{\gamma}, \vecfnt{\theta}) \quad \forall \vecfnt{\gamma} \in [0, 1]^T.
    \end{equation}
    Furthermore, for all $y \in \setfnt{Y}$ and for all $\vecfnt{\gamma} \in [0, 1]^T,$ $\vecfnt{\alpha} \in \{0, 1\}^T$ such that $p_{\vecfnt{\mathcal{A}}_{\vecfnt{\gamma}}}(\vecfnt{\alpha}) > 0,$
    \begin{equation}
        \Phi_{\vecfnt{\alpha}}(y \mid \vecfnt{x}_{\vecfnt{\alpha}}; \hat{\vecfnt{\theta}}) = p_{Y \mid \vecfnt{X}_{\vecfnt{\alpha}}}(y \mid \vecfnt{x}_{\vecfnt{\alpha}}) \quad p_{\vecfnt{X}}\text{-almost surely} \quad \forall \hat{\vecfnt{\theta}} \in \argmin_{\vecfnt{\theta} \in \R^P} \mathcal{L}_{\mathrm{adv}}(\vecfnt{\gamma}, \vecfnt{\theta}).
    \end{equation}
\end{proposition}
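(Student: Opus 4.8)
The plan is to reduce the inner maximization over $\vecfnt{\theta}$ to a collection of decoupled cross-entropy problems, one per erasure mask, each of which is maximized pointwise by the true conditional distribution. First I would condition on $\vecfnt{\mathcal{A}}_{\vecfnt{\gamma}}$ and use its independence from $(\vecfnt{X}, Y)$ to rewrite the objective as the weighted sum
\begin{equation}
    \mathcal{L}_{\mathrm{adv}}(\vecfnt{\gamma}, \vecfnt{\theta}) = \sum_{\vecfnt{\alpha} \in \{0,1\}^T} p_{\vecfnt{\mathcal{A}}_{\vecfnt{\gamma}}}(\vecfnt{\alpha}) \, \expec\!\left[ \log \Phi_{\vecfnt{\alpha}}(Y \mid \vecfnt{X}_{\vecfnt{\alpha}}; \vecfnt{\theta}) \right].
\end{equation}
Then, for each fixed $\vecfnt{\alpha}$, I would apply the tower property (conditioning on $\vecfnt{X}_{\vecfnt{\alpha}}$) together with the standard decomposition of the conditional cross-entropy into an entropy term plus a Kullback--Leibler divergence,
\begin{equation}
    \expec\!\left[ \log \Phi_{\vecfnt{\alpha}}(Y \mid \vecfnt{X}_{\vecfnt{\alpha}}; \vecfnt{\theta}) \,\middle|\, \vecfnt{X}_{\vecfnt{\alpha}} \right] = -\ent(p_{Y \mid \vecfnt{X}_{\vecfnt{\alpha}}}) - \kl\!\left( p_{Y \mid \vecfnt{X}_{\vecfnt{\alpha}}} \Mid \Phi_{\vecfnt{\alpha}}(\cdot \mid \vecfnt{X}_{\vecfnt{\alpha}}; \vecfnt{\theta}) \right),
\end{equation}
which is valid because each $\Phi_{\vecfnt{\alpha}}(\cdot \mid \vecfnt{x}_{\vecfnt{\alpha}}; \vecfnt{\theta})$ is a probability mass function, so the divergence is well defined and nonnegative.

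Since the entropy term does not depend on $\vecfnt{\theta}$ and the divergence is nonnegative and vanishes exactly when $\Phi_{\vecfnt{\alpha}}(\cdot \mid \vecfnt{x}_{\vecfnt{\alpha}}; \vecfnt{\theta}) = p_{Y \mid \vecfnt{X}_{\vecfnt{\alpha}}}(\cdot \mid \vecfnt{x}_{\vecfnt{\alpha}})$, the assumed $\vecfnt{\theta}^*$ attains the pointwise supremum of the integrand for every $\vecfnt{\alpha}$ and every $\vecfnt{x}_{\vecfnt{\alpha}}$. Taking the weighted sum over $\vecfnt{\alpha}$ then shows that $\vecfnt{\theta}^*$ maximizes $\mathcal{L}_{\mathrm{adv}}(\vecfnt{\gamma}, \cdot)$ for every $\vecfnt{\gamma}$, giving the first claim. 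A point worth stressing here is that, because $\vecfnt{\theta}^*$ lies in the parametric family yet achieves the optimum over \emph{all} pmf-valued functions, the parametric maximum value coincides with the unconstrained one; hence any parametric maximizer also realizes the unconstrained optimum.

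For the second claim, I would take any $\hat{\vecfnt{\theta}}$ attaining the inner maximum, so that $\mathcal{L}_{\mathrm{adv}}(\vecfnt{\gamma}, \hat{\vecfnt{\theta}}) = \mathcal{L}_{\mathrm{adv}}(\vecfnt{\gamma}, \vecfnt{\theta}^*)$, and subtract the two decompositions to obtain
\begin{equation}
    \sum_{\vecfnt{\alpha} \in \{0,1\}^T} p_{\vecfnt{\mathcal{A}}_{\vecfnt{\gamma}}}(\vecfnt{\alpha}) \, \expec\!\left[ \kl\!\left( p_{Y \mid \vecfnt{X}_{\vecfnt{\alpha}}} \Mid \Phi_{\vecfnt{\alpha}}(\cdot \mid \vecfnt{X}_{\vecfnt{\alpha}}; \hat{\vecfnt{\theta}}) \right) \right] = 0.
\end{equation}
Every summand is a product of nonnegative quantities, so each term with $p_{\vecfnt{\mathcal{A}}_{\vecfnt{\gamma}}}(\vecfnt{\alpha}) > 0$ must have vanishing expected divergence; since the divergence is a nonnegative random variable with zero mean, it is zero $p_{\vecfnt{X}}$-almost surely, forcing $\Phi_{\vecfnt{\alpha}}(\cdot \mid \vecfnt{x}_{\vecfnt{\alpha}}; \hat{\vecfnt{\theta}}) = p_{Y \mid \vecfnt{X}_{\vecfnt{\alpha}}}(\cdot \mid \vecfnt{x}_{\vecfnt{\alpha}})$ for $p_{\vecfnt{X}}$-almost every $\vecfnt{x}$. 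I expect the main obstacle to be the measure-theoretic bookkeeping in this last step: justifying the conditional decomposition and the interchange of expectation, and cleanly arguing the passage from ``zero expected KL'' to the almost-sure pointwise equality (here the assumed strict positivity of $p_{\vecfnt{X} \mid Y}$ is convenient, since it pins down the relevant supports and controls the null sets). The algebraic identities and the nonnegativity of KL are routine by comparison.
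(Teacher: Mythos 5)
Your proposal is correct and matches the paper's own proof in all essentials: both reduce the objective to a $p_{\vecfnt{\mathcal{A}}_{\vecfnt{\gamma}}}$-weighted sum over masks, invoke Gibbs' inequality (equivalently, your explicit cross-entropy $=$ entropy $+$ KL decomposition) to get the first claim, and for the second claim argue that a zero sum of nonnegative weighted expected KL divergences forces each positive-weight term to vanish, hence pointwise equality $p_{\vecfnt{X}}$-almost surely. Your closing worry about measure-theoretic bookkeeping is handled in the paper exactly as you anticipate, by writing the expected log-ratio as an integral of $p_{\vecfnt{X}_{\vecfnt{\alpha}}}$ against the conditional KL and using its nonnegativity.
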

\begin{proof}
Note that since each $\Phi_{\vecfnt{\alpha}}(\cdot \mid \vecfnt{x}, \vecfnt{\theta})$ is a probability mass function over $\setfnt{Y},$ by Gibbs' inequality we have
\begin{equation}
    \expec \log \Phi_{\vecfnt{\alpha}}(Y \mid \vecfnt{X}_{\vecfnt{\alpha}}; \vecfnt{\theta}) \leq \expec \log p_{Y \mid \vecfnt{X}_{\vecfnt{\alpha}}}(Y \mid \vecfnt{X}_{\vecfnt{\alpha}}) \quad \forall \vecfnt{\alpha} \in \{0, 1\}^T, \vecfnt{\theta} \in \R^P.
\end{equation}
Thus,
\begin{equation}
    \mathcal{L}_{\mathrm{adv}}(\vecfnt{\gamma}, \vecfnt{\theta}^*) \geq \mathcal{L}_{\mathrm{adv}}(\vecfnt{\gamma}, \vecfnt{\theta}) \quad \forall \vecfnt{\theta} \in \R^P, \vecfnt{\gamma} \in [0, 1]^T,
\end{equation}
which implies the first claim.

Next, consider some fixed $\vecfnt{\gamma} \in [0, 1]^T$ and $\hat{\vecfnt{\theta}} \in \argmin_{\vecfnt{\theta} \in \R^P} \mathcal{L}_{\mathrm{adv}}(\vecfnt{\gamma}, \vecfnt{\theta}).$ We must have $\mathcal{L}_{\mathrm{adv}}(\vecfnt{\gamma}, \hat{\vecfnt{\theta}}) = \mathcal{L}_{\mathrm{adv}}(\vecfnt{\gamma}, \vecfnt{\theta}^*).$ Thus,
\begin{align}
    0 &= \mathcal{L}_{\mathrm{adv}}(\vecfnt{\gamma}, \vecfnt{\theta}^*) - \mathcal{L}_{\mathrm{adv}}(\vecfnt{\gamma}, \hat{\vecfnt{\theta}}) &\\
    &= \expec\left[ \log p_{Y \mid \vecfnt{X}_{\vecfnt{\alpha}}}(Y \mid \vecfnt{X}_{\vecfnt{\alpha}}) - \log \Phi_{\vecfnt{\alpha}}(Y \mid \vecfnt{X}_{\vecfnt{\alpha}}; \hat{\vecfnt{\theta}}) \right] \\
    &= \sum_{\vecfnt{\alpha} \in \{0, 1\}^T} p_{\vecfnt{\mathcal{A}}_{\vecfnt{\gamma}}}(\vecfnt{\alpha}) \expec\left[ \log p_{Y \mid \vecfnt{X}_{\vecfnt{\alpha}}}(Y \mid \vecfnt{X}_{\vecfnt{\alpha}}) - \log\Phi_{\vecfnt{\alpha}}(Y \mid \vecfnt{X}_{\vecfnt{\alpha}}; \hat{\vecfnt{\theta}}) \right].
\end{align}
By Gibbs' inequality, each of the expectations in the summation is nonnegative, which implies that whenever $p_{\vecfnt{\mathcal{A}}_{\vecfnt{\gamma}}}(\vecfnt{\alpha}) > 0$ we must have
\begin{align}
    0 &= \expec\left[ \log p_{Y \mid \vecfnt{X}_{\vecfnt{\alpha}}}(Y \mid \vecfnt{X}_{\vecfnt{\alpha}}) - \log \Phi_{\vecfnt{\alpha}}(Y \mid \vecfnt{X}_{\vecfnt{\alpha}}; \hat{\vecfnt{\theta}}) \right] &\\
    &= \int_{\R^{\sum_{t=1}^{T} \alpha_t}} p_{\vecfnt{X}_{\vecfnt{\alpha}}}(\vecfnt{x}_{\vecfnt{\alpha}}) \kl\left[ p_{Y \mid \vecfnt{X}_{\vecfnt{\alpha}}}(\cdot \mid \vecfnt{x}_{\vecfnt{\alpha}}) \Mid \Phi_{\vecfnt{\alpha}}(\cdot \mid \vecfnt{x}_{\vecfnt{\alpha}}; \hat{\vecfnt{\theta}}) \right] \dd\vecfnt{x}_{\vecfnt{\alpha}}.
\end{align}
Since $\kl\left[p_{Y \mid \vecfnt{X}_{\vecfnt{\alpha}}}(\cdot \mid \vecfnt{x}_{\vecfnt{\alpha}}) \Mid \Phi_{\vecfnt{\alpha}}(\cdot \mid \vecfnt{x}_{\vecfnt{\alpha}}; \hat{\vecfnt{\theta}})\right] \geq 0$ with equality if and only if $p_{Y \mid \vecfnt{X}_{\vecfnt{\alpha}}}(y \mid \vecfnt{x}_{\vecfnt{\alpha}}) = \Phi_{\vecfnt{\alpha}}(y \mid \vecfnt{x}_{\vecfnt{\alpha}}; \hat{\vecfnt{\theta}})$ $\forall y \in \setfnt{Y},$ this must be the case except possibly for $\vecfnt{x} \in \R^T$ where
\begin{equation}
    \int_{\{\vecfnt{x}_{\vecfnt{\alpha}}: \vecfnt{x} \in \R^T\}} p_{\vecfnt{X}_{\vecfnt{\alpha}}}(\vecfnt{x}_{\vecfnt{\alpha}}) \dd\vecfnt{x}_{\vecfnt{\alpha}} = 0 \implies \int_{\R^T} p_{\vecfnt{X}}(\vecfnt{x}) \dd\vecfnt{x} = 0.
\end{equation}
This implies the second claim.
\end{proof}
\begin{corollary}
    Under the assumptions of Proposition \ref{prop:optimal-classifiers}, equations \ref{eqn:ideal-optimization-problem} and \ref{eqn:adversarial-optimization-problem} are equivalent.
\end{corollary}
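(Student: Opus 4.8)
The plan is to show that for each fixed $\vecfnt{\gamma}$ obeying the budget constraint, the inner maximization in problem \ref{eqn:adversarial-optimization-problem} has a value equal to $\mathcal{L}_{\mathrm{ideal}}(\vecfnt{\gamma})$ up to an additive constant that does not depend on $\vecfnt{\gamma}$. Once this is established, the two outer minimizations over the common feasible set $\{\vecfnt{\gamma}: \sum_{t=1}^T c(\gamma_t) = C\}$ must share the same minimizers, which is exactly the asserted equivalence.

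First I would fix $\vecfnt{\gamma} \in [0,1]^T$ and apply Proposition \ref{prop:optimal-classifiers} to evaluate the inner maximum. The proposition tells us that the realizable weights $\vecfnt{\theta}^*$ attain the maximum over $\vecfnt{\theta}$, so substituting $\Phi_{\vecfnt{\alpha}}(\cdot; \vecfnt{\theta}^*) = p_{Y \mid \vecfnt{X}_{\vecfnt{\alpha}}}$ at the random mask $\vecfnt{\alpha} = \vecfnt{\mathcal{A}}_{\vecfnt{\gamma}}$ gives
\[
\max_{\vecfnt{\theta} \in \R^P} \mathcal{L}_{\mathrm{adv}}(\vecfnt{\gamma}, \vecfnt{\theta}) = \mathcal{L}_{\mathrm{adv}}(\vecfnt{\gamma}, \vecfnt{\theta}^*) = \expec \log p_{Y \mid \vecfnt{X}_{\vecfnt{\mathcal{A}}_{\vecfnt{\gamma}}}}(Y \mid \vecfnt{X}_{\vecfnt{\mathcal{A}}_{\vecfnt{\gamma}}}).
\]

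Next I would expand $\mathcal{L}_{\mathrm{ideal}}(\vecfnt{\gamma}) = \mi[Y; \vecfnt{X}_{\vecfnt{\mathcal{A}}_{\vecfnt{\gamma}}} \mid \vecfnt{\mathcal{A}}_{\vecfnt{\gamma}}]$ from the definition of conditional mutual information. The structural fact I would lean on is that the mask $\vecfnt{\mathcal{A}}_{\vecfnt{\gamma}}$ is generated independently of $(\vecfnt{X}, Y)$. This yields two collapses in the defining log-ratio: on the event $\{\vecfnt{\mathcal{A}}_{\vecfnt{\gamma}} = \vecfnt{\alpha}\}$ the numerator $p_{Y \mid \vecfnt{X}_{\vecfnt{\mathcal{A}}_{\vecfnt{\gamma}}}, \vecfnt{\mathcal{A}}_{\vecfnt{\gamma}}}$ reduces to $p_{Y \mid \vecfnt{X}_{\vecfnt{\alpha}}}$, and since $Y \cindep \vecfnt{\mathcal{A}}_{\vecfnt{\gamma}}$ the denominator $p_{Y \mid \vecfnt{\mathcal{A}}_{\vecfnt{\gamma}}}$ equals $p_Y$. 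Taking the expectation then gives
\[
\mathcal{L}_{\mathrm{ideal}}(\vecfnt{\gamma}) = \expec \log p_{Y \mid \vecfnt{X}_{\vecfnt{\mathcal{A}}_{\vecfnt{\gamma}}}}(Y \mid \vecfnt{X}_{\vecfnt{\mathcal{A}}_{\vecfnt{\gamma}}}) - \expec \log p_Y(Y) = \max_{\vecfnt{\theta} \in \R^P} \mathcal{L}_{\mathrm{adv}}(\vecfnt{\gamma}, \vecfnt{\theta}) + \ent[Y],
\]
where I used that $\expec[-\log p_Y(Y)] = \ent[Y]$ is the (constant) entropy of $Y$.

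Finally, because $\ent[Y]$ is independent of $\vecfnt{\gamma}$, the maps $\vecfnt{\gamma} \mapsto \max_{\vecfnt{\theta}} \mathcal{L}_{\mathrm{adv}}(\vecfnt{\gamma}, \vecfnt{\theta})$ and $\vecfnt{\gamma} \mapsto \mathcal{L}_{\mathrm{ideal}}(\vecfnt{\gamma})$ differ by a constant, so their constrained minimizations over the same feasible set are equivalent. The step I expect to be the main obstacle is the expansion of $\mathcal{L}_{\mathrm{ideal}}$: since $\vecfnt{X}_{\vecfnt{\mathcal{A}}_{\vecfnt{\gamma}}}$ is a random-dimensional object whose coordinates are selected by $\vecfnt{\mathcal{A}}_{\vecfnt{\gamma}}$, I would need to justify the two density collapses rigorously by conditioning on each event $\{\vecfnt{\mathcal{A}}_{\vecfnt{\gamma}} = \vecfnt{\alpha}\}$ and checking the factorization of the joint law of $(\vecfnt{X}, Y, \vecfnt{\mathcal{A}}_{\vecfnt{\gamma}})$; the remaining additive-constant argument is immediate.
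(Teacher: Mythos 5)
Your proposal is correct and follows essentially the same route as the paper's own proof: both apply Proposition \ref{prop:optimal-classifiers} to evaluate the inner maximum as $\expec \log p_{Y \mid \vecfnt{X}_{\vecfnt{\mathcal{A}}_{\vecfnt{\gamma}}}}(Y \mid \vecfnt{X}_{\vecfnt{\mathcal{A}}_{\vecfnt{\gamma}}})$ and then observe that this differs from $\mathcal{L}_{\mathrm{ideal}}(\vecfnt{\gamma}) = \mi[Y; \vecfnt{X}_{\vecfnt{\mathcal{A}}_{\vecfnt{\gamma}}} \mid \vecfnt{\mathcal{A}}_{\vecfnt{\gamma}}]$ only by the $\vecfnt{\gamma}$-independent constant $\ent[Y]$. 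The paper reaches this by summing over masks and converting conditional entropy to mutual information, while you expand the conditional mutual information directly via the log-ratio and the independence of $\vecfnt{\mathcal{A}}_{\vecfnt{\gamma}}$ from $(\vecfnt{X}, Y)$; these are the same argument in slightly different packaging, and the density collapses you flag as the main obstacle are exactly the independence facts the paper uses implicitly in its mixture decomposition.
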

\begin{proof}
Observe that for any $\vecfnt{\gamma} \in [0, 1]^T,$
\begin{align}
    \max_{\vecfnt{\theta} \in \R^P} \mathcal{L}_{\mathrm{adv}}(\vecfnt{\gamma}, \vecfnt{\theta}) &= \max_{\vecfnt{\theta} \in \R^P} \expec \log \Phi_{\vecfnt{\mathcal{A}}_{\vecfnt{\gamma}}}(Y \mid \vecfnt{X}_{\vecfnt{\mathcal{A}}_{\vecfnt{\gamma}}}(Y \mid \vecfnt{X}_{\vecfnt{\mathcal{A}}_{\vecfnt{\gamma}}}; \vecfnt{\theta}) &\\
    &= \sum_{\vecfnt{\alpha} \in \{0, 1\}^T} p_{\vecfnt{\mathcal{A}}_{\vecfnt{\gamma}}}(\vecfnt{\alpha}) \expec \log p_{Y \mid \vecfnt{X}_{\vecfnt{\alpha}}}(Y \mid \vecfnt{X}_{\vecfnt{\alpha}}) \quad \text{by Prop. \ref{prop:optimal-classifiers}} \\
    &= -\sum_{\vecfnt{\alpha} \in \{0, 1\}^T} p_{\vecfnt{\mathcal{A}}_{\vecfnt{\gamma}}}(\vecfnt{\alpha}) \ent[Y \mid \vecfnt{X}_{\vecfnt{\alpha}}] \\
    &\equiv \sum_{\vecfnt{\alpha} \in \{0, 1\}^T} p_{\vecfnt{\mathcal{A}}_{\vecfnt{\gamma}}}(\vecfnt{\alpha}) \left[ \ent[Y] - \ent[Y \mid \vecfnt{X}_{\vecfnt{\alpha}}] \right] \quad \text{because $\ent[Y]$ is not a function of $\vecfnt{\gamma}$} \\
    &= \sum_{\vecfnt{\alpha} \in \{0, 1\}^T} p_{\vecfnt{\mathcal{A}}_{\vecfnt{\gamma}}}(\vecfnt{\alpha}) \mi[Y; \vecfnt{X}_{\vecfnt{\alpha}}] \\
    &= \mi[Y; \vecfnt{X}_{\vecfnt{\mathcal{A}}_{\vecfnt{\gamma}}} \mid \vecfnt{\mathcal{A}}_{\vecfnt{\gamma}}] \\
    &= \mathcal{L}_{\mathrm{ideal}}(\vecfnt{\gamma}).
\end{align}
This implies the result.
\end{proof}
\begin{corollary}
    Suppose the assumptions of Proposition \ref{prop:optimal-classifiers} are satisfied, and let $\hat{\vecfnt{\theta}} \in \argmin_{\vecfnt{\theta} \in \R^P} \mathcal{L}_{\mathrm{adv}}(\vecfnt{\gamma}, \vecfnt{\theta})$ for some $\vecfnt{\gamma} \in [0, 1]^T.$ Consider $\vecfnt{\alpha} \defeq \vecfnt{\alpha}' + \vecfnt{\alpha}''$ where $\vecfnt{\alpha}', \vecfnt{\alpha}'' \in \{0, 1\}^T$ such that $\alpha'_t = 1 \implies \alpha''_t = 0$ and $\alpha''_t = 1 \implies \alpha'_t = 0,$ and $p_{\vecfnt{\mathcal{A}}_{\vecfnt{\gamma}}}(\vecfnt{\alpha}) > 0.$ For all $y \in \setfnt{Y},$ it follows immediately from Proposition \ref{prop:optimal-classifiers} that $p_{\vecfnt{X}}$-almost everywhere we can use our classifiers to compute the pointwise mutual information quantities
    \begin{align}
        \pmi(y; \vecfnt{x}_{\vecfnt{\alpha}'} \mid \vecfnt{x}_{\vecfnt{\alpha}''}) &\defeq \log p_{Y \mid \vecfnt{X}_{\vecfnt{\alpha}}}(y \mid \vecfnt{x}_{\vecfnt{\alpha}}) - \log p_{Y \mid \vecfnt{X}_{\vecfnt{\alpha}''}}(y \mid \vecfnt{x}_{\vecfnt{\alpha}''}) &\\
        &= \log \Phi_{\vecfnt{\alpha}}(y \mid \vecfnt{x}_{\vecfnt{\alpha}}; \hat{\vecfnt{\theta}}) - \log \Phi_{\vecfnt{\alpha}''}(y \mid \vecfnt{x}_{\vecfnt{\alpha}''}; \hat{\vecfnt{\theta}}).
    \end{align}
\end{corollary}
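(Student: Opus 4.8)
The plan is to obtain the claimed identity by applying the second conclusion of Proposition \ref{prop:optimal-classifiers} separately to the two masks $\vecfnt{\alpha}$ and $\vecfnt{\alpha}''$, and then merely taking logarithms and subtracting. Recall that $\vecfnt{\alpha}'$ and $\vecfnt{\alpha}''$ have disjoint support with $\vecfnt{\alpha} = \vecfnt{\alpha}' + \vecfnt{\alpha}''$, so $\vecfnt{x}_{\vecfnt{\alpha}}$ is exactly the concatenation of $\vecfnt{x}_{\vecfnt{\alpha}'}$ and $\vecfnt{x}_{\vecfnt{\alpha}''}$. Hence the first displayed line is nothing more than the standard definition of conditional pointwise mutual information, $\log p_{Y \mid \vecfnt{X}_{\vecfnt{\alpha}}}(y \mid \vecfnt{x}_{\vecfnt{\alpha}}) - \log p_{Y \mid \vecfnt{X}_{\vecfnt{\alpha}''}}(y \mid \vecfnt{x}_{\vecfnt{\alpha}''})$, and the only substantive content is the second line, i.e. that the true conditionals may be swapped for the trained classifiers $\Phi_{\vecfnt{\alpha}}(\cdot; \hat{\vecfnt{\theta}})$ and $\Phi_{\vecfnt{\alpha}''}(\cdot; \hat{\vecfnt{\theta}})$.

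First I would verify the positivity hypotheses needed to invoke the proposition for each mask. We are given $p_{\vecfnt{\mathcal{A}}_{\vecfnt{\gamma}}}(\vecfnt{\alpha}) > 0$, and I would then check that $p_{\vecfnt{\mathcal{A}}_{\vecfnt{\gamma}}}(\vecfnt{\alpha}'') > 0$ as well. Since $p_{\vecfnt{\mathcal{A}}_{\vecfnt{\gamma}}}$ factorizes into independent Bernoullis and $\vecfnt{\alpha}''$ agrees with $\vecfnt{\alpha}$ except that the coordinates in the support of $\vecfnt{\alpha}'$ are switched off, we have $p_{\vecfnt{\mathcal{A}}_{\vecfnt{\gamma}}}(\vecfnt{\alpha}'') = p_{\vecfnt{\mathcal{A}}_{\vecfnt{\gamma}}}(\vecfnt{\alpha}) \prod_{t:\,\alpha'_t = 1} \gamma_t/(1-\gamma_t)$, which is positive precisely when $\gamma_t > 0$ at each such coordinate. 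This is the one place the argument is not literally automatic: if some switched-off coordinate has $\gamma_t = 0$, then $\Phi_{\vecfnt{\alpha}''}(\cdot; \hat{\vecfnt{\theta}})$ is never exercised during training and the proposition says nothing about it. I therefore expect this positivity check to be the main (and essentially only) obstacle, and I would either restrict to $\vecfnt{\gamma} \in (0,1)^T$ (under which every mask has positive probability) or carry the hypothesis $p_{\vecfnt{\mathcal{A}}_{\vecfnt{\gamma}}}(\vecfnt{\alpha}'') > 0$ explicitly.

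Granting the two positivity conditions, the proposition yields $\Phi_{\vecfnt{\alpha}}(y \mid \vecfnt{x}_{\vecfnt{\alpha}}; \hat{\vecfnt{\theta}}) = p_{Y \mid \vecfnt{X}_{\vecfnt{\alpha}}}(y \mid \vecfnt{x}_{\vecfnt{\alpha}})$ for $p_{\vecfnt{X}}$-almost every $\vecfnt{x}$ and every $y \in \setfnt{Y}$, and likewise $\Phi_{\vecfnt{\alpha}''}(y \mid \vecfnt{x}_{\vecfnt{\alpha}''}; \hat{\vecfnt{\theta}}) = p_{Y \mid \vecfnt{X}_{\vecfnt{\alpha}''}}(y \mid \vecfnt{x}_{\vecfnt{\alpha}''})$ for $p_{\vecfnt{X}}$-almost every $\vecfnt{x}$ and every $y$. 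Each equality fails only on a $p_{\vecfnt{X}}$-null subset of $\R^T$; since $\setfnt{Y}$ is finite, the union over $y$ and over the two masks of these exceptional sets is again $p_{\vecfnt{X}}$-null. Off this single null set both substitutions hold simultaneously, so I would take logarithms of the two identities, subtract, and recognize the right-hand side as $\log \Phi_{\vecfnt{\alpha}}(y \mid \vecfnt{x}_{\vecfnt{\alpha}}; \hat{\vecfnt{\theta}}) - \log \Phi_{\vecfnt{\alpha}''}(y \mid \vecfnt{x}_{\vecfnt{\alpha}''}; \hat{\vecfnt{\theta}})$, which is the asserted formula. The logarithms are well defined throughout because $p_Y$ and the densities $p_{\vecfnt{X} \mid Y}(\cdot \mid y)$ are assumed strictly positive, so every conditional appearing here (and hence, almost surely, every classifier value) is strictly positive.
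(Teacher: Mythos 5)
Your proof is correct and coincides with the paper's intended argument: the paper offers no proof beyond asserting that the corollary ``follows immediately'' from Proposition \ref{prop:optimal-classifiers}, and your route---invoking the almost-sure identification $\Phi_{\vecfnt{\alpha}}(\cdot \mid \cdot\,; \hat{\vecfnt{\theta}}) = p_{Y \mid \vecfnt{X}_{\vecfnt{\alpha}}}$ separately for the masks $\vecfnt{\alpha}$ and $\vecfnt{\alpha}''$, taking a finite union of null sets, and subtracting logarithms---is exactly that immediate argument made explicit. Your positivity check is moreover a genuine sharpening the paper elides: since $p_{\vecfnt{\mathcal{A}}_{\vecfnt{\gamma}}}(\vecfnt{\alpha}'') = p_{\vecfnt{\mathcal{A}}_{\vecfnt{\gamma}}}(\vecfnt{\alpha}) \prod_{t:\,\alpha'_t = 1} \gamma_t/(1-\gamma_t),$ the hypothesis $p_{\vecfnt{\mathcal{A}}_{\vecfnt{\gamma}}}(\vecfnt{\alpha}) > 0$ alone does not cover the case $\gamma_t = 0$ for some $t$ in the support of $\vecfnt{\alpha}'$ (where the proposition says nothing about $\Phi_{\vecfnt{\alpha}''}$ at $\hat{\vecfnt{\theta}}$), so carrying $p_{\vecfnt{\mathcal{A}}_{\vecfnt{\gamma}}}(\vecfnt{\alpha}'') > 0$ explicitly or restricting to $\vecfnt{\gamma} \in (0,1)^T,$ as you propose, is the correct repair.
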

This is useful because it allows us to assess leakage from \textit{single} power traces, as opposed to merely summarizing distributions of power traces. There are scenarios where a power measurement might leak for some traces but not for others. For example, a common countermeasure is to randomly delay leaky instructions or swap their order with another instruction so that they do not occur at a deterministic time relative to the start of encryption. One could use $\pmi$ computations to determine the timetep at which the leaky instruction has been run in a single trace.

Since it would be impractical to train $2^T$ deep neural networks independently, we implement the family of classifiers by a single neural net with input dropout and with the dropout mask fed to the neural net as an auxiliary input:
\begin{equation}
    \Phi: \setfnt{Y} \times \R^T \times \{0, 1\}^T \times \R^P \to [0, 1]: (y, \vecfnt{x}, \vecfnt{\alpha}, \vecfnt{\theta}) \mapsto \Phi(y \mid \vecfnt{x} \odot \vecfnt{\alpha}, \vecfnt{\alpha}; \vecfnt{\theta})
\end{equation}
where $\Phi_{\vecfnt{\alpha}}(y \mid \vecfnt{x}_{\vecfnt{\alpha}}; \vecfnt{\theta}) \defeq \Phi(y \mid \vecfnt{x} \odot \vecfnt{\alpha}, \vecfnt{\alpha}; \vecfnt{\theta}).$ This approach was inspired by \cite{lippe2022}.

\subsection{Re-parametrization into an unconstrained optimization problem}
\newcommand{\noise}{{\vecfnt{\mathcal{A}}_{\vecfnt{\gamma}(\tilde{\vecfnt{\eta}})}}}

We would like to approximately solve equation \ref{eqn:adversarial-optimization-problem} using an alternating stochastic gradient descent-style approach, similarly to GANs \citep{goodfellow2014}. Thus, it is convenient to express it as an unconstrained optimization problem. We first define a new vector $\vecfnt{\eta} \in \Delta^{T-1}$ where $\Delta^{T-1} \defeq \{(\delta_1, \dots, \delta_T) \in \R_+^T : \sum_{t=1}^{T} \delta_t = 1\}$ denotes the $T$-simplex. We then define $\vecfnt{\gamma}$ to be the vector satisfying the equality
\begin{align}
    & \quad c(\gamma_t) = C \eta_t &\\
    \implies & \frac{\gamma_t}{1-\gamma_t} = C \eta_t \\
    \implies & \log \gamma_t - \log(1-\gamma_t) = \log C + \log \eta_t \\
    \implies & \gamma_t = \sigmoid\left(\log C + \log \eta_t\right).
\end{align}
If we define $\vecfnt{\eta} \defeq \softmax(\tilde{\vecfnt{\eta}})$ for $\tilde{\vecfnt{\eta}} \in \R^T,$ then we can express
\begin{equation}
    \gamma_t = \sigmoid\left( \log C + \log \tilde{\eta}_t - \logsumexp(\tilde{\vecfnt{\eta}}) \right),
\end{equation}
which allows us to map the unconstrained vector $\tilde{\vecfnt{\eta}}$ to $\vecfnt{\gamma}$ or $\log \vecfnt{\gamma}$ using numerically-stable PyTorch operations. Our constrained optimization problem \ref{eqn:adversarial-optimization-problem} is thus equivalent to the following unconstrained problem:
\begin{equation}
    \min_{\tilde{\vecfnt{\eta}} \in \R^T} \max_{\vecfnt{\theta} \in \R^P} \quad \mathcal{L}(\tilde{\vecfnt{\eta}}, \vecfnt{\theta}) \defeq \expec \log \Phi\left(Y \mid \vecfnt{X} \odot \noise, \noise; \vecfnt{\theta}\right).
\end{equation}

\subsection{Implementation details}

It is infeasible to exactly compute the expectation with respect to $\noise$ because doing so would require summing over $2^T$ terms. As is routine in deep learning contexts, we instead approximate the gradient with Monte Carlo integration. Note that our objective\footnote{ignoring its $\vecfnt{\theta}$-dependence because differentiating with respect to $\vecfnt{\theta}$ is straightforward here} takes the form $\mathcal{L}(\tilde{\vecfnt{\eta}}) = \expec f(\noise)$ where $f(\vecfnt{\alpha}) \defeq \expec_{\vecfnt{X}, Y} \log \Phi(Y \mid \vecfnt{X} \odot \vecfnt{\alpha}, \vecfnt{\alpha}; \vecfnt{\theta})$ and the distribution of $\noise$ depends on $\tilde{\vecfnt{\eta}}.$ 

Unbiased estimators for $\nabla_{\tilde{\vecfnt{\eta}}}\mathcal{L}(\vecfnt{\eta})$ of this form are usually based on the REINFORCE estimator \citep{williams1992} with control variates, and tend to have complicated implementations and high variance. We tried using the vanilla REINFORCE estimator with simple control variates as well as the more-sophisticated REBAR estimator \citep{tucker2017}, and found that the former works poorly, while the latter works well. Subsequent ablation studies revealed that the biased CONCRETE estimator \citep{maddison2017} works almost as well as REBAR for our application and is considerably simpler, so in this work we use CONCRETE.

The CONCRETE estimator lets us write $\noise$ as a deterministic function of $\tilde{\vecfnt{\eta}}$ and $\tilde{\vecfnt{\eta}}$-independent noise and thereby use the reparameterization trick to estimate $\nabla_{\tilde{\vecfnt{\eta}}} \mathcal{L}(\tilde{\vecfnt{\eta}})$ using standard automatic differentiation tools. For binary random variables such as the elements of $\noise,$ the estimator is built on the observation that
\begin{equation}
    x \sim \operatorname{Bernoulli}(p) \equiv x = H(\log p - \log(1-p) + \log u - \log(1-u)) \quad \text{for} \quad u \sim \mathcal{U}(0, 1)
\end{equation}
where $H(x) \defeq \begin{cases} 1 & x \geq 0 \\ 0 & x < 0 \end{cases}$ denotes the unit step function. $H$ is not amenable to gradient descent because its derivative is zero almost everywhere, but we can approximate it with the tempered sigmoid function $\sigmoid_\tau(x) \defeq \sigmoid(x / \tau).$ The temperature parameter $\tau > 0$ can be tuned to control a bias-variance tradeoff for the estimator: $\lim_{\tau \to 0} \sigmoid_\tau(x) = H(x)$ $\forall x \in \R \setminus \{0\},$ but variance increases as $O(1/\tau)$ \citep{shekhovtsov2021}. We find that results are reasonable when we simply leave $\tau$ fixed at $1$, and we do this throughout the present work. We conjecture that this is because all our performance evaluation metrics consider only the \textit{relative} leakiness estimates produced by our method, and the nature of the bias is not to significantly impact the relative sizes of the elements of $\vecfnt{\gamma}^*.$

Note that while our original loss function
\begin{equation}\label{eqn:loss_fn}
    \ell(\tilde{\vecfnt{\eta}}, \vecfnt{\theta}, \vecfnt{x}, y, \vecfnt{\alpha}) \defeq \log \Phi(y \mid \vecfnt{x} \odot \vecfnt{\alpha}, \vecfnt{\alpha}; \vecfnt{\theta})
\end{equation}
is written for a `hard' occlusion mask $\vecfnt{\alpha} \in \{0, 1\}^T,$ the relaxed occlusion masks lie inside the open ball $(0, 1)^T.$ Thus, we must relax our loss function to accept these inputs as well. While the right-hand side of Eqn. \ref{eqn:loss_fn} is still a valid expression for $\vecfnt{\alpha}$ in $(0, 1)^T,$ its optimal value of this loss with respect to $\vecfnt{\theta}$ does not vary smoothly with $\vecfnt{\alpha}$ because rescaling the elements of $\vecfnt{X}$ by nonzero constants does not change its mutual information with $Y.$ Thus, we instead use the stochastic relaxed loss function
\begin{equation}
    \ell_{\mathrm{relaxed}}(\tilde{\vecfnt{\eta}}, \vecfnt{\theta}, \vecfnt{x}, y, \vecfnt{\alpha}) \defeq \log \Phi(y \mid \vecfnt{x} \odot \vecfnt{\alpha} + \vecfnt{\varepsilon} \odot (\vecfnt{1} - \vecfnt{\alpha}); \vecfnt{\theta}) \quad \text{where} \quad \vecfnt{\varepsilon} \sim \mathcal{N}(0, 1)^T.
\end{equation}

\begin{algorithm}
    \DontPrintSemicolon
    \SetKwInput{KwInput}{Input}
    \SetKwInput{KwOutput}{Output}
    \SetKwFunction{getgammat}{getOcclProbLogits}
    \SetKwFunction{concrete}{sampleFromCONCRETE}
    \SetKwFunction{getcrossentropy}{getMaskedCrossEntropy}
    \SetKwFunction{sampledatapoint}{sampleDatapoint}
    \SetKwFunction{optstep}{OptStep}
    \SetKwProg{func}{Function}{}{}
    \KwInput{Dataset $\setfnt{D} \defeq \{(\vecfnt{x}^{(n)}, y^{(n)}): n=1, \dots, N\} \subseteq \R^T \times \setfnt{Y}$, mask-conditional classifier architecture $\tilde{\Phi}_\cdot: \R^T \times [0, 1]^T \to \R^{\lvert\setfnt{Y}\rvert}$, initial classifier weights $\vecfnt{\theta}_0 \sim \R^P,$ initial pre-constraint occlusion logits $\tilde{\vecfnt{\eta}}_0 \in \R^T,$ occlusion budget $\overline{\gamma} \in (0, 1)$}
    \KwOutput{Per-timestep `leakiness' estimate $\vecfnt{\gamma}^* \in [0, 1]^T$}
    \BlankLine
    \func{\getgammat\upshape{($\tilde{\vecfnt{\eta}} \in \R^T$: pre-constraint logits of occlusion probabilities)}}{
        \Return $\tilde{\vecfnt{\eta}} - \logsumexp(\tilde{\vecfnt{\eta}}) + \log T + \log \overline{\gamma} - \log(1-\overline{\gamma})$\;
    }
    \func{\concrete\upshape{($\tilde{\vecfnt{\gamma}} \in (0, 1)^T$: logits of occlusion probabilities)}}{
        \Return $\sigmoid(\logsigmoid(\tilde{\vecfnt{\gamma}}) - \logsigmoid(-\tilde{\vecfnt{\gamma}}) + \log \vecfnt{u} - \log(\vecfnt{1}-\vecfnt{u})), \; \vecfnt{u} \sim \mathcal{U}(0, 1)^T$\;
    }
    \func{\getcrossentropy\upshape{($\vecfnt{\theta} \in \R^P$: classifier weights, $(\vecfnt{x}, y) \in \R^T \times \setfnt{Y}$: input and label, $\vecfnt{\alpha} \in [0, 1]^T$: relaxed input mask)}}{
        \Return $\logsoftmax \tilde{\Phi}_{\vecfnt{\theta}}(y \mid (\vecfnt{1}-\vecfnt{\alpha}) \odot \vecfnt{x} + \vecfnt{\alpha} \odot \vecfnt{\varepsilon}, \vecfnt{1} - \vecfnt{\alpha}), \; \vecfnt{\varepsilon} \sim \mathcal{N}(0, 1)^T$\;
    }
    \BlankLine
    \For{\upshape{$t=0, 1, \dots$ until convergence}}{
        $(\vecfnt{x}_t, y_t) \gets \sampledatapoint(\setfnt{D})$\;
        $\tilde{\vecfnt{\gamma}}_t \gets \getgammat(\tilde{\vecfnt{\eta}}_t)$\;
        $\vecfnt{\alpha}_t \gets \concrete(\tilde{\vecfnt{\gamma}}_t)$\;
        $\ell_t \gets \getcrossentropy(\vecfnt{\theta}_t, (\vecfnt{x}_t, y_t), \vecfnt{\alpha}_t)$\;
        $\vecfnt{g}^\theta_t \gets \nabla_{\vecfnt{\theta}} \ell_t, \;\;\vecfnt{g}^{\tilde{\eta}}_t \gets -\nabla_{\vecfnt{\tilde{\eta}}} \ell_t, \;\;\vecfnt{\theta}_{t+1} \gets \optstep(\vecfnt{\theta}_t, \vecfnt{g}^\theta_t), \;\;\tilde{\vecfnt{\eta}}_{t+1} \gets \optstep(\tilde{\vecfnt{\eta}}_t, \vecfnt{g}^{\tilde{\eta}}_t)$\;
    }
    \Return $\sigmoid \getgammat(\tilde{\vecfnt{\gamma}}_{t+1})$\;
\caption{Simplified implementation of our Adversarial Leakage Localization (\all{}) algorithm.}
\label{alg:implementation}
\end{algorithm}

In Alg. \ref{alg:implementation} we provide pseudocode for a practical implementation of \all{}, omitting details such as minibatch use for clarity. Also refer to \href{https://anonymous.4open.science/r/learning_to_localize_leakage-420B/src/self_contained_example.ipynb}{this link} for a minimal self-contained PyTorch \citep{paszke2019} implementation. 

\clearpage
\section{Extended experimental details and results}\label{appsec:results}
\subsection{Toy setting where our method succeeds and prior work fails}\label{appsec:toy_setting}

\begin{figure}[th]
    \centering
    \includegraphics[width=0.6\textwidth]{figures/toy_gaussian_plot.pdf}
    \caption{A toy setting where \all{} (ours) significantly outperforms baselines. We sample 1 non-leaky feature and $D$ second-order leaky pairs, then plot the false negative rate, defined as the proportion of points incorrectly assigned leakiness less than or equal to that of the non-leaky point, as we increase $D.$ \all{} (ours) succeeds for $D$ up to $64\times$ higher the best prior deep learning-based approach, and the first-order parametric methods completely fail in this setting. Dots denote median and error bars denote min--max over 5 random seeds.}
    \label{fig:appendix_toy_gaussian_experiments}
\end{figure}

As previously discussed, first-order parametric statistics-based methods are insensitive to associations of order 2 or higher. Prior deep learning-based leakage localization algorithms tend to exploit few of the available $2^{T-1}$ associations between $X_t$ and $Y$ given subsets of $\{X_1, \dots, X_T\} \setminus \{X_t\},$ with many of them using only the maximal conditioning set $\{X_1, \dots, X_T\} \setminus \{X_t\}$ itself. This creates issues when there is a large number of leaky measurements and the individual contribution of each is `drowned out' in the sense that $\mi[Y; X_t \mid \{X_1, \dots, X_T\} \setminus \{X_t\}]$ vanishes. Here we construct a simple setting where both of these issues are present, and demonstrate that \all{} succeeds whereas the prior approaches face issues.

We generate a sequence of binary-label $2D+1$-feature classification datasets consisting of ordered pairs $(\vecfnt{X},\,Y)$ sampled independently as follows:
\begin{gather}
    Y \sim \mathcal{U}\{0, 1\},\, R \sim \mathcal{U}\{0, 1\},\, M_i \sim \mathcal{U}\{0, 1\}:\,i=1, \dots, D, \\
    X_R \sim \mathcal{N}(R, 1),\, X_{M_i} \sim \mathcal{N}(M_i, 1),\, X_{Y \oplus M_i} \sim \mathcal{N}(Y \oplus M_i, 1):\,i=1, \dots, D.
\end{gather}
Here we denote by $\oplus$ the exclusive-or operation and $\vecfnt{X} \equiv (X_R, X_{M_1}, X_{Y \oplus M_1}, \dots X_{M_D}, X_{Y \oplus M_D}).$ Intuitively, we can view $X_R$ as a noisy observation of $R$ and each $X_{M_i},\,X_{Y \oplus M_i}$ as noisy observations of $M_i,$ $Y \oplus M_i,$ respectively. Here the variable $Y$ is analogous to a targeted sensitive variable, and the values $(M_i, Y \oplus M_i)$ are analogous to the pairs of second-order leaky variables which arise in Boolean masked implementations such as \cite{benadjila2020}.

Clearly $R,$ and thus $X_R,$ tells us nothing about $Y.$ Additionally, the values $M_i$ \textit{in isolation} tell us nothing about $Y.$ Similarly, the values $Y \oplus M_i$ in isolation tell us nothing about $Y$ because
\begin{equation}
    \mathbb{P}(Y = 0 \mid Y \oplus M_i = 0) = \mathbb{P}(M_i = 0) = \tfrac{1}{2} = \mathbb{P}(M_i = 1) = \mathbb{P}(Y = 1 \mid Y \oplus M_i = 0)
\end{equation}
and similarly
\begin{equation}
    \mathbb{P}(Y = 0 \mid Y \oplus M_i = 1) = \mathbb{P}(M_i = 1) = \tfrac{1}{2} = \mathbb{P}(M_i = 0) = \mathbb{P}(Y = 1 \mid Y \oplus M_i = 1).
\end{equation}
Despite this, given the \textit{pair} of values $\{M_i, Y \oplus M_i\}$ we can recover $Y$ via the identity 
\begin{equation}
    M_i \oplus M_i = 0 \implies (Y \oplus M_i) \oplus M_i = Y.
\end{equation}
Thus, we would like a leakage localization algorithm to indicate that $X_R$ is non-leaky and the values $X_{M_i},\,X_{Y \oplus M_i}$ are leaky.

All experiments use the hyperparameters shown in Table \ref{tab:toy_gaussian_hparams}.
\begin{table}[h]
    \centering
    \caption{Hyperparameters used for our toy Gaussian dataset experiments. We use the default PyTorch settings except where specified. $^\dagger$These hyperparameters apply only to \all{}. Other hyperparameters are used both for \all{} and for our baseline methods.}
    \begin{tabular}{lc}
        \toprule
        \textbf{Hyperparameter} & \textbf{Value} \\
        \midrule
        Classifier architecture & ReLU MLP with $1 \times 500$-neuron hidden layer \\
        Classifier optimizer & AdamW \\
        Classifier learning rate & $10^{-4}$ \\
        Classifier weight decay & 0.01 \\
        Dataset size & 10k \\
        Training steps & 5k \\
        Minibatch size & 800 \\
        Noise distribution learning rate$^{\dagger}$ & $10^{-3}$ \\
        Budget $\overline{\gamma}$$^{\dagger}$ & $1 - 2^{-0.1 \cdot D - 1}$ \\
        \bottomrule
    \end{tabular}
    \label{tab:toy_gaussian_hparams}
\end{table}

Results can be seen in Fig. \ref{fig:appendix_toy_gaussian_experiments}. We measure the performance of methods by the percent of measurements in $\{X_{M_i}: i=1, \dots, D\} \cup \{X_{Y \oplus M_i}: i=1, \dots, D\}$ assigned a leakiness greater than or equal to that of $X_R.$ For clarity we report for each $D$ the best result out of \blsnr{}, \blsosd{} and \blcpa{} as `best parametric', out of \blgradvis{}, \blsaliency{}, \blinputxgrad{}, and \bllrp{} as `best gradient-based', and the best result out of $\{2m+1: m \in \zint{0}{24},\,m < 2D+1\}$-Occlusion as `best \blmoccl{}'. Due to their high cost we use \bloccpoi{} only for $D$ up to $256,$ and we do not use \blmocclso{}. Note that in subsequent experiments \blmocclso{} does not significantly outperform first-order occlusion, and \bloccpoi{} performs poorly due to identifying only a small number of leaky measurements rather than assigning a leakiness to every measurement.

\subsection{Simulated AES datasets where we have ground truth knowledge about leakage}\label{appsec:synthetic_aes}

Here we present experiments done on synthetic AES power traces. These are a useful complement to the experiments on real datasets because 1) here we have ground truth knowledge about which timesteps are leaking, which we can use to validate our model's output, 2) we can generate infinitely-large datasets to eliminate dataset size as a confounding variable in results, and 3) we can observe the change in our technique's behavior as we individually vary particular dataset properties such as low-pass filtering strength and leaky instruction count.

\subsubsection{Data generation procedure}

\begin{algorithm}
    \DontPrintSemicolon
    \newlength{\inpwidth}
    \settowidth{\inpwidth}{Output: }
    \SetKwInput{KwInput}{Input}
    \SetKwInput{KwOutput}{Output}
    \KwInput{\\\vspace{-\baselineskip}
        \hspace{\inpwidth} Dataset size $N \in \Z_{++}$,\\
        \hspace{\inpwidth} Timesteps per power trace $T \in \Z_{++}$,\\
        \hspace{\inpwidth} Bit count $n_{\mathrm{bits}} \in \Z_{++}$,\\
        \hspace{\inpwidth} Operation count $n_{\mathrm{ops}} \in \Z_{++}$,\\
        \hspace{\inpwidth} Data-dependent noise variance $\sigma^2_{\mathrm{data}} \in \R_+$,\\
        \hspace{\inpwidth} Operation-dependent noise variance $\sigma^2_{\mathrm{op}} \in \R_+$,\\
        \hspace{\inpwidth} Residual noise variance $\sigma^2_{\mathrm{resid}} \in \R_+$,\\
        \hspace{\inpwidth} Low-pass filtering strength $\beta \in [0, 1)$,\\
        \hspace{\inpwidth} Leaking timestep count $n_{\mathrm{lkg}} \in \Z_+$\\
    }
    \KwOutput{Synthetic dataset $\setfnt{D} \subseteq \R^T \times \zint{0}{2^{n_{\mathrm{bits}}}-1}$}
    \BlankLine
    $\{k^{(n)}: n \in \zint{1}{N}\} \sim \mathcal{U}\left(\{0, 1\}^{n_{\mathrm{bits}}}\right)^N$ \tcp*{AES keys}
    $\{w^{(n)}: n \in \zint{1}{N}\} \sim \mathcal{U}\left(\{0, 1\}^{n_{\mathrm{bits}}}\right)^N$ \tcp*{plaintexts}
    $\{o_t: t \in \zint{1}{T}\} \sim \mathcal{U}\left(\zint{1}{n_{\mathrm{ops}}}\right)^N$ \tcp*{operations}
    $\{\tilde{x}_{\mathrm{op},\,i}: i \in \zint{1}{n_{\mathrm{ops}}}\} \sim \mathcal{N}(0, \sigma_{\mathrm{op}}^2)^{n_{\mathrm{ops}}}$ \tcp*{operation-dependent power consumption}
    $\setfnt{T}_{\mathrm{lkg}} \sim \mathcal{U}\begin{pmatrix} \zint{1}{T} \\ n_{\mathrm{lkg}} \end{pmatrix}$ \tcp*{timesteps of leaky instructions}
    \For{$n \in \zint{1}{N}$}{
        $y^{(n)} \gets \Sbox(k^{(n)} \oplus w^{(n)})$ \tcp*{targeted variable: first SubBytes output}
        $\vecfnt{x}_{\mathrm{resid}}^{(n)} \sim \mathcal{N}(0, \sigma_{\mathrm{resid}}^2)^T$ \tcp*{residual power consumption}
        \For{$t \in \setfnt{T}_{\mathrm{lkg}}$}{
            $d_t^{(n)} \gets y^{(n)}$ \tcp*{leaky timesteps: targeted variable is the data}
        }
        \For{$t \in \zint{1}{T} \setminus \setfnt{T}_{\mathrm{lkg}}$}{
            $d^{(n)}_t \sim \mathcal{U}\left(\{0, 1\}^{n_{\mathrm{bits}}}\right)$ \tcp*{rest of data treated as random}
        }
        \For{$t \in \zint{1}{T}$}{
            $x_{\mathrm{data},\,t}^{(n)} \gets \sigma_{\mathrm{data}}\left(4 - \hammingweight(d_t^{(n)})\right) / \sqrt{2}$ \tcp*{data-dependent power consumption}
        }
        $\vecfnt{x}^{(n)} \gets \vecfnt{x}_{\mathrm{data}}^{(n)} + \vecfnt{x}_{\mathrm{op}} + \vecfnt{x}_{\mathrm{resid}}^{(n)}$ \tcp*{total power consumption}
        \For{$t \in \zint{2}{T}$}{
            $x^{(n)}_t \gets \beta x^{(n)}_{t-1} + (1-\beta) x^{(n)}_t$ \tcp*{Discrete low-pass filtering of $\vecfnt{x}^{(n)}$.}
            \tcp{In practice we prepend `burn-in' timesteps to allow transient effects to decay.}
        }
    }
    \Return $\left\{ (\vecfnt{x}^{(n)}, y^{(n)}): n \in \zint{1}{N} \right\}$\;
\caption{Pseudocode for our synthetic data generation procedure, based on the Hamming weight leakage model of \cite{mangard2007}. For clarity we omit the random delay and shuffling procedures, but these are straightforward and may be found in our code.}
\label{alg:synthetic_data_pseudocode}
\end{algorithm}

\begin{figure}
    \centering
    \includegraphics[width=.8\linewidth]{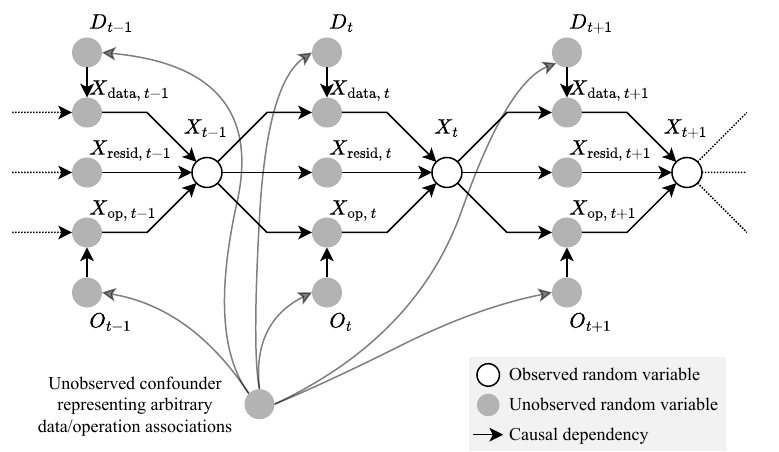}
    \caption{Causal diagram representing the assumed data-generating process for our synthetic AES datasets. We assume this is an I-map for the data-generating process -- i.e. we assume independence conditions present in the diagram hold for the data-generating process, but the data-generating process might have additional independence conditions not reflected here. This is an effort to make more precise the Hamming weight leakage model of \cite[ch.~4]{mangard2007}. We represent the data of the AES algorithm by the random variables $D_t,$ its operations $O_t$, and its power consumption $X_t,$ which is further broken down into data-dependent power consumption $X_{\mathrm{data},\,t},$ operation-dependent power consumption $X_{\mathrm{op},\,t},$ and `residual' power consumption $X_{\mathrm{resid},\,t}$ (e.g. due to random noise, other processes running in parallel, etc.). We assume that only the composite power consumption $X_t$ is observed.}
    \label{fig:synthetic_causal_diagram}
\end{figure}

We base our synthetic data generation procedure on the Hamming weight leakage model of \cite[ch.~4]{mangard2007}, which we will subsequently describe.\footnote{For clarity we alter the notation and explicitly define a causal structure for the data-generating process. The decomposition of power consumption in Eqn. \ref{eqn:magnard_power_decomp} and the definition of $X_{\mathrm{data},\,t}$ in terms of the Hamming weight of the data are based on \cite{mangard2007}, but the additional details are our own.} As above, let us represent our power/EM traces as a random vector $\vecfnt{X} \defeq (X_t: t=1, \dots, T)$ with range $\R^T.$ We represent the cryptographic algorithm as sequences of data $\vecfnt{D} \defeq (D_t: t=1, \dots, T)$ and operations $\vecfnt{O} \defeq (O_t: t=1, \dots, T),$ where each $D_t$ has range $\{0, 1\}^{n_{\mathrm{bits}}}$ and each $O_t$ has range $\zint{1}{n_{\mathrm{ops}}}$ for some $n_{\mathrm{bits}}, n_{\mathrm{ops}} \in \Z_{++}.$ For each $t \in \zint{1}{T},$ we can decompose
\begin{equation}\label{eqn:magnard_power_decomp}
    X_t = X_{\mathrm{data},\,t} + X_{\mathrm{op},\,t} + X_{\mathrm{resid},\,t}
\end{equation}
with dependency structure illustrated in the causal diagram of Fig. \ref{fig:synthetic_causal_diagram}. Here we have represented by $X_{\mathrm{data},\,t}$ the data-dependent component of power consumption, $X_{\mathrm{op},\,t}$ the operation-dependent component of power consumption, and $X_{\mathrm{resid},\,t}$ the `residual' power consumption due to random noise, other processes running simultaneously on the same hardware, etc.

\cite{mangard2007} experimentally characterized the power consumption of a cryptographic device and found that it is reasonable to approximate $X_{\mathrm{data},\,t}$ as Gaussian noise with $D_t$-dependent mean, $X_{\mathrm{op},\,t}$ as Gaussian noise with $O_t$-dependent mean, and $X_{\mathrm{resid},\,t}$ as Gaussian noise with constant mean (which we will assume to be zero because it contains no information and is thus irrelevant). For their device, they found that the mean of $X_{\mathrm{data},\,t}$ was roughly proportional to $n_{\mathrm{bits}} - \hammingweight(D_t)$ where $\hammingweight: \{0, 1\}^{n_{\mathrm{bits}}} \to \zint{0}{n_{\mathrm{bits}}}: \vecfnt{x} \mapsto \sum_{k=1}^{n_{\mathrm{bits}}} x_k.$ We adopt these approximations for our synthetic dataset experiments. See Alg. \ref{alg:synthetic_data_pseudocode} for pseudocode giving a simplified version of our data generation procedure, and refer to our code for full details.

We simulate several factors of variation which can reasonably be expected to occur in realistic settings. We apply discrete-time low-pass filtering to the power traces, to simulate the low-pass filtering which occurs due to measurement apparatus as well as the fact that power consumption does not change instantaneously in real circuits. We allow for the presence of multiple leaky instructions. Additionally, we simulate random delays to the leaky instruction which might result from countermeasures such as random no-op insertion \citep{coron2009}, and random `shuffling' where the leaky instruction is randomly placed at one of several points in time, as done in \cite{masure2023}.

We emphasize that these approximations are specific to the device studied by \cite{mangard2007} and may hold to a limited extent or not at all for other devices. For example, the Hamming weight dependence of power consumption stems from the fact that their device `pre-charges' all its data bus lines to 1, then drains the charge from the lines which should represent 0, thereby consuming power proportional to the number of lines which represent 0. Many devices operate differently, and cryptographic hardware is often designed with the explicit goal of obfuscating this data/power consumption dependence. Thus, while we expect all real cryptographic devices to have some exploitable dependence between data and power consumption given sufficient quality and quantity of data, in many cases the nature of the dependence will likely elude a simple characterization such as this.

\subsubsection{Experimental details}

We run experiments on many variations of this dataset and verify that \all{} produces outputs which align with our expectations. For all these experiments, our classifier $\Phi_{\vecfnt{\theta}}$ is a 3-layer multilayer perceptron with hidden dimension $500,$ ReLU activations, an input dropout rate of 0.1, hidden dropout rate of 0.2, and pre-logits dropout rate of 0.3. We generate data continuously so that our dataset size is effectively infinite. Additional hyperparameters are listed in table \ref{tab:synthetic_hyperparameters}, and default settings corresponding to Alg. \ref{alg:synthetic_data_pseudocode} are listed in table \ref{tab:synthetic_dataset_default_settings}, which we use except where otherwise stated.
\begin{table}[h]
    \centering
    \caption{List of hyperparameters used for experiments on synthetic AES datasets. We use the default PyTorch settings unless otherwise stated. Note that in general for \all{} classfiers we disable input dropout, but for these experiments the dropout rate was set to 0.1 due to an oversight.}
    \begin{tabular}{ll}
        \toprule
        \textbf{Hyperparameter} & \textbf{Value} \\
        \midrule
        Classifier architecture & ReLU MLP with $3\times 500$-neuron hidden layers \\
        Input dropout & 0.1 \\
        Hidden dropout & 0.2 \\
        Output dropout & 0.3 \\
        Optimizer for both $\vecfnt{\theta}$ and $\tilde{\vecfnt{\eta}}$ & \verb|torch.optim.AdamW| \\
        Weight decay for $\vecfnt{\theta}$ & 0.01 for weights, 0 for biases \\
        Weight decay for $\tilde{\vecfnt{\eta}}$ & 0 \\
        Training steps & $10^4$ \\
        Minibatch size & $10^{3}$ \\
        Noise budget $\overline{\gamma}$ & 0.5 \\
        Weight initializer & \verb|torch.nn.init.xavier_uniform_| \\
        \bottomrule
    \end{tabular}
    \label{tab:synthetic_hyperparameters}
\end{table}
\begin{table}[h]
    \centering
    \caption{Default synthetic AES dataset configuration, corresponding to the inputs of Alg. \ref{alg:synthetic_data_pseudocode}. Subsequent experiments will use these settings unless otherwise stated.}
    \begin{tabular}{ll}
        \toprule
        \textbf{Setting} & \textbf{Value} \\
        \midrule
        Dataset size $N$ & $\infty$ \\
        Timesteps per power trace $T$ & 101 \\
        Bit count $n_{\mathrm{bits}}$ & 8 \\
        Operation count $n_{\mathrm{op}}$ & 32 \\
        Data-dependent noise variance $\sigma^2_{\mathrm{data}}$ & 1.0 \\
        Operation-dependent noise variance $\sigma^2_{\mathrm{op}}$ & 1.0 \\
        Residual noise variance $\sigma^2_{\mathrm{resid}}$ & 1.0 \\
        Number of leaky instructions $n_{\mathrm{lkg}}$ & 1 \\
        Low-pass filtering strength $\beta$ & 0.5 \\
        Maximum random delay size & 0 \\
        Possible leaky timestep location count (i.e. shuffling) & 1 \\
        \bottomrule
    \end{tabular}
    \label{tab:synthetic_dataset_default_settings}
\end{table}

\subsubsection{Results}

\begin{figure}[ht]
    \centering
    \includegraphics[width=\linewidth]{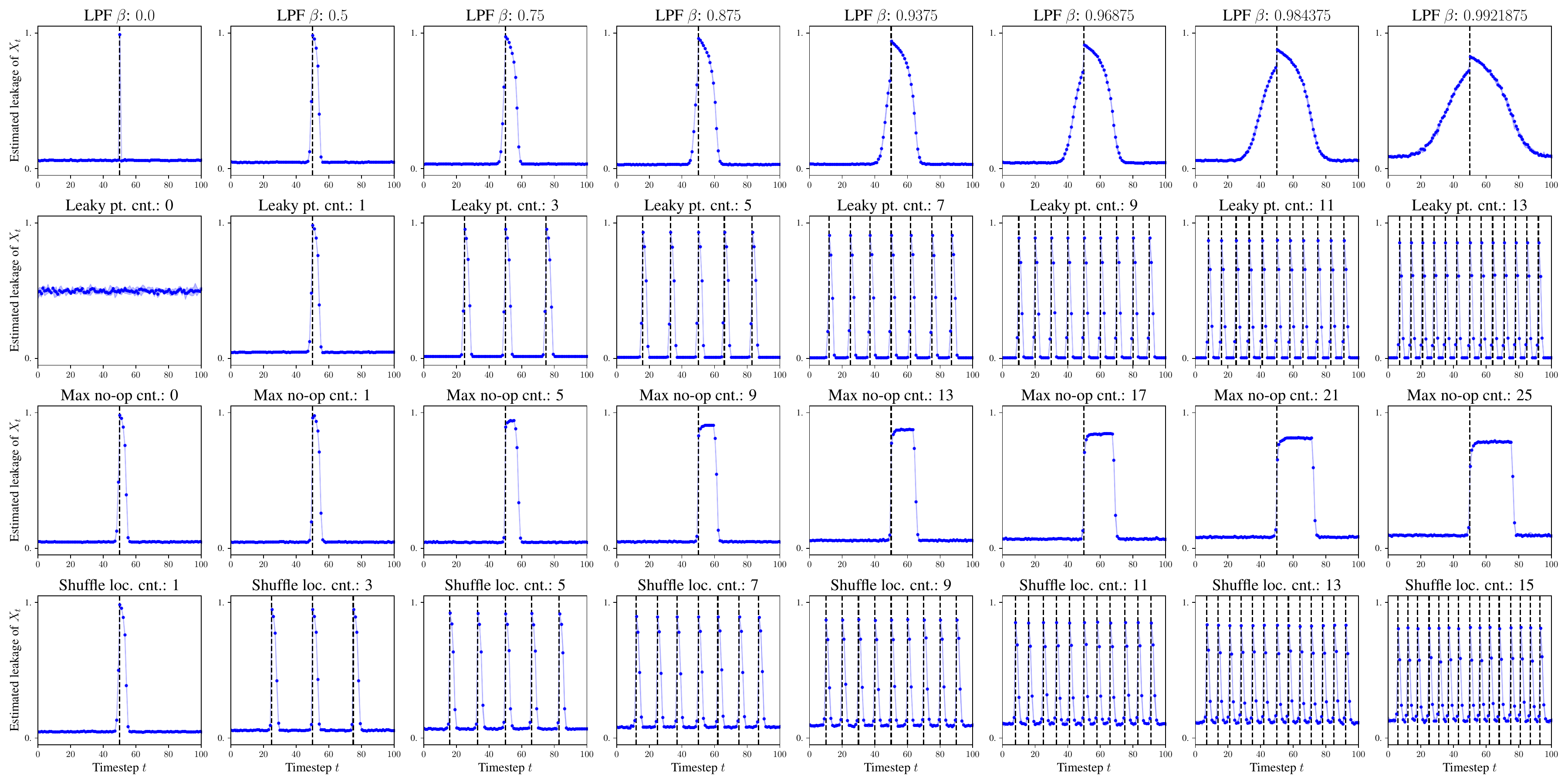}
    \caption{Results of applying \all{} (ours) to synthetic AES datasets with varying parameters. The estimated leakage by \all{} is denoted by \textcolor{blue}{blue} dots and the ground truth timestep of the leaking instruction is denoted by the vertical black dotted lines. Blue dots denote mean and shading denotes median over 5 random seeds. Note that the \all{} output is consistent with the ground truth leaky instruction timestep in all cases, and the variance between runs is quite low in the infinite data regime. \textbf{(first row)} Increasing low pass filtering strength $\beta$ from left to right. \textbf{(second row)} Increasing number of leaky instructions from left to right. \textbf{(third row)} Increasing random delay insertion from left to right. \textbf{(fourth row)} Increasing number of possible shuffling locations for the leaky point from left to right.}
    \label{fig:full_synthetic_data_experiments}
\end{figure}

We vary 4 parameters of the data generation process of Alg. \ref{alg:synthetic_data_pseudocode} and observe its effect on the output of \all{}. We find that \all{} consistently produces results consistent with our expectations given the timestep(s) of the leaky instruction(s), and that the variance of its output is quite low in this context where we have an infinitely-large dataset and a long training duration. Here we describe the parameters being swept and justify the output of \all{} given this.

\paragraph{Low pass filtering strength $\beta$} Recall that we are discrete low-pass filtering traces via the recursive function $x_t^{\mathrm{lpf}} \defeq (1-\beta) x_t + \beta x_{t-1}.$ See the first row of Fig. \ref{fig:full_synthetic_data_experiments}, where from left to right $\beta$ takes on the values $0,\,0.5,\,0.75,\,0.875,\,0.9375,\,0.96875,\,0.984375,\,0.9921875.$ Note that the peak estimated leakiness always corresponds to the ground-truth leaky instruction timestep. As we increase $\beta$ we see that measurements to the left and right are assigned high leakiness as well, which makes sense for the following reason:

Let us denote by $t^*$ the timestep at which the leaky instruction was executed. We then have $X_{t^*} = c_1 \hammingweight(Y) + c_2 U_{t^*} + c_3 X_{t^*-1}$ where $U_{t^*}$ denotes a `residual' random variable independent of $Y,$ and $c_1,\,c_2$ are appropriate constants. Note that $X_{t^*+1} = \beta X_{t^*} + (1-\beta)U_{t^*+1} = \beta c_1 \hammingweight(Y) + \beta c_2 U_{t^*} + (1-\beta) U_{t^*+1},$ so $X_{t^*+1}$ also leaks $Y.$ Recursively it is clear that the same can be said for $X_{t^*+2},\,X_{t^*+3},\dots$. Less-intuitively, $X_{t^*-1}$ also leaks $Y$. This is because although $X_{t^*-1}$ is \textit{marginally} independent of $Y,$ $X_{t^*} - c_3 X_{t^*-1}$ has a higher correlation with $\hammingweight(Y)$ than $X_{t^*}$ does -- i.e. $X_{t^*-1}$ is dependent on the $Y$-independent noise of $X_{t^*},$ and can be used to reduce the noise. Recursively, since $X_{t^*-2}, X_{t^*-3},\dots$ are correlated with $X_{t^*-1},$ they also leak $Y$ by the same mechanism.

\paragraph{Leaky point count} Here we sweep the number of leaky instructions. See the second row of Fig. \ref{fig:full_synthetic_data_experiments} where from left to right the leaky point count $n_{\mathrm{lkg}}$ takes on the values $0,\,1,\,3,\,5,\,7,\,9,\,11,\,13.$ As expected, all leaky instruction timesteps correspond to a peak of \all{}-estimated leakiness.

\paragraph{Random delay size} Here we insert random delays -- i.e. instead of always occurring at time $t,$ the leaky instruction occurs at $t + u$ where $u \sim \mathcal{U}\{0, \dots, d_{\mathrm{max}}.$ See the third row of Fig. \ref{fig:full_synthetic_data_experiments} where from left to right $d_{\mathrm{max}}$ takes on values $0,\,1,\,5,\,9,\,13,\,17,\,21,\,25.$ We see that the estimated leakiness becomes `spread out' over the interval $\zint{t}{t+d_{\mathrm{max}}}.$

\paragraph{Shuffle location count} Here we randomly `shuffle' the leaky instruction -- i.e. instead of always occurring at time $t,$ the leaky instruction occurs at $t' \sim \mathcal{U}\{t_1, \dots, t_{n_{\mathrm{shuff}}}\}.$ See the fourth row of Fig. \ref{tab:synthetic_dataset_default_settings} where from left to right $n_{\mathrm{shuff}}$ takes on values $1,\,3,\,5,\,7,\,9,\,11,\,13,\,15.$ We see that the leakiness becomes `spread out' over the set of timesteps at which the leaky instruction might occur.

\subsection{Experiments on real power and EM radiation leakage datasets}\label{appsec:real_experiments}

Here we run experiments on a variety of publicly-available side-channel attack datasets, where we attempt to localize leakage of their canonical target variable.

\subsubsection{Datasets}\label{appsec:datasets}

\begin{table}[h]
    \centering
    \caption{A list of the datasets used in our paper, with a summary of their salient attributes. Note that our experiments cover a variety of settings: AES, RSA and ECC implementations on both microcontrollers (MCUs) and a field-programmable gate array (FPGA), both power and EM radiation traces, and various types of countermeasures. For all datasets we localize leakage of the canonical target variable. We denote by subscripts the targeted byte of the variable. We denote by $k_n,$ $w_n,$ $m_n,$ $k^*_n,$ $c_n$ the $n$-th byte (counting from 0) of the AES key, plaintext, mask, last round key, and ciphertext, respectively. $^\dagger$As described below, we deviate from the canonical profiling/attack split.}
    \resizebox{\linewidth}{!}{\begin{tabular}{llll}
        \toprule
        \textbf{Dataset} & ASCADv1 (fixed key) & ASCADv1 (variable key) & DPAv4 (Zaid version) \\
        \textbf{Citation} & \cite{benadjila2020} & \cite{benadjila2020} & \cite{bhasin2014, zaid2020} \\
        \textbf{Link} & \href{https://github.com/ANSSI-FR/ASCAD}{(here)} & \href{https://github.com/ANSSI-FR/ASCAD}{(here)} & \href{https://github.com/gabzai/Methodology-for-efficient-CNN-architectures-in-SCA}{(here)} \\
        \textbf{Algorithm} & AES-128 & AES-128 & AES-128 \\
        \textbf{Hardware} & ATMega8515 (MCU) & ATMega8515 (MCU) & ATMega163 (MCU) \\
        \textbf{Emission measured} & Power & Power & Power \\
        \textbf{Countermeasures} & Boolean masking & Boolean masking & Rotating Sbox mask (known) \\
        \textbf{Targeted variable} & $\Sbox(k_3 \oplus w_3)$ & $\Sbox(k_3 \oplus w_3)$ & $\Sbox(k_0 \oplus w_0)$ \\
        \textbf{Dataset size} (profile/attack) & 50k/10k & 200k/100k & 3k/500$^{\dagger}$ \\
        \textbf{Feature count $T$} & 0.7k & 1.4k & 4k \\
        \midrule\midrule
        \textbf{Dataset} & AES-HD & One Trace is All it Takes (OTiAiT) & One Truth Prevails (OTP) (1024-bit) \\
        \textbf{Citation} & \cite{bhasin2020} & \cite{weissbart2019} & \cite{saito2022} \\
        \textbf{Link} & \href{https://github.com/AISyLab/AES_HD}{(here)} & \href{https://github.com/leoweissbart/MachineLearningBasedSideChannelAttackonEdDSA}{(here)} & \href{https://github.com/ECSIS-lab/one_truth_prevails}{(here)} \\
        \textbf{Algorithm} & AES-128 & EdDSA w/ Curve2559 & 1024-bit RSA-CRT \\
        \textbf{Hardware} & XiLinx Virtex-5 (FPGA) & STM32F4 (MCU) & STM32F4 (MCU) \\
        \textbf{Emission measured} & EM radiation & Power & EM radiation \\
        \textbf{Countermeasures} & None & None & Dummy load \\
        \textbf{Targeted variable} & $\Sbox^{-1}(k_{11}^* \oplus c_{11}) \oplus c_7$ & Ephemeral key nibble & Dummy load? \\
        \textbf{Dataset size} (profile/attack) & 50k/25k & 5.12k/1.28k & 100k/98.304k$^{\dagger}$ \\
        \textbf{Feature count $T$} & 1.25k & 1k & 1k \\
        \bottomrule
    \end{tabular}}
    \label{tab:dataset_properties}
\end{table}

We compare \all{} with prior work on the 6 datasets described in Table \ref{tab:dataset_properties}, which consist of traces of power and EM radiation measurements and associated cryptographic variables recorded from real implementations. Note that we evaluate on AES, ECC and RSA implementations implemented on several MCUs and an FPGA with various target variables. \all{} as well as most of our baseline algorithms are in principle agnostic to most of these details, requiring only a supervised learning-style dataset with power traces and the associated value of the targeted variable as labels. Through these experiments we demonstrate that this is true in practice across a diverse array of settings.

Note that the ASCADv1 datasets have primarily second-order leakage due to their Boolean masking countermeasure, whereas the other 4 datasets have primarily first-order leakage. Our comparisons include both deep learning methods as well as simple first-order parametric methods which are widely used due to their low cost and interpretability. We find that the latter are competitive or superior to the deep learning methods on the first-order datasets but perform significantly worse on the second-order datasets due to failing to exploit second-order leakage. Our experiments do not compellingly show that deep learning methods improve on these simpler methods for first-order datasets, but we nonetheless include them as additional points of comparison between the deep learning methods and to show that \all{} works in a variety of settings.

In general we use the canonical target variable and profiling/attack dataset split (note that in the context of profiling side-channel analysis the training dataset is called the profiling dataset, and the test dataset is called the attack dataset). We deviate from the canonical dataset configuration in the following cases:
\begin{itemize}
    \item We find that the canonical attack dataset of DPAv4 is too small to compute useful oracle leakiness assessments. We thus concatenate the canonical 4.5k-trace profiling dataset and 0.5k-trace attack dataset into a single 5k-length dataset, and use the first 3k traces for profiling and the last 2k to compute the oracle assessments. Since some of our experiments require metadata which is only available for the attack dataset, for everything other than oracle assessment computation we use the canonical attack dataset and leave the remaining 1.5k traces unused.
    \item We use the version of DPAv4 which was preprocessed and distributed by \cite{zaid2020} \href{https://github.com/gabzai/Methodology-for-efficient-CNN-architectures-in-SCA}{here} rather than the original version. This version has shortened traces which have been cropped around the leaky instruction, and has the rotating Sbox mask effectively `disabled' by providing the masked SubBytes variable as the target.
    \item The One Truth Prevails (OTP) dataset consists of approximately 64M traces and has a high label imbalance. To save computational resources we extract a 100k-trace randomly-selected balanced subset, which we find is more than sufficient for strong supervised classification and leakage localization performance. See our code for details.
\end{itemize}

\subsubsection{Implementation details for the leakage localization algorithms}\label{app:method-implementation-details}

\begin{figure}[ht]
    \centering
    \includegraphics[width=\linewidth]{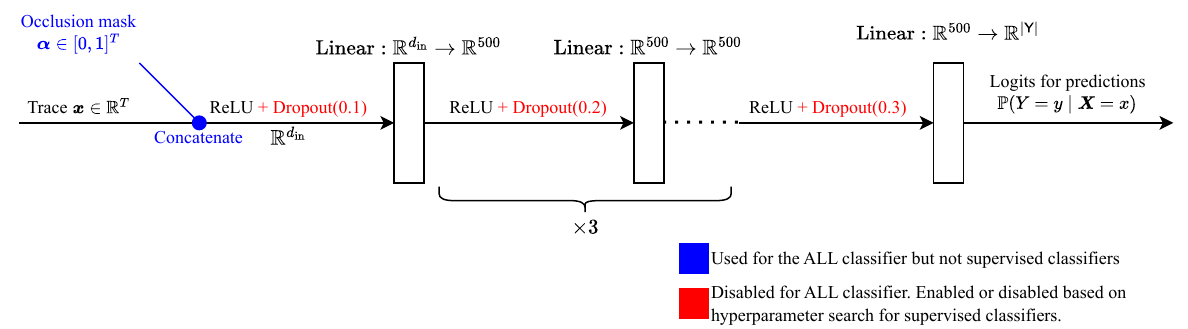}
    \caption{Diagram of the multilayer perceptron architecture used for classifiers in the deep learning methods, based on the architecture proposed in \cite{wang2017} for time-series classification.}
    \label{fig:classifier_architecture}
\end{figure}

For \all{} and all considered baseline methods the classifier uses the simple ReLU + Dropout MLP architecture of \cite{wang2017} shown in Fig. \ref{fig:classifier_architecture}, which has 3 500-neuron hidden layers, input dropout rate of 0.1, hidden dropout rate of 0.2, and output dropout rate of 0.3. For the deep learning baselines we enable or disable the input, hidden and output dropout based on our hyperparameter search outcome, and for \all{} we leave it disabled. The classifier for \all{} takes the occlusion mask as an auxiliary input, which we implement by concatenating it with the masked trace.

We also explored convolutional architectures, but preliminary experiments indicated that these achieved weaker classification and leakage localization performance across the board, as well as training more-slowly than the MLP architecture due to a higher layer count. We suspect that the inductive biases of convolutional layers are not useful for the datasets we consider. As a sanity check for this design choice, we run the deep learning baseline methods using both our MLP architecture and a handful of open-weight classifiers which were released with  \cite{wouters2020} on the datasets for which they are available.
% not doing the \cite{benadjila2020} right now -- bug I don't have time to fix

All deep learning methods are implemented in PyTorch \citep{paszke2019}. Most non-deep methods are implemented with Numpy \citep{harris2020}, with a handful of compute-intensive methods implemented with Numba \citep{lam2015}. We use Scipy \citep{virtanen2020} implementations of statistical methods where available.

We use the AdamW optimizer \citep{loshchilov2018, kingma2014} with the default PyTorch settings $\beta_1 = 0.9,$ $\beta_2 = 0.999,$ $\lambda = 0.01,$ $\epsilon = 10^{-8}$ and the learning rate chosen through hyperparameter search. Weight decay is applied only to the weights of the linear layers, not to the biases. We use a minibatch size of 256. Weights are initialized with the uniform Glorot initialization \citep{glorot2010} \verb|torch.nn.init.xavier_uniform_| (default for Keras) rather than the default PyTorch initialization, and we find that this is a critical detail -- many supervised classifier runs on the ASCADv1 datasets completely fail to generalize beyond the training dataset when the default PyTorch weight initialization is used. We randomly set aside 20\% of the profiling datasets for validation and use the remaining 80\% for training. We standardize the traces as $\vecfnt{x} \mapsto \tfrac{\vecfnt{x} - \vecfnt{\mu}}{\max(\vecfnt{\sigma}, 10^{-6})}$ where $\mu$ and $\vecfnt{\sigma}$ denote the elementwise sample mean and standard deviation computed using the profiling dataset.

In general we measure the performance of supervised classifiers with their \textit{mean rank} rather than their accuracy, as accuracy tends to be low and too-coarse in the context of side-channel analysis. Given a label $y \in \setfnt{Y}$ and predicted label distribution $\hat{p}_Y \in \Delta^{\lvert\setfnt{Y}\rvert-1}$ (e.g. the softmaxed output of a classifier neural net), we define the rank as the number of possible labels assigned at least as much probability mass as the true label, i.e.
\begin{equation}
    \operatorname{Rank}(\hat{p}_Y; y) \defeq \left\lvert\left\{ y' \in \setfnt{Y}: \hat{p}_Y(y') \geq \hat{p}_Y(y)  \right\}\right\rvert.
\end{equation}
This metric has range $\zint{1}{\lvert\setfnt{Y}\rvert},$ with lower being better.

\paragraph{Implementation of the baseline methods} We use Captum \citep{kokhlikyan2020} implementations of the \blsaliency{}, \blinputxgrad{}, \bllrp{}, and \blmoccl{} methods. We implement \blsnr{}, \blsosd{}, \blcpa{}, \blgradvis{}, and \blmocclso{} ourselves, and we implement a PyTorch version of \bloccpoi{} based on the Keras implementation released by the authors \href{https://github.com/trevor-yap/OccPoIs}{here}.

Note the following choices we have made in implementing and evaluating \bloccpoi{} \citep{yap2025}:
\begin{itemize}
    \item \bloccpoi{} differs from the other methods we consider in that rather than assigning a leakiness value to every measurement, it aims to identify a non-unique subset of measurements which are sufficient for a classifier to attain some specified performance level when all other measurements are occluded. Since our evaluation metrics require a leakiness value for every measurement, we assign a leakiness of 0 to measurements not identified by \bloccpoi{}.
    \item Their method uses the attack dataset for probing the classifier's sensitivity to input features, which introduces data contamination in the context of our performance metrics. We cannot easily use the validation dataset instead because the attack procedure requires having many traces corresponding to a fixed AES key, which is generally only available for the attack dataset. We find that \all{} significantly outperforms \bloccpoi{} despite this contamination, so as the aim of our work is to demonstrate the efficacy of \all{}, we allow \bloccpoi{} to use the attack dataset.
    \item Since OTiAiT and OTP do not have attack datasets which facilitate this kind of multi-trace prediction, for these datasets we simply use the mean rank of the classifier on the attack dataset as our performance metric.
    \item \cite{yap2025} proposes an extension of \bloccpoi{} which ranks the leakiness of the points it has identified using a 1-occlusion-like strategy. We use this extension in our implementation.
    \item \cite{yap2025} proposes an extension of \bloccpoi{} where they apply it repeatedly on the residual measurements not selected during the last iteration. We do not use this extension because it is very computationally expensive, requiring $O(T)$ applications of \bloccpoi{} which each require $\Omega(T)$ \textit{non-parallelizable} passes through the attack dataset. In preliminary experiments this extension performed better than basic \bloccpoi{}, but still far below the performance of the other considered methods despite requiring orders of magnitude more wall clock time.
    \item Similarly to \cite{yap2025}, to save compute we use only a subset of the attack datasets for classifier evaluation. We use 1.8k traces for ASCADv1-fixed, 2.8k for ASCADv1-variable, 140 for DPAv4, 25k for AES-HD, 100 for OTiAiT, and 100 for OTP. These are approximately $10$--$100\times$ the necessary number of traces to successfully attack the AES methods.
    \item \cite{yap2025} define their performance threshold to be that the classifier correctly predicts the AES key after accumulating all traces in the attack dataset, and we use the same threshold for the ASCADv1 datasets, AES-HD, and DPAv4. For the non-AES dataset OTiAiT and OTP, our threshold is that the mean rank of the classifier rises by 0.1 relative to its mean rank when no measurements are occluded.
\end{itemize}
As we will show, \bloccpoi{} attains significantly lower performance than other methods according to our performance metrics, due to not assigning a leakiness to most measurements. This reflects that the aim of \cite{yap2025} is somewhat different from the present work: \cite{yap2025} heavily emphasized the usefulness of \bloccpoi{} as a feature selection tool with leakage localization being an auxiliary goal, whereas our work is concerned solely with leakage localization. We consider \bloccpoi{} as a baseline because it is similar in spirit to our work, but these results demonstrate that it is ill-suited to the task considered in our paper.

We make the following choices in implementing \blmoccl{} and \blmocclso{} \citep{schamberger2023}:
\begin{itemize}
    \item \cite{schamberger2023} explore different ways to occlude measurements and conclude that it works well to replace measurements by their average value over the profiling dataset, whereas other heuristics such as replacing them with $0$ or with Gaussian noise works poorly. We thus replace measurements by their mean. Since we are element-wise standardizing the traces, this is the same as replacing them by $0.$
    \item \cite{schamberger2023} propose \blmocclso{} as a means of estimating the leakiness of \textit{pairs} of windows, which is useful for discerning whether a measurement has first-order leakage or is part of a second-order leaking pair of measurements. They do not propose a means of using it to estimate the leakiness of individual measurements. We choose to define the leakiness of $X_t$ as the \textit{average} leakiness of the pairs $\{\{X_t, X_{t'}\}: t' \in \zint{1}{T}\}.$
    \item \cite{schamberger2023} uses a large stride for \blmocclso{}, and thus get leakiness values for windows of measurements rather than single measurements. We set the stride to 1 for consistency with our other baselines.
    \item These methods introduce the occlusion window size $m$ as a new hyperparameter which must be tuned. For \blmoccl{} we tune $m$ by testing successive odd-numbered window sizes starting from 1 until oracle agreement performance starts decreasing, and using the window size which maximizes oracle agreement. We denote this as \blmoccls{}. Note that this introduces some data contamination, which we accept because our aim is to demonstrate the efficacy of \all{}, and \all{} outperforms \blmoccl{} despite the contamination.
    \item For \blmocclso{} we use the optimal value of $m$ for \blmoccl{}. We denote this \blmocclsos{}. We don't do another sweep because \blmocclso{} is very computationally-expensive, requiring $\Theta(T^2)$ passes through the dataset.
    \item Like \cite{yap2025}, \cite{schamberger2023} uses the attack dataset to evaluate the sensitivity of the classifier to occluding measurements. To avoid data contamination, and for compatibility with the OTiAiT and OTP datasets which do not have the same kind of attack dataset as the AES implementations, we track the average change in logits over the profiling dataset as we occlude measurements (similarly to \bloccl{}).
\end{itemize}

\subsubsection{Hyperparameter tuning procedure}\label{app:hyperparameter-tuning}

\begin{table}[h]
    \caption{Outcome of a 50-trial random hyperparameter search for the supervised classification models used by the deep learning baseline methods. All trials are early stopped at the point of lowest validation rank, and we choose the hyperparameter configuration which minimizes the lowest validation rank, with ties broken based on validation loss. Models are trained with AdamW and weight decay applied only to the weights of dense and conv1d layers. We use the default PyTorch settings everywhere unless otherwise stated.}
    \resizebox{\linewidth}{!}{\begin{tabular}{llcccccc}
        \toprule
        \textbf{Hyperparameter} & \textbf{Search space} & \multicolumn{6}{c}{\textbf{Selected value}} \\
        & & ASCADv1 (fixed) & ASCADv1 (variable) & DPAv4 (Zaid) & AES-HD & OTiAiT & OTP \\
        \cmidrule{3-8}
        Learning rate & $\bigcup_{m=1}^{9} \bigcup_{n=3}^{5} \{m \cdot 10^{-n}\}$ & $3\cdot 10^{-4}$ & $2 \cdot 10^{-4}$ & $4 \cdot 10^{-3}$ & $9 \cdot 10^{-5}$ & $8 \cdot 10^{-3}$ & $6 \cdot 10^{-3}$ \\
        LR schedule & $\{$constant, cos annealing$\}$ & cos annealing & cos annealing & constant & cos annealing & cos annealing & cos annealing \\
        Input dropout & $\{0.0, 0.1\}$ & $0.0$ & $0.0$ & $0.0$ & $0.0$ & $0.0$ & $0.1$ \\
        Hidden dropout & $\{0.0, 0.2\}$ & $0.2$ & $0.2$ & $0.2$ & $0.2$ & $0.2$ & $0.0$ \\
        Output dropout & $\{0.0, 0.3\}$ & $0.3$ & $0.0$ & $0.0$ & $0.3$ & $0.3$ & $0.0$ \\
        Training steps & n/a & 20k & 40k & 10k & 20k & 1k & 1k \\
        \midrule
        \multicolumn{2}{l}{\textbf{Classification performance of chosen model}} & & & & & & \\
        \multicolumn{2}{l}{Test rank $\downarrow$} & $101 \pm 1$ & $79.1 \pm 0.2$ & $5.6 \pm 0.7$ & $124.7 \pm 0.2$ & $1.010 \pm 0.007$ & $1.00171 \pm 0.00007$ \\
        \multicolumn{2}{l}{Test loss $\downarrow$} & $5.59 \pm 0.04$ & $5.380 \pm 0.009$ & $3.0 \pm 0.2$ & $5.65 \pm 0.04$ & $0.10 \pm 0.08$ & $0.0085 \pm 0.0006$ \\
        \multicolumn{2}{l}{Traces to AES key disclosure $\downarrow$} & $179 \pm 71$ & $288 \pm 126$ & $1.4 \pm 0.5$ & $4355 \pm 1915$ & n/a & n/a \\
        \bottomrule
    \end{tabular}}
    \label{tab:supervised_classifiers_hparam_sweep}
\end{table}

\begin{table}[h]
    \centering
    \caption{Outcome of a 50-trial random hyperparameter search for adversarial leakage localization. Models are trained with AdamW and weight decay applied only to the weights of the dense and conv1d layers. We use the default PyTorch settings everywhere unless otherwise stated. For ASCADv1 (fixed), ASCADv1 (variable) and AES-HD we find it helpful to `pretrain' the classifier with fixed $\tilde{\vecfnt{\eta}}$ before beginning the simultaneous phase of training, but for DPAv4, OTiAiT, and OTP we find this unnecessary.}
    \resizebox{\linewidth}{!}{\begin{tabular}{llcccccc}
        \toprule
        \textbf{Hyperparameter} & \textbf{Search space} & \multicolumn{6}{c}{\textbf{Selected value}} \\
        & & ASCADv1 (fixed) & ASCADv1 (variable) & DPAv4 (Zaid) & AES-HD & OTiAiT & OTP \\
        \cmidrule{3-8}
        $\vecfnt{\theta}$ learning rate (pretrain) & $\bigcup_{m=1}^{10} \{m \cdot 10^{-4}\}$ & $10^{-4}$ & $10^{-4}$ & n/a & $10^{-3}$ & n/a & n/a \\
        $\overline{\gamma}$ (pretrain) & n/a & 0.5 & 0.5 & n/a & 0.5 & n/a & n/a \\
        $\vecfnt{\theta}$ learning rate & $\bigcup_{m=1}^{9} \bigcup_{n=4}^{6} \{m \cdot 10^{-n}\}$ & $6 \cdot 10^{-6}$ & $7 \cdot 10^{-5}$ & $2\cdot 10^{-5}$ & $10^{-4}$ & $10^{-4}$ & $4\cdot 10^{-4}$ \\
        $\frac{\text{$\tilde{\vecfnt{\eta}}$ learning rate}}{\text{$\vecfnt{\theta}$ learning rate}}$ & $\bigcup_{m=1}^{9} \bigcup_{n=0}^{2} \{m \cdot 10^n\}$ & 50 & 6 & 9 & 20 & 3 & 3 \\
        $\overline{\gamma}$ & $\bigcup_{m=1}^{19} \{0.05 \cdot m\}$ & 0.3 & 0.4 & 0.7 & 0.85 & 0.8 & 0.65 \\
        Training steps (pretrain) & n/a & 10k & 20k & 0 & 10k & 0 & 0 \\
        Training steps & n/a & 10k & 20k & 10k & 10k & 1k & 1k \\
        \bottomrule
    \end{tabular}}
    \label{tab:all_hparam_sweep}
\end{table}

All the deep learning methods we consider require hyperparameter tuning. Work in other deep learning subfields such as \cite{gulrajani2021} has emphasized the importance of a fair hyperparameter tuning process and realistic model selection criterion when comparing the performance of different algorithms, and in this work we aim to follow these recommendations. Accordingly, all methods are tuned with a 50-trial random hyperparameter search. Note that while our main performance evaluation metric is the oracle agreement, it would be unrealistic to use this for model selection, even when computed on a validation dataset, because it relies on `white box' knowledge about cryptographic implementations that we assume not to have at training time. Instead, for \all{} we use the model selection criterion proposed in Sec. \ref{appsec:model_selection}. The prior deep learning methods are based on `interpreting' a classifier trained with supervised learning, and in line with prior work we tune its associated hyperparameters to optimize classification performance via minimizing the early-stopped mean rank on a validation dataset. We also visualize the distribution of results over the hyperparameter sweep for each method.

\all{} is sensitive to the noise budget parameter $\overline{\gamma}$ and the learning rates of the classifier weights $\vecfnt{\theta}$ and the noise distribution parameter $\tilde{\vecfnt{\eta}}.$ We consistently find that $\tilde{\vecfnt{\eta}}$ should have a higher learning rate than $\vecfnt{\theta},$ so in order to focus the search on better hyperparameter configurations we tune the \textit{ratio} of the learning rate of $\tilde{\vecfnt{\eta}}$ to that of $\vecfnt{\theta}$ rather than the learning rate of $\tilde{\vecfnt{\eta}}.$ For ASCADv1 (fixed, variable) and AES-HD we find that performance is much better if we pretrain the classifier for half of the training steps with fixed $\tilde{\vecfnt{\eta}} = \vecfnt{0}$ and noise budget $\overline{\gamma} = 0.5,$ then proceed as normal for the next half. Thus, for these trials, we first do a 10-trial grid search of the learning rate for $\vecfnt{\theta}$ to minimize mean rank during this pretraining phase, then tune all hyperparameters as normal using this trained classifier as the starting point for the remaining 40 trials. In preliminary experiments we explored tuning the $\beta_1,$ $\beta_2,$ $\epsilon$ and weight decay strength of the AdamW optimizer as well as the number of $\tilde{\vecfnt{\eta}}$ steps per $\vecfnt{\theta}$ step, but chose to leave these fixed for the final search because they had little impact on performance. See Table \ref{tab:all_hparam_sweep} for the hyperparameter search space and chosen configurations for \all{}.

The deep learning baselines are based on `interpreting' a trained supervised classifier and require tuning this classifier -- we tune its learning rate, dropout rates, and decide whether to use a constant or cosine decay learning rate schedule. In preliminary experiments we also explored tuning the $\beta_1,$ $\beta_2,$ $\epsilon$ and weight decay strength $\lambda$ of the AdamW optimizer, but chose to leave them at fixed values for the final search because they had little effect on classification performance. See Table \ref{tab:supervised_classifiers_hparam_sweep} for the hyperparameter search space, chosen configurations, and resulting classification performance for the supervised classifiers.

\subsubsection{Performance evaluation methods}\label{appsec:performance_metrics}

Unlike for the experiments on synthetic datasets, here we lack ground truth knowledge about the leakiness of individual measurements. It is challenging to evaluate the performance of leakage localization algorithms in this setting, and there is currently no consensus about the best way to do so. We consider 4 quantitative performance evaluation strategies which are conceptually-similar to performance evaluation strategies of prior work. To account for the varying `shapes' of leakage assessments returned by the compared methods, all of our evaluation metrics are sensitive only to the \textit{relative} leakiness assigned to measurements.

\paragraph{\texorpdfstring{`Oracle'}{'Oracle'} leakiness via SNR with relevant leaking first-order variables}

The present work is concerned with `black box' leakage localization algorithms which require only a supervised learning-style dataset of traces and associated target variable values, and minimal \textit{a priori} knowledge about the cryptographic implementations being analyzed. However, it is also possible to analyze devices in a `white box' manner which does incorporate \textit{a priori} knowledge. In particular, while second-order datasets such as ASCADv1 (fixed and variable) are not amenable to black box analysis with the first-order parametric methods such as SNR, it is possible to study the implementation and `decompose' the second-order leakage into first-order leakage of \textit{pairs} of internal variables, then use parametric methods to analyze leakage of these variables individually. As our main performance evaluation strategy, we use such a white box analysis to compute per-measurement leakiness predictions, which we treat as an `oracle' against which to compare output. This is a useful way to validate deep learning methods because 1) it is interpretable and hyperparameter-free, and 2) it lets us check whether an output is consistent with an analysis which a domain expert might do.

For the ASCADv1 datasets, we use the white box analysis of \cite{egger2022} to compute `oracle' leakiness estimates for each of the measurements. We use the canonical target variable for both datasets, which is $\Sbox(k_2 \oplus w_2)$ where $\Sbox$ is an invertible function which is publicly-known and shared by all AES implementations, $\oplus$ denotes the bitwise exclusive-or operation and $k_2$ and $w_2$ denote byte 2 of the key and plaintext, respectively, with indexing starting from 0. The underlying AES-128 implementation of these datasets uses Boolean masking, as shown in Alg. 1 of \cite{benadjila2020}. By design, this Boolean masking prevents the algorithm from ever directly operating on $\Sbox(k_2 \oplus w_2),$ making all power measurements $X_t$ nearly marginally statistically-independent of $\Sbox(k_2 \oplus w_2).$ 

Alg. 1 does directly operate on the following pairs of variables: $(r_2,\,\Sbox(k_2 \oplus w_2) \oplus r_2),$ $(r_{\mathrm{out}},\,\Sbox(k_2 \oplus w_2) \oplus r_{\mathrm{out}}),$ $(r_{\mathrm{in}},\,k_2 \oplus w_2 \oplus r_{\mathrm{in}}).$ The variables $r_2,\,r_{\mathrm{out}},\,r_{\mathrm{in}}$ are called \textit{masks} and are internal variables which are randomly generated during each encryption. Thus, each of these variables \textit{is} marginally dependent on some measurements $X_t$ and can be detected with first-order parametric methods. Each of these pairs is said to leak $\Sbox(k_2 \oplus w_2)$ because given both, one can calculate $\Sbox(k_2 \oplus w_2)$ using the identity $a \oplus b \oplus b = a.$ In addition to these pairs of variables, \cite{egger2022} identified that the variables $\Sbox(S_{\mathrm{prev}} \oplus r_{\mathrm{in}}) \oplus r_{\mathrm{out}}$ and a `security load' $S_{\mathrm{prev}} \oplus \Sbox(w_2 \oplus k_2) \oplus r_{\mathrm{out}}$ also have a strong first-order association with power consumption and might contribute to leakage, where for byte 2 $S_{\mathrm{prev}} = \Sbox(k_{11} \oplus w_{11}) \oplus r_{11}.$

All 8 of these internal variables may be computed using the metadata published with the ASCADv1 datasets. As our oracle assessment for the ASCADv1 datasets, we compute the SNR of each of these 8 variables using the attack dataset and average them together. We can then qualitatively assess agreement between the output of leakage localization algorithms and these oracle assessments. For the 4 first-order datasets, we use as our oracle assessment the SNR of the target itself from the attack dataset, which amounts to an assumption that there is no leakage of order 2 or higher.

We use the Spearman rank correlation coefficient as a scalar summary of this agreement. This quantity is defined as the Pearson correlation coefficient between the \textit{ranks} of a pair of sequences, and is useful for our purposes because it tells us the extent to which leakage localization algorithms assign the same \textit{relative} leakiness to measurements as the oracle, while being insensitive to differences in their `shape'.

Most prior work \citep{masure2019, wouters2020, schamberger2023, yap2025} has used white box assessments similar to this for qualitative evaluation of leakage localization algorithms. To our knowledge, ours is the first work to summarize agreement with a scalar and use it for large-scale comparison between a large number of methods.

We refer to this performance evaluation strategy as \textit{oracle agreement}. Note that we use the word `oracle' for clarity of exposition, and we believe this is the least-flawed of the evaluation metrics we consider, but it does not give us genuine ground truth leakiness measurements. It is sensitive only to first-order leakage of variables which can be identified \textit{a priori} as leaky, and will ignore any other exploitable variables. Additionally, SNR is not perfectly sensitive even to first-order leakage: it relies on changes in the expected values $\expec[X_t \mid Y = y]$ with $y$ and will not detect dependencies which do not influence the mean (e.g. if $X_t$ is a Gaussian random variable with $Y$-independent mean but $Y$-dependent variance).

\paragraph{DNN occlusion tests}

\begin{algorithm}[h]
    \DontPrintSemicolon
    \SetKwInput{KwInput}{Input}
    \SetKwInput{KwOutput}{Output}
    \SetKwFunction{argsort}{argsort}
    \SetKwFunction{reverse}{reverseOrder}
    \SetKwFunction{pass}{pass}
    \KwInput{Trained supervised classifier $\Phi^*: \R^T \times \setfnt{Y} \to [0, 1]$, attack dataset $\setfnt{D}_{\mathrm{attack}} \subseteq \R^T \times \setfnt{Y}$, leakiness estimates $\vecfnt{\ell} \in \R^T$, direction $d \in \{\text{`forward'},\,\text{`reverse'}\}$}
    \KwOutput{Area under DNN occlusion curve $\overline{r}$}
    \BlankLine
    $\vecfnt{m}_0 \gets \vecfnt{0}$\tcp*{occlusion mask}
    $\vecfnt{\ell}_{\mathrm{idx}} \gets \argsort(\vecfnt{\ell})$\tcp*{indices of sorted leakiness values, from low--high}
    \uIf{\upshape{$d = \text{`forward'}$}}{
        $\vecfnt{\ell}_{\mathrm{idx}} \gets \reverse(\vecfnt{\ell}_{\mathrm{idx}})$\tcp*{sort from high--low instead}
    }
    \uElseIf{\upshape{$d = \text{`reverse'}$}}{
        \pass\;
    }
    \For{$t=1, \dots, T$}{
        $\vecfnt{m}_{t} \gets \vecfnt{m}_{t-1} + \vecfnt{I}_{\ell_{\mathrm{idx},\,t}}$\tcp*{un-occlude $t$-th least (reverse) or most (forward)-leaky feature}
        $r_t \gets \frac{1}{\lvert\setfnt{D}_{\mathrm{attack}}\rvert}\sum_{\vecfnt{x}, y \in \setfnt{D}_{\mathrm{attack}}} \operatorname{Rank}\left(\Phi^*(\cdot \mid (\vecfnt{1} - \vecfnt{m_t}) \odot \vecfnt{x});\,y\right)$\tcp*{record average classifier performance on attack dataset under this occlusion mask}
    }
    \Return $\frac{1}{T} \sum_{t=1}^{T} r_t$\tcp*{area under the DNN occlusion curve}
\caption{Pseudocode for the DNN occlusion tests.}
\label{alg:dnn_occlusion_tests}
\end{algorithm}

\cite{hettwer2020} proposed a variety of tests based on plotting the performance of a trained supervised classifier as its input features are successively occluded in order of their leakiness. The intuition is that leakier features should have a larger impact on the performance of the classifier, so the rate at which its performance changes as we successively occlude its inputs tells about the extent to which these inputs were leaky. In a similar spirit, we propose 2 evaluation metrics which we name the \textit{forward} and \textit{reverse DNN occlusion tests}.

See Alg. \ref{alg:dnn_occlusion_tests}. For the \textit{forward DNN occlusion test} we initially occlude all the input features of a trained classifier, then successively un-occlude one feature at a time from most- to least-leaky as predicted by the leakage localization algorithm under test. At each occlusion level we measure the performance of the classifier on the attack dataset in terms of mean rank. We then report the average performance across all occlusion levels. For a `good' leakage assessment we expect the average performance to be better (lower) because useful features are un-occluded at a greater proportion of occlusion levels. Conversely, for a `bad' leakage assessment we expect the average performance to be worse because useful features stay occluded for longer. The \textit{reverse DNN occlusion test} is identical except that we un-occlude features from least- to most-leaky. For this test we expect the average performance to be worse (higher) for a `good' leakage assessment and better (lower) for a `good' leakage assessment. In general we expect the forward test to be sensitive to the extent to which the predicted-leakiest measurements are truly among the leakiest (similar to true/false positives), and the reverse test to be sensitive to the extent to which the predicted-nonleaky features are truly nonleaky (similar to true/false negatives).

A major limitation of the DNN occlusion tests is that they rely on an imperfect DNN classifier, and are only sensitive to associations insofar as the classifier exploits them. Additionally, we use the same architecture, training procedure and hyperparameters for these classifiers as for those `interpreted' by the neural net attribution baseline methods, so the test may be `biased' in favor of these. Nonetheless, we consider them a useful supplement to the oracle agreement metric because they do not suffer from the same restrictive assumptions about the nature of associations.

\paragraph{Feature selection efficacy for Gaussian template attack}

Similarly to \cite{masure2019, yap2023}, we also evaluate leakage localization assessments based on their ability to do feature selection for Gaussian template attacks. To carry out this test, we first select the top 20 measurements with the highest estimated leakiness. We then perform a Gaussian template attack \citep{chari2003} using these measurements. The Gaussian template attack is a well-known parametric side-channel attack based on modeling $p_{\vecfnt{X} \mid Y}$ with a Gaussian mixture model with one component per value $Y$ may take on, then using Bayes' rule to estimate $p_{Y \mid \vecfnt{X}}.$ The leakier these features are, the more-performant we expect the attack to be.

For AES datasets accuracy is often low when predicting $Y$ from a single value of $\vecfnt{X}.$ Thus, attack datasets typically consist of many traces $\vecfnt{X}_1, \dots, \vecfnt{X}_M$ recorded with a \textit{fixed} AES key but varying plaintext. The target variable $Y$ is typically chosen so that given the corresponding plaintext and ciphertext, $Y$ is a known invertible function of the key. One can thereby make many predictions about the key using these traces, then `accumulate' these predictions through the identity $\log p_{\vecfnt{X}_1, \dots, \vecfnt{X}_M \mid K} = \sum_{m=1}^{M} \log p_{\vecfnt{X}_m \mid K}$ where $K$ denotes the key. A common performance metric for attacks is the \textit{minimum traces to disclosure (MTD)}, given by the number of traces one must accumulate before the true key has the highest predicted probability mass (lower is better). We use this metric to measure the performance of Gaussian template attacks on AES datasets. For OTiAiT and OTP, which are not AES datasets, we simply use the mean rank of the target variable on the attack dataset.

Note that for the second-order ASCADv1 datasets, the algorithms we consider do not reveal which of the leaky internal AES variables a measurement leaks. This is problematic when selecting features for a template attack, because a successful attack must have features corresponding to both of a pair of second-order leaky variables. If we simply used the top 20 predicted-leakiest features for an attack, this would be left to chance and make the performance metric unreliable. To address this issue, for the second-order datasets we instead segment the $T$ measurements into 10 bins each containing $\lfloor\tfrac{T}{10}\rfloor$ consecutive measurements, then select the top 2 leakiest measurements from each bin.

The main shortcoming of this performance metric is that it is only sensitive to the 20 predicted-leakiest measurements and ignores all others (e.g. it cannot detect that leaky measurements have been assigned spuriously-low leakiness). Additionally, it assumes that the relationship between measurements and the target variable is well-described by a Gaussian mixture model, which may not hold in practice. However, unlike the oracle agreement metric it does not rely on human-identified first-order leaky variables, and unlike the DNN occlusion tests it is hyperparameter-free and may be biased towards different associations than the DNN classifiers. Thus, it is also a useful supplement to the aforementioned metrics.

\subsubsection{Model selection criterion}\label{appsec:model_selection}

\begin{figure}
    \centering
    \includegraphics[width=0.8\linewidth]{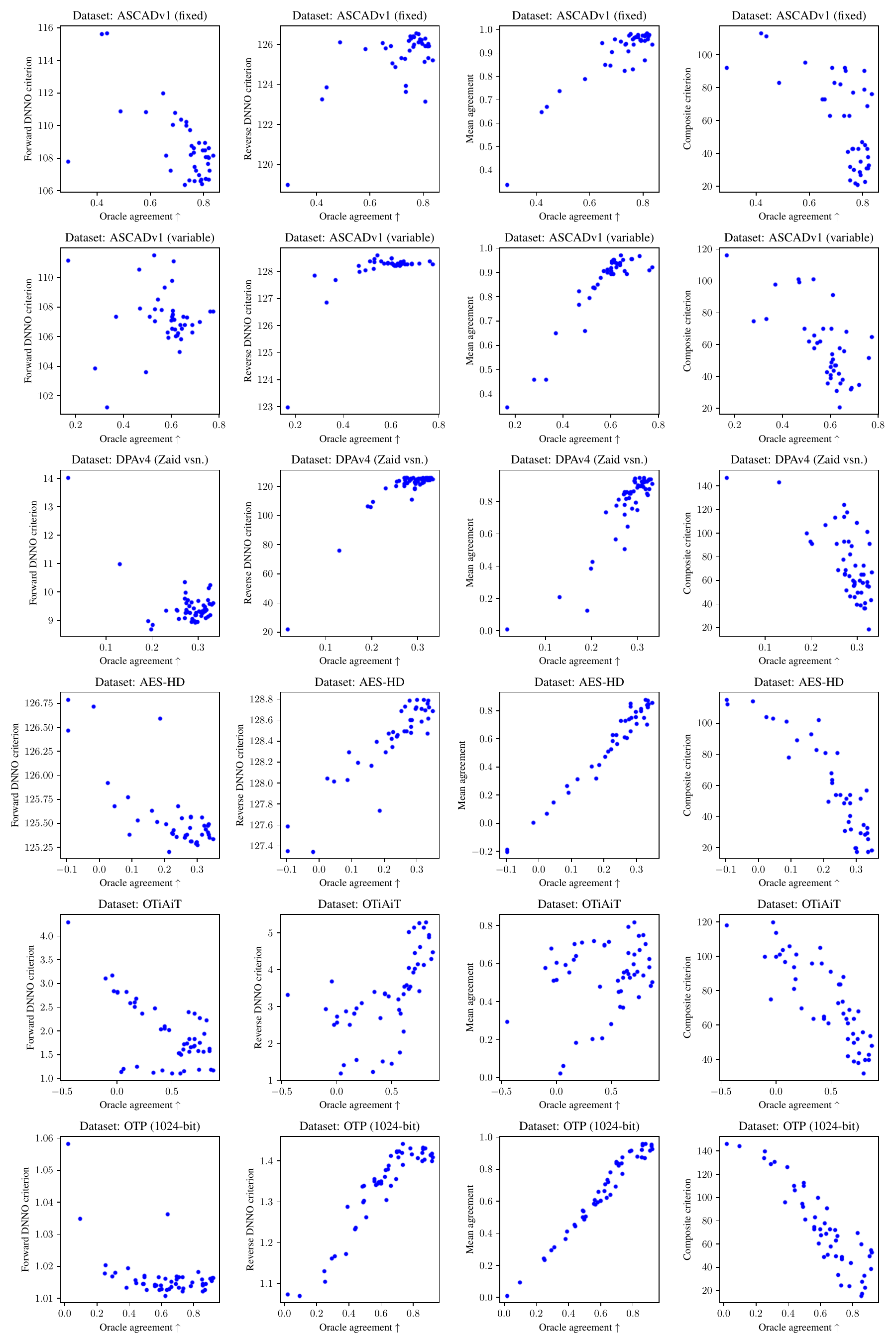}
    \caption{Visualization of the relationship between various model selection criteria from Sec. \ref{appsec:model_selection} and the oracle agreement for \all{} runs produced during hyperparameter search. We find consistently across datasets that the forward/reverse DNN occlusion and mean agreement criteria are consistently positively or negatively correlated with the oracle agreement, though this correlation is often weak. We achieve slightly better results using a composite criterion which uses considers the `votes' according to all these criterion, and we adopt this composite criterion when selecting \all{} models for comparison with baselines. However, this criterion often selects suboptimal models, and future research on leakage localization model selection strategies is warranted.}
    \label{fig:model_selection_criterion}
\end{figure}

While we consider the oracle agreement metric our most straightforward and useful performance metric, we cannot use it for model selection (e.g. choosing hyperparameters, or early-stopping runs). This is because it relies on `white box' knowledge of the internal first-order leaky variables of second-order algorithms and knowledge of their random masks, which we assume not to have at training time. Thus, we must devise a model selection criterion which does not rely on this information.

Note that we can freely use the forward and reverse DNN occlusion tests and the template attack feature selection test for model selection by running them on our validation dataset rather than the attack dataset. Additionally, we find that because `good' runs typically converge to similar leakiness assessments whereas `bad' runs resemble random noise, a reasonable model selection heuristic is to 1) compute the average leakiness value for each measurement over all hyperparameter tuning runs, then 2) use the Spearman rank correlation coefficient between the average leakiness assessments and those for a particular run as a proxy for the run's performance. We refer to this model selection strategy as the \textit{mean agreement criterion}.

In Fig. \ref{fig:model_selection_criterion} we visualize the relationship between the oracle agreement and the forward and reverse DNN occlusion tests as well as the mean agreement criterion. We find that the forward and reverse DNN occlusion criterion are weakly correlated with oracle agreement, and that the mean agreement is strongly correlated for every dataset apart from OtiAiT. However, while we can consistently discard bad runs using these criteria, they typically lead to selection of suboptimal models. In this work we select \all{} models using a `composite criterion' based on ranking runs according to the forward and reverse DNN occlusion tests and mean agreement criterion, then selecting the model with the highest mean ranking across these 3 criteria. More research into model selection strategies for leakage localization algorithms is warranted.

\subsubsection{Summary of experiments}\label{app:more-experiments}

We report and visualize our results in a variety of ways, which we list and summarize here.

\paragraph{Visualization of the best-performing leakage localization results found by \texorpdfstring{\all{}}{ALL}} 

\begin{figure}
    \centering
    \includegraphics[width=0.55\linewidth]{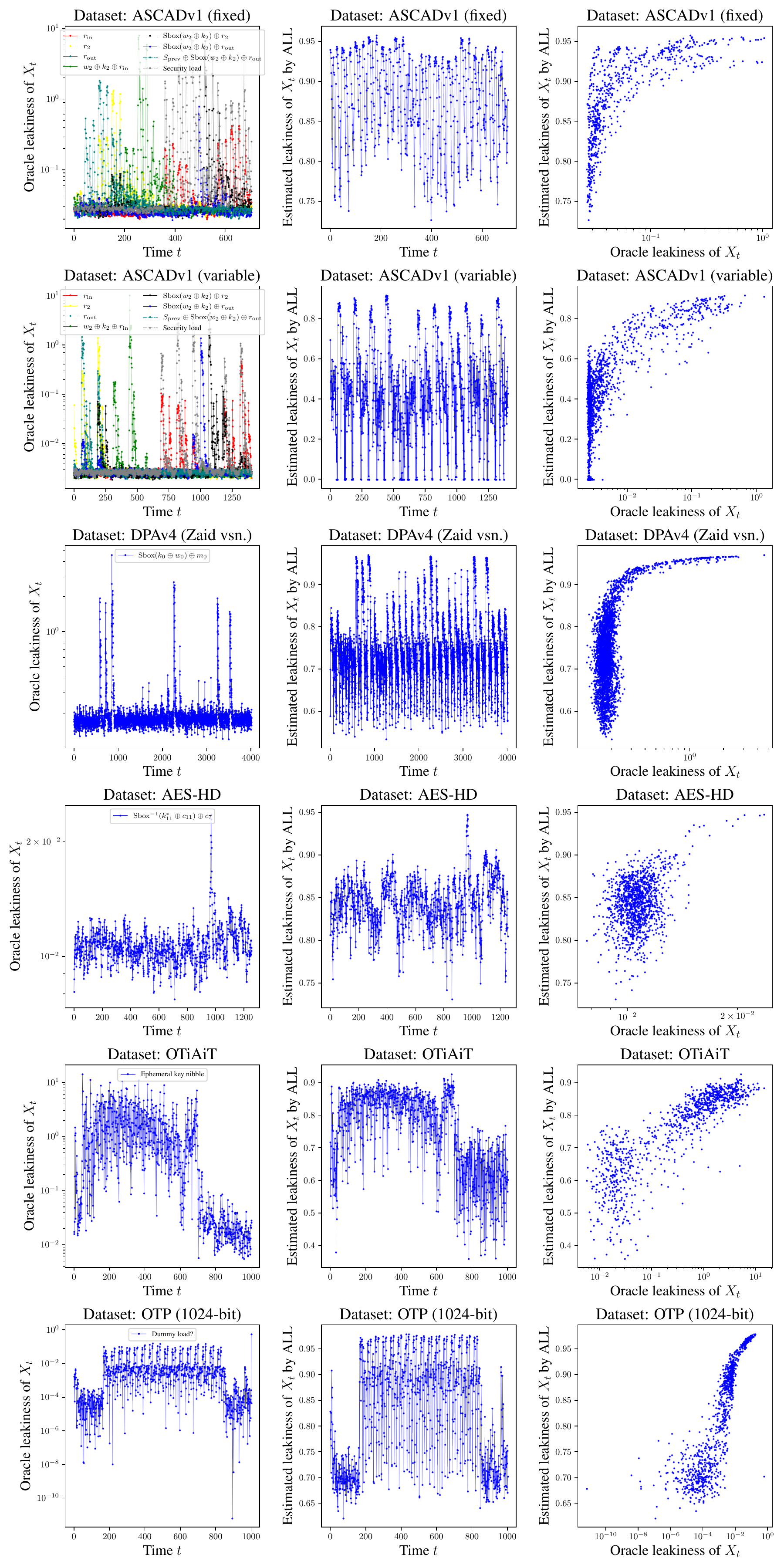}
    \caption{A plot which qualitatively compares the estimated leakiness of the best-performing \all{} runs with the oracle leakiness values. (\textbf{left column}) A plot of oracle leakiness of $X_t$ vs. timestep $t.$ For the second-order ASCADv1 datasets (top two rows), note that our plots are similar to \cite[Fig. 3a]{egger2022} as they are based on the same first-order variables; differences are because we measure leakiness with SNR whereas they use CPOI. (\textbf{middle column}) A plot of estimated leakiness of $X_t$ vs. timestep $t$ according to \all{}, for the best-performing \all{} run as measured by oracle agreement. We want this column to look like the left column, up to a strictly-increasing nonlinear transform. (\textbf{right column}) A plot of the \all{}-estimated leakiness of $X_t$ vs. the oracle leakiness of $X_t.$ These curves show that the \all{} estimates are good in the sense that they tend to apply similar \textit{relative} leakiness values to the measurements as the oracle.}
    \label{fig:qualitative_comparison_with_oracle}
\end{figure}

In Fig. \ref{fig:qualitative_comparison_with_oracle} we visualize the oracle leakiness measurements and qualitatively compare them with the best \all{} runs found during our hyperparameter sweeps. For the ASCADv1 datasets we draw distinct curves for the individual leaky variables -- note the similarity between these plots and \cite[Fig. 3a]{egger2022}. We see a strong visual resemblance between the \all{} outputs and the oracle leakiness assessments, despite the nonlinear relationship between them. Additionally, there are few points assigned a low oracle leakiness but high \textit{relative} \all{}-predicted leakiness or vice-versa. Most of the disagreement appears to lie in the predicted relative leakiness within groups of high-leakiness or low-leakiness measurements.

\paragraph{\texorpdfstring{\all{}}{ALL} training curves}

\begin{figure}[ht]
    \centering
    \includegraphics[width=0.55\linewidth]{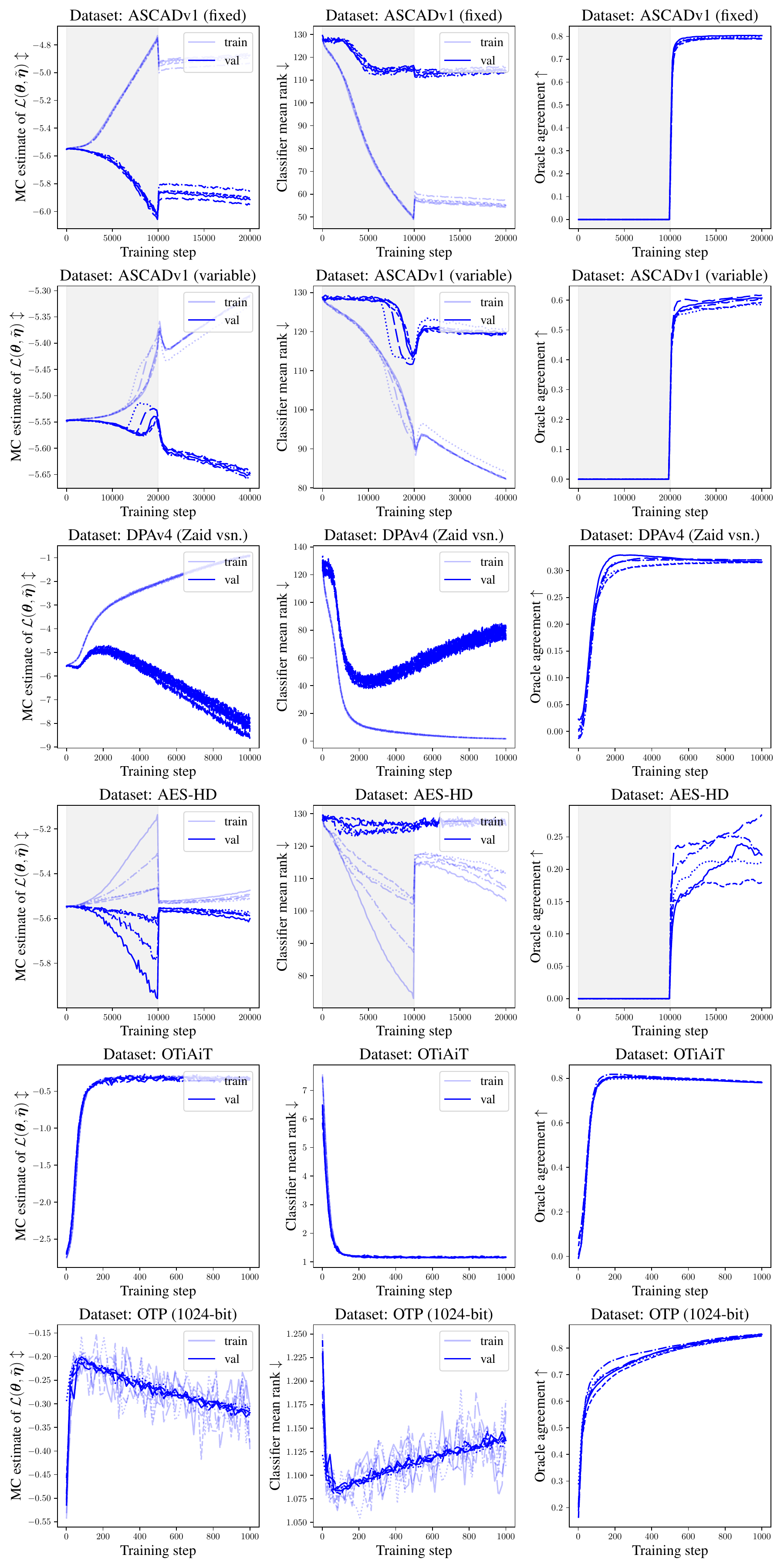}
    \caption{Training curves for \all{} with the hyperparameter configuration chosen using our composite model selection criterion. Grey shaded regions denote `pretraining' phases where we optimize $\vecfnt{\theta}$ but not $\tilde{\vecfnt{\eta}}.$ Note that jumps in the traces happen because we set $\overline{\gamma}=0.5$ for pretraining and change it to the setting chosen through hyperparameter search for training. (\textbf{left column}) The per-minibatch estimates of our objective function $\mathcal{L}(\vecfnt{\theta}, \tilde{\vecfnt{\eta}})$ (i.e. negative cross-entropy classification loss of the classifier) during training. We are optimizing $\vecfnt{\theta}$ to maximize this value and $\tilde{\vecfnt{\eta}}$ to minimize it. (\textbf{center column}) The mean rank of the correct label in the logits of the classifier (lower corresponds to higher classifier performance). We use this instead of accuracy for its finer granularity. (\textbf{right column}) The performance in terms of oracle agreement over the course of training; higher is better, as it indicates that the current \all{} leakiness estimates are closer to being a strictly-increasing function of the oracle assessments.}
    \label{fig:all_training_curves}
\end{figure}

\begin{figure}
    \centering
    \includegraphics[width=0.3\linewidth]{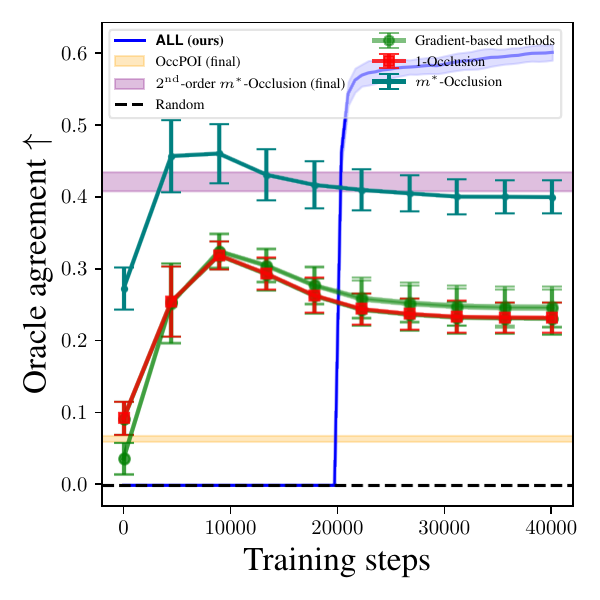}
    \caption{A comparison of the evolution of oracle agreement vs. training steps for \all{} and selected baselines. Note that the oracle agreement for \all{} is flat for the first 20k training steps and jumps up for the remaining steps because we leave all elements of $\vecfnt{\gamma}$ fixed at $0.5$ during our `pretraining' phase of training, and update it only during the second half. For \blmocclsos{} and \bloccpoi{}, due to their high computational cost we report only the final performance after training via horizontal lines, rather than the evolution of performance during training.}
    \label{fig:traces_over_time}
\end{figure}

In Fig. \ref{fig:all_training_curves} we plot the evolution of various metrics over time during the \all{} training procedure for the runs chosen using the model selection criterion of Sec. \ref{appsec:model_selection}. In Fig. \ref{fig:traces_over_time} we compare the oracle agreement at different timesteps on ASCADv1-variable for \all{} and selected baselines. In general we observe that for successful \all{} runs the classifier validation rank drops significantly below the random-guessing threshold at some point in training, though it may begin to rise again as the noise distribution trains adversarially against it. These curves are generally smooth and we do not observe significant training instability. Unfortunately, we are not aware of a reliable way to predict oracle agreement performance from the training curves.

\paragraph{Sensitivity of \texorpdfstring{\all{}}{ALL} to hyperparameters}

\begin{figure}
    \centering
    \includegraphics[width=\linewidth]{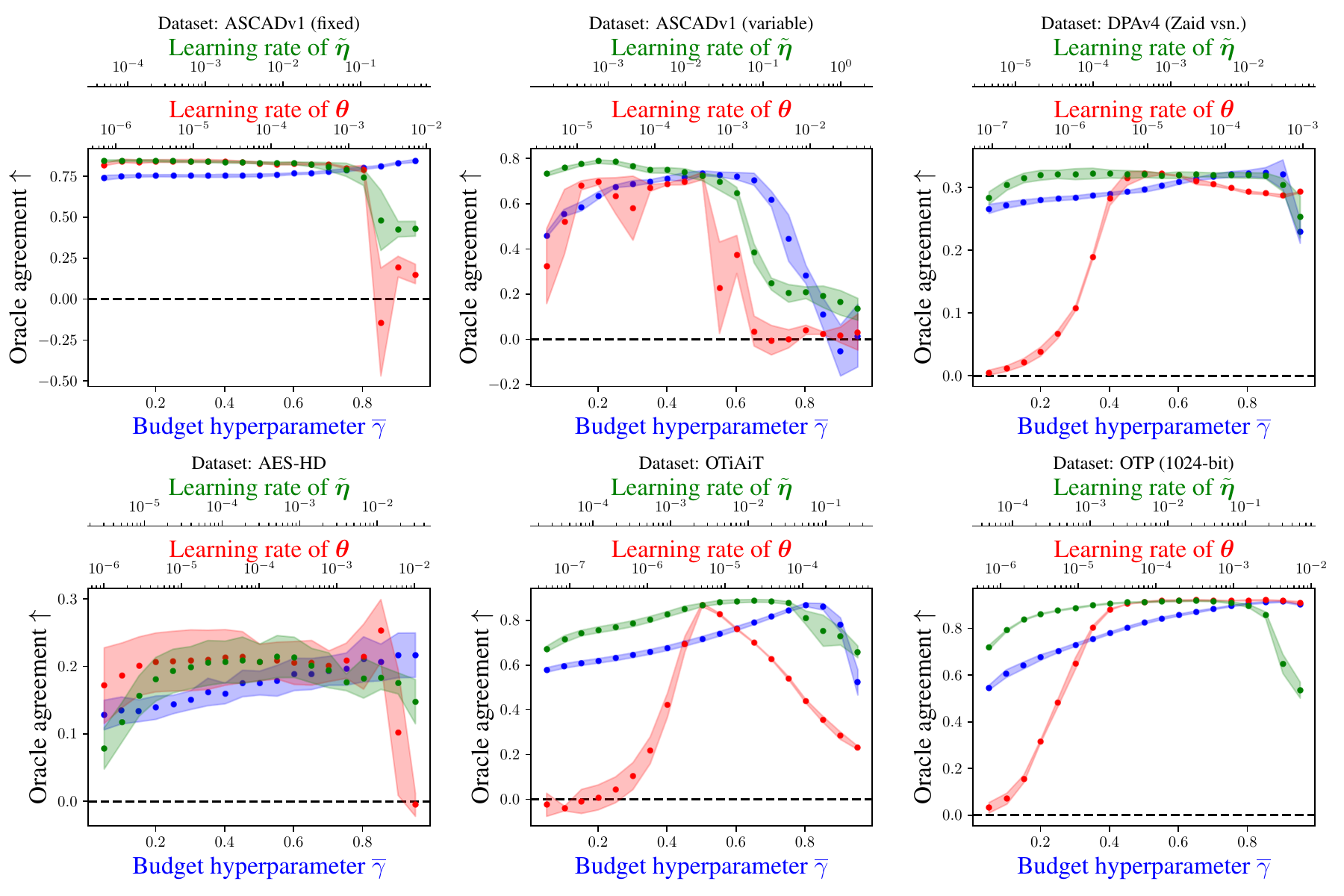}
    \caption{A plot of the performance of \all{} as we perturb its 3 main hyperparameters: the noise budget $\overline{\gamma}$ and the learning rates of the classifier weights $\vecfnt{\theta}$ and the noise distribution parameter $\tilde{\vecfnt{\eta}}.$ We find that performance varies smoothly with these hyperparameters and stays significantly better than random guessing over a large region of the search space, which is useful from a standpoint of hyperparameter tuning. We consider $\overline{\gamma}$ values in \texttt{np.arange(0.05, 1.0, 0.05)} and learning rates scaled by values in \texttt{np.logspace(-2, 2, 19)} relative to their optimal values according to the oracle agreement metric. All hyperparameters other than those being perturbed are left at their optimal values according to the oracle agreement metric. We repeat trials for 5 random seeds and report their mean with dots and $\pm$ 1 standard deviation with shading.}
    \label{fig:all_sensitivity_analysis}
\end{figure}

The main hyperparameters of \all{} are the noise budget $\overline{\gamma}$ and the learning rates of the classifier weights $\vecfnt{\theta}$ and the noise distribution parameter $\tilde{\vecfnt{\eta}}.$ In Fig. \ref{fig:all_sensitivity_analysis} we evaluate the sensitivity of \all{} to these hyperparameters by varying them individually using the optimal configuration with respect to oracle agreement as a starting point. We find in general that performance generally varies smoothly with these hyperparameters and stays significantly above random-guessing over a large search space, which is desirable from a standpoint of hyperparameter tuning.

\paragraph{Attack performance and training curves of supervised classifiers}

\begin{figure}
    \centering
    \includegraphics[width=0.55\linewidth]{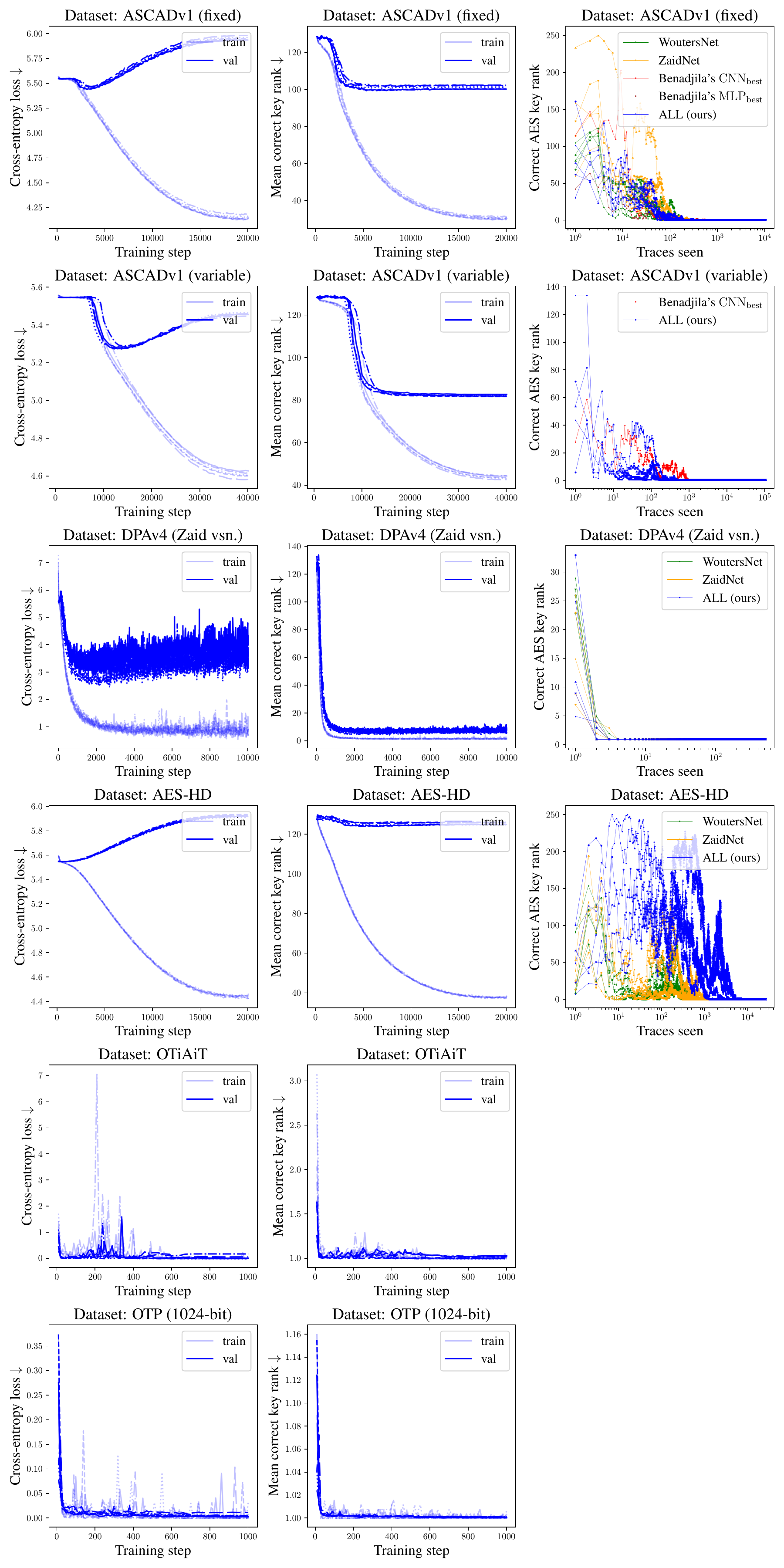}
    \caption{Training curves and attack performance of the supervised classifiers which are `interpreted' by the deep learning baseline methods. (\textbf{Left column}) The training + validation cross-entropy loss vs. training steps for the supervised classifiers. (\textbf{center column}) The training + validation rank vs. training steps for the supervised classifiers. (\textbf{right column}) The rank of the correct key as we accumulate predictions on the attack dataset for the supervised classifiers, which is a common way of evaluating attack performance in the side-channel literature. For reference, we superimpose the results using open-weight classifiers provided by \cite{benadjila2020} and \cite{wouters2020}. Note that our classifiers can successfully attack all datasets. They get comparable performance to the open-weight classifiers on ASCADv1-fixed, ASCADv1-variable and DPAv4, and somewhat worse performance on AES-HD.}
    \label{fig:supervised_training_curves}
\end{figure}

All of the deep learning-based baseline methods are based on `interpreting' a fixed classifier which has been trained using supervised learning to predict the target variable $Y$ from the trace $\vecfnt{X}.$ In Fig. \ref{fig:supervised_training_curves} we plot the cross-entropy loss and mean rank over time for these classifiers during training. Additionally, for the AES datasets we plot the rank of the correct key as we accumulate traces from the attack dataset. For reference we superimpose the correct key rank during trace accumulation for the following publicly-available open-weight classifiers: the $\mathrm{CNN}_{\mathrm{Best}}$ and $\mathrm{MLP}_{\mathrm{Best}}$ model of \cite{benadjila2020}, both of which are available \href{https://github.com/ANSSI-FR/ASCAD}{here}, and the models of \cite{zaid2020} and their simplified versions from \cite{wouters2020} distributed \href{https://github.com/KULeuven-COSIC/TCHES20V3_CNN_SCA}{here}. Also note that in Table \ref{tab:supervised_classifiers_hparam_sweep} we list the early-stopped validation cross-entropy loss, rank, and minimum traces to disclosure (MTD) for these models. We find that our classifiers are able to `successfully' attack all datasets (i.e. they can successfully predict the key by accumulating all traces in the provided attack dataset), and they achieve comparable MTD to these open-weight models on ASCADv1-fixed, ASCADv1-variable, and DPAv4, and somewhat-worse MTD on the AES-HD dataset.

\paragraph{\texorpdfstring{\blmoccl{}}{m-Occlusion} window size sweep and smoothing effect}

\begin{figure}
    \centering
    \includegraphics[width=0.8\linewidth]{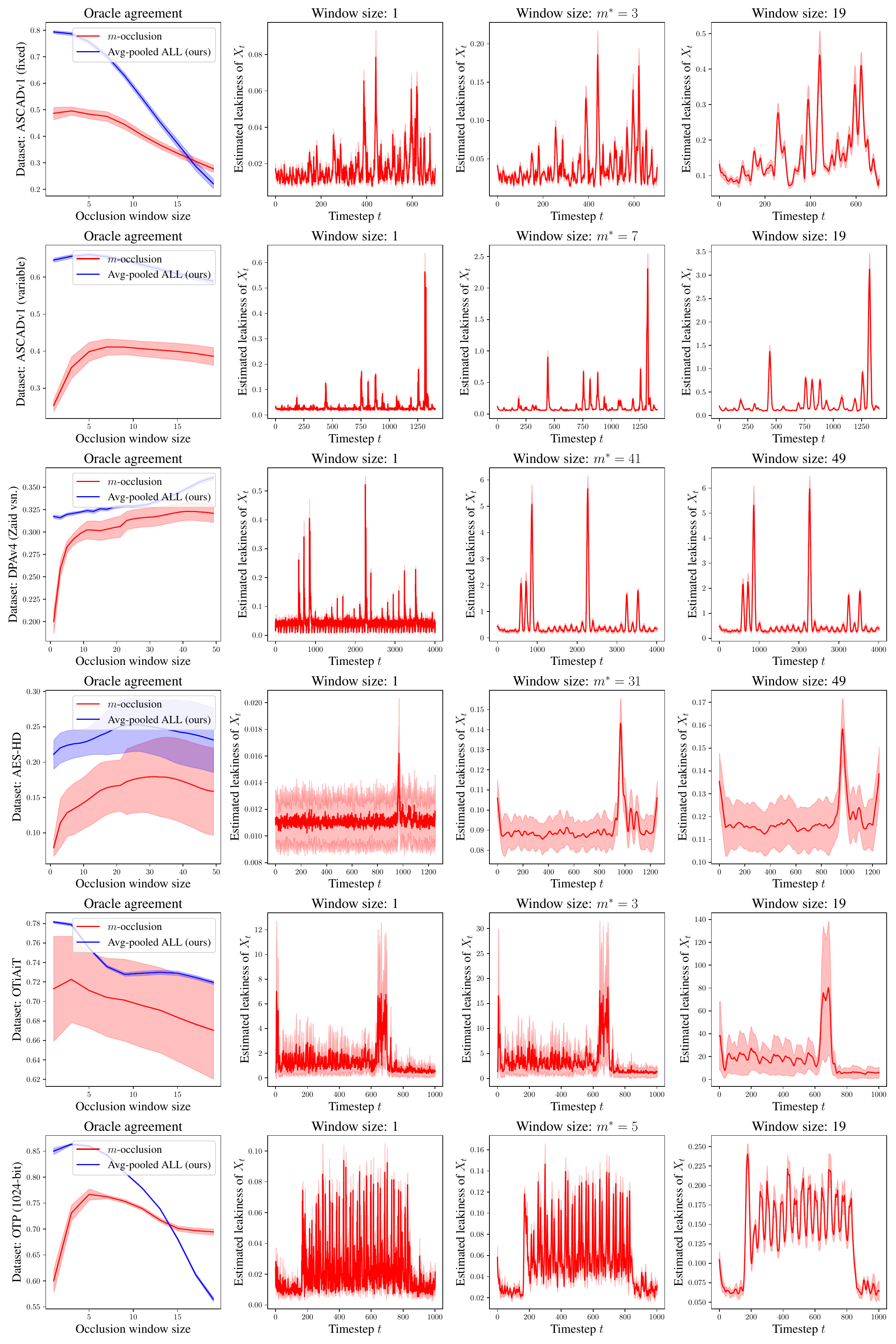}
    \caption{A sweep of the window size $m$ for \blmoccl{}. In the \textbf{first column} we plot the oracle agreement vs. window size (\textcolor{red}{red} curve). For reference we also plot the output of \all{} (\textcolor{blue}{blue} curve) as we average-pool it with stride 1 and kernel size $m$. Note that \all{} consistently outperforms \blmoccl{} for a wide range of window sizes. In the \textbf{second, third and fourth columns} we plot the \blmoccl{} leakiness assessment for $m=1,$ optimal $m$, and maximum considered $m$. Observe that there is a smoothing effect as we increase $m.$}
    \label{fig:occl_window_size_sweep}
\end{figure}

We consider as baselines \blmoccl{} and \blmocclso{}. These baselines introduce the occlusion window size as an additional hyperparameter which must be tuned. In Fig. \ref{fig:occl_window_size_sweep} we plot the oracle agreement performance of \blmoccl{} as we sweep $m,$ and qualitatively show how the resulting leakiness vector is smoothed out with increasing $m.$ In subsequent experiments, we denote by $m^*$ the optimal value of $m$ found in these experiments, and report results for \bloccl{}, \blmoccls{}, \blocclso{} and \blmocclsos{} (due to the high cost of \blmocclso{} we do not separately sweep its window size). Note that this introduces some data leakage into the results, as we are doing validation with our test metric which incorporates implementation knowledge that we assume not to have at training time. Because the goal of this work is to demonstrate the efficacy of \all{}, and \all{} generally outperforms \blmoccls{} and \blmocclsos{} despite the data leakage, we consider this acceptable.

Compared to \bloccl{}, \blmoccl{} has two major differences: it occludes multiple inputs simultaneously to increase the influence on classifier predictions, and it has a `smoothing' effect which causes nearby measurements to be assigned similar leakiness values. The latter effect can be easily simulated using average-pooling, so we also plot the performance of \all{} as we average-pool it with stride 1 and kernel size $m.$ We find that while \blmoccl{} significantly improves performance over \bloccl{} on the DPAv4 and AES-HD datasets, on these same datasets we can significantly improve the performance of \all{} by average-pooling, and \all{} convincingly outperforms \blmoccl{} when accounting for this. We conjecture that the smoothing effect provides a useful inductive bias for these datasets, and emphasize that it can easily be applied to any other leakage localization technique.

\paragraph{Quantitative comparison between \texorpdfstring{\all{}}{ALL} and baseline methods using oracle agreement, forward and reverse DNN occlusion tests, and template attack feature selection test}

\begin{table}
    \centering
    \caption{Performance comparison between various leakage localization algorithms according to the oracle agreement metric (\textbf{larger is better}) described in Sec. \ref{appsec:performance_metrics}. Results are reported as mean $\pm$ standard deviation over 5 random seeds. The best result is \best{boxed} and the best deep learning result is \bestdl{underlined}. We consider a result to be `best' if its mean lies inside of the error bars of the result with the highest mean.}
    \begin{adjustbox}{width=\linewidth}
        \begin{tabular}{lcccccc}
\toprule
& \multicolumn{2}{c}{\textbf{2nd-order datasets}} & \multicolumn{4}{c}{\textbf{1st-order datasets}} \\
\textbf{Method} & ASCADv1 (fixed) & ASCADv1 (random) & DPAv4 (Zaid vsn.) & AES-HD & OTiAiT & OTP (1024-bit) \\
\cmidrule{1-1} \cmidrule(lr){2-3} \cmidrule(lr){4-7}
Random & $-0.00 \pm 0.04$ & $-0.02 \pm 0.01$ & $0.01 \pm 0.01$ & $-0.01 \pm 0.02$ & $0.02 \pm 0.03$ & $0.03 \pm 0.03$ \\\midrule
SNR & $0.031$ & $-0.092$ & $0.344$ & $0.185$ & \best{$0.989$} & $0.944$ \\
SOSD & $-0.253$ & $0.272$ & $0.259$ & $0.063$ & $0.886$ & $0.803$ \\
CPA & $0.521$ & $-0.095$ & \best{$0.420$} & \best{$0.303$} & $0.630$ & \best{$0.945$} \\\midrule
GradVis & $0.48 \pm 0.02$ & $0.27 \pm 0.01$ & $0.198 \pm 0.009$ & $0.07 \pm 0.01$ & $0.55 \pm 0.05$ & $0.57 \pm 0.02$ \\
Saliency & $0.47 \pm 0.02$ & $0.26 \pm 0.01$ & $0.198 \pm 0.008$ & $0.07 \pm 0.01$ & $0.67 \pm 0.06$ & $0.58 \pm 0.02$ \\
Input $*$ Grad & $0.47 \pm 0.02$ & $0.25 \pm 0.01$ & $0.202 \pm 0.009$ & $0.08 \pm 0.02$ & $0.71 \pm 0.05$ & $0.60 \pm 0.02$ \\
LRP & $0.47 \pm 0.02$ & $0.25 \pm 0.01$ & $0.202 \pm 0.009$ & $0.08 \pm 0.02$ & $0.71 \pm 0.05$ & $0.60 \pm 0.02$ \\
OccPOI & $0.07 \pm 0.01$ & $0.064 \pm 0.004$ & $0.030 \pm 0.008$ & $0.044 \pm 0.009$ & $0.07 \pm 0.02$ & $0.01 \pm 0.02$ \\
1-Occlusion & $0.47 \pm 0.02$ & $0.25 \pm 0.01$ & $0.202 \pm 0.009$ & $0.08 \pm 0.01$ & $0.71 \pm 0.05$ & $0.60 \pm 0.02$ \\
$m^*$-Occlusion & $0.49 \pm 0.02$ & $0.41 \pm 0.01$ & \underline{$0.32 \pm 0.01$} & $0.18 \pm 0.05$ & $0.72 \pm 0.04$ & $0.77 \pm 0.01$ \\
$1$-Occlusion$^2$ & $0.51 \pm 0.01$ & $0.27 \pm 0.01$ & $0.206 \pm 0.009$ & $0.08 \pm 0.01$ & $0.74 \pm 0.05$ & $0.60 \pm 0.02$ \\
$m^*$-Occlusion$^2$ & $0.52 \pm 0.01$ & $0.42 \pm 0.01$ & \underline{$0.330 \pm 0.009$} & $0.19 \pm 0.05$ & $0.75 \pm 0.04$ & $0.788 \pm 0.007$ \\\midrule
WoutersNet 1-Occlusion & $0.18 \pm 0.03$ & n/a & $0.21 \pm 0.02$ & $0.11 \pm 0.03$ & n/a & n/a \\
WoutersNet $m^*$-Occlusion & $0.20 \pm 0.03$ & n/a & $0.29 \pm 0.02$ & \underline{$0.21 \pm 0.04$} & n/a & n/a \\
WoutersNet GradVis & $0.19 \pm 0.03$ & n/a & $0.21 \pm 0.02$ & $0.11 \pm 0.03$ & n/a & n/a \\
WoutersNet Input $*$ Grad & $0.18 \pm 0.03$ & n/a & $0.21 \pm 0.02$ & $0.11 \pm 0.03$ & n/a & n/a \\
WoutersNet Saliency & $0.19 \pm 0.03$ & n/a & $0.21 \pm 0.02$ & $0.11 \pm 0.03$ & n/a & n/a \\\midrule
ZaidNet 1-Occlusion & $0.24 \pm 0.03$ & n/a & $0.19 \pm 0.01$ & $0.13 \pm 0.02$ & n/a & n/a \\
ZaidNet $m^*$-Occlusion & $0.25 \pm 0.04$ & n/a & $0.273 \pm 0.009$ & \underline{$0.21 \pm 0.05$} & n/a & n/a \\
ZaidNet GradVis & $0.25 \pm 0.03$ & n/a & $0.19 \pm 0.01$ & $0.13 \pm 0.02$ & n/a & n/a \\
ZaidNet Input $*$ Grad & $0.25 \pm 0.04$ & n/a & $0.19 \pm 0.01$ & $0.13 \pm 0.02$ & n/a & n/a \\
ZaidNet Saliency & $0.25 \pm 0.03$ & n/a & $0.19 \pm 0.01$ & $0.13 \pm 0.02$ & n/a & n/a \\\midrule
ALL (ours) & \best{\underline{$0.794 \pm 0.006$}} & \best{\underline{$0.60 \pm 0.01$}} & $0.317 \pm 0.002$ & \underline{$0.22 \pm 0.03$} & \underline{$0.782 \pm 0.001$} & \underline{$0.848 \pm 0.003$} \\\bottomrule
\end{tabular}
    \end{adjustbox}
    \label{tab:full_oracle_agreement}
\end{table}
\begin{table}
    \centering
    \caption{Performance comparison between various leakage localization algorithms according to the forward DNN occlusion test (\textbf{smaller is better}) described in Sec. \ref{appsec:performance_metrics}. Results are reported as mean $\pm$ standard deviation over 5 random seeds. The best result is \best{boxed} and the best deep learning result is \bestdl{underlined}. We consider a result to be `best' if its mean lies inside of the error bars of the result with the highest mean.}
    \begin{adjustbox}{width=\linewidth}
        \begin{tabular}{lcccccc}
\toprule
& \multicolumn{2}{c}{\textbf{2nd-order datasets}} & \multicolumn{4}{c}{\textbf{1st-order datasets}} \\
\textbf{Method} & ASCADv1 (fixed) & ASCADv1 (random) & DPAv4 (Zaid vsn.) & AES-HD & OTiAiT & OTP (1024-bit) \\
\cmidrule{1-1} \cmidrule(lr){2-3} \cmidrule(lr){4-7}
Random & $111 \pm 1$ & $111.7 \pm 0.8$ & $20 \pm 2$ & $126.8 \pm 0.1$ & $1.30 \pm 0.05$ & $1.065 \pm 0.010$ \\\midrule
SNR & $117.525$ & $118.448$ & $11.275$ & \best{$125.053$} & \best{$1.209$} & $1.015$ \\
SOSD & $115.516$ & $106.072$ & $11.455$ & $125.365$ & $1.213$ & $1.032$ \\
CPA & $111.449$ & $117.349$ & $11.811$ & $125.267$ & $2.049$ & $1.015$ \\\midrule
GradVis & $108.6 \pm 0.5$ & $96.8 \pm 0.3$ & $9.5 \pm 0.6$ & $125.6 \pm 0.3$ & $1.9 \pm 0.2$ & \best{\underline{$1.013 \pm 0.002$}} \\
Saliency & $108.5 \pm 0.4$ & $96.3 \pm 0.4$ & $9.5 \pm 0.7$ & $125.6 \pm 0.3$ & \underline{$1.8 \pm 0.1$} & \best{\underline{$1.014 \pm 0.001$}} \\
Input $*$ Grad & $108.5 \pm 0.4$ & $96.8 \pm 0.4$ & $9.4 \pm 0.7$ & $125.6 \pm 0.3$ & \underline{$1.7 \pm 0.2$} & \best{\underline{$1.013 \pm 0.001$}} \\
LRP & $108.5 \pm 0.4$ & $96.8 \pm 0.4$ & $9.4 \pm 0.7$ & $125.6 \pm 0.3$ & \underline{$1.7 \pm 0.2$} & \best{\underline{$1.013 \pm 0.001$}} \\
OccPOI & $122.3 \pm 0.8$ & $120.8 \pm 0.2$ & $58 \pm 2$ & $127.4 \pm 0.3$ & $2.6 \pm 0.2$ & $1.09 \pm 0.04$ \\
1-Occlusion & $108.5 \pm 0.4$ & $96.7 \pm 0.4$ & $9.4 \pm 0.7$ & $125.6 \pm 0.3$ & \underline{$1.7 \pm 0.2$} & \best{\underline{$1.013 \pm 0.001$}} \\
$m^*$-Occlusion & $108.2 \pm 0.5$ & \best{\underline{$95.7 \pm 0.6$}} & \best{\underline{$9.0 \pm 0.6$}} & \underline{$125.3 \pm 0.2$} & \underline{$1.8 \pm 0.2$} & \best{\underline{$1.013 \pm 0.001$}} \\
$1$-Occlusion$^2$ & $108.4 \pm 0.4$ & $97.0 \pm 0.4$ & $9.4 \pm 0.7$ & $125.5 \pm 0.3$ & \underline{$1.7 \pm 0.2$} & \best{\underline{$1.013 \pm 0.001$}} \\
$m^*$-Occlusion$^2$ & $108.2 \pm 0.5$ & \best{\underline{$95.9 \pm 0.6$}} & \best{\underline{$9.0 \pm 0.5$}} & \underline{$125.3 \pm 0.2$} & \underline{$1.7 \pm 0.2$} & \best{\underline{$1.013 \pm 0.001$}} \\\midrule
WoutersNet 1-Occlusion & $109.1 \pm 0.7$ & n/a & \best{\underline{$8.7 \pm 0.8$}} & \underline{$125.4 \pm 0.3$} & n/a & n/a \\
WoutersNet $m^*$-Occlusion & $109.0 \pm 0.8$ & n/a & \best{\underline{$9.0 \pm 0.7$}} & \underline{$125.4 \pm 0.2$} & n/a & n/a \\
WoutersNet GradVis & $109.3 \pm 0.8$ & n/a & \best{\underline{$8.8 \pm 0.8$}} & \underline{$125.4 \pm 0.3$} & n/a & n/a \\
WoutersNet Input $*$ Grad & $109.1 \pm 0.7$ & n/a & \best{\underline{$8.7 \pm 0.8$}} & \underline{$125.4 \pm 0.3$} & n/a & n/a \\
WoutersNet Saliency & $109.2 \pm 0.8$ & n/a & \best{\underline{$8.7 \pm 0.8$}} & \underline{$125.4 \pm 0.3$} & n/a & n/a \\\midrule
ZaidNet 1-Occlusion & $110.0 \pm 0.4$ & n/a & \best{\underline{$8.7 \pm 0.7$}} & $125.5 \pm 0.2$ & n/a & n/a \\
ZaidNet $m^*$-Occlusion & $109.5 \pm 0.9$ & n/a & \best{\underline{$9.0 \pm 0.7$}} & \underline{$125.4 \pm 0.2$} & n/a & n/a \\
ZaidNet GradVis & $109.7 \pm 0.6$ & n/a & \best{\underline{$8.7 \pm 0.7$}} & $125.5 \pm 0.2$ & n/a & n/a \\
ZaidNet Input $*$ Grad & $109.9 \pm 0.3$ & n/a & \best{\underline{$8.7 \pm 0.7$}} & $125.5 \pm 0.2$ & n/a & n/a \\
ZaidNet Saliency & $109.7 \pm 0.6$ & n/a & \best{\underline{$8.7 \pm 0.7$}} & $125.5 \pm 0.2$ & n/a & n/a \\\midrule
ALL (ours) & \best{\underline{$107.3 \pm 0.4$}} & $104 \pm 1$ & $9.9 \pm 0.7$ & $125.5 \pm 0.3$ & \underline{$1.8 \pm 0.1$} & $1.014 \pm 0.001$ \\\bottomrule
\end{tabular}
    \end{adjustbox}
    \label{tab:fwd_dnno_auc}
\end{table}
\begin{table}
    \caption{Performance comparison between various leakage localization algorithms according to the reverse DNN occlusion test(\textbf{larger is better}) described in Sec. \ref{appsec:performance_metrics}. Results are reported as mean $\pm$ standard deviation over 5 random seeds. The best result is \best{boxed} and the best deep learning result is \bestdl{underlined}. We consider a result to be `best' if its mean lies inside of the error bars of the result with the highest mean.}
    \centering
    \begin{adjustbox}{width=\linewidth}
        \begin{tabular}{lcccccc}
\toprule
& \multicolumn{2}{c}{\textbf{2nd-order datasets}} & \multicolumn{4}{c}{\textbf{1st-order datasets}} \\
\textbf{Method} & ASCADv1 (fixed) & ASCADv1 (random) & DPAv4 (Zaid vsn.) & AES-HD & OTiAiT & OTP (1024-bit) \\
\cmidrule{1-1} \cmidrule(lr){2-3} \cmidrule(lr){4-7}
Random & $111.9 \pm 0.5$ & $114 \pm 3$ & $25 \pm 4$ & $126.7 \pm 0.2$ & $1.276 \pm 0.010$ & $1.079 \pm 0.010$ \\\midrule
SNR & $110.679$ & $123.284$ & \best{$125.924$} & \best{$128.599$} & $5.267$ & \best{$1.373$} \\
SOSD & $112.074$ & $126.765$ & $108.267$ & $128.216$ & $4.957$ & \best{$1.369$} \\
CPA & $119.875$ & $117.885$ & $114.857$ & $128.367$ & $3.792$ & \best{$1.364$} \\\midrule
GradVis & $125.9 \pm 0.2$ & $127.6 \pm 0.1$ & $122 \pm 1$ & $128.0 \pm 0.3$ & $4.2 \pm 0.4$ & $1.34 \pm 0.07$ \\
Saliency & $125.8 \pm 0.2$ & $127.4 \pm 0.2$ & $122 \pm 1$ & $128.0 \pm 0.3$ & $5.1 \pm 0.3$ & $1.33 \pm 0.06$ \\
Input $*$ Grad & $125.7 \pm 0.3$ & $127.5 \pm 0.2$ & $121.9 \pm 0.9$ & $128.1 \pm 0.3$ & $5.2 \pm 0.3$ & $1.34 \pm 0.06$ \\
LRP & $125.7 \pm 0.3$ & $127.5 \pm 0.2$ & $121.9 \pm 0.9$ & $128.1 \pm 0.3$ & $5.2 \pm 0.3$ & $1.34 \pm 0.06$ \\
OccPOI & $122.3 \pm 0.4$ & $124.6 \pm 0.2$ & $43 \pm 1$ & $127.0 \pm 0.3$ & $3.6 \pm 0.3$ & $1.09 \pm 0.04$ \\
1-Occlusion & $125.8 \pm 0.3$ & $127.4 \pm 0.2$ & $122 \pm 1$ & $128.1 \pm 0.3$ & $5.2 \pm 0.3$ & $1.34 \pm 0.06$ \\
$m^*$-Occlusion & $126.0 \pm 0.2$ & $127.4 \pm 0.2$ & $121 \pm 1$ & \underline{$128.5 \pm 0.2$} & $5.3 \pm 0.2$ & $1.30 \pm 0.04$ \\
$1$-Occlusion$^2$ & $125.8 \pm 0.3$ & $127.5 \pm 0.2$ & $122.0 \pm 0.9$ & $128.1 \pm 0.3$ & $5.3 \pm 0.2$ & $1.34 \pm 0.06$ \\
$m^*$-Occlusion$^2$ & $126.1 \pm 0.2$ & $127.4 \pm 0.2$ & $121.3 \pm 0.9$ & \underline{$128.5 \pm 0.2$} & $5.3 \pm 0.2$ & $1.30 \pm 0.04$ \\\midrule
WoutersNet 1-Occlusion & $121.9 \pm 0.7$ & n/a & $116 \pm 5$ & $128.2 \pm 0.1$ & n/a & n/a \\
WoutersNet $m^*$-Occlusion & $122.4 \pm 0.8$ & n/a & $119.3 \pm 0.7$ & \underline{$128.4 \pm 0.2$} & n/a & n/a \\
WoutersNet GradVis & $121.8 \pm 0.7$ & n/a & $116 \pm 5$ & $128.2 \pm 0.1$ & n/a & n/a \\
WoutersNet Input $*$ Grad & $121.9 \pm 0.7$ & n/a & $116 \pm 5$ & $128.2 \pm 0.1$ & n/a & n/a \\
WoutersNet Saliency & $121.8 \pm 0.7$ & n/a & $116 \pm 4$ & $128.2 \pm 0.1$ & n/a & n/a \\\midrule
ZaidNet 1-Occlusion & $122 \pm 3$ & n/a & $116 \pm 3$ & $128.1 \pm 0.2$ & n/a & n/a \\
ZaidNet $m^*$-Occlusion & $123 \pm 3$ & n/a & $119 \pm 2$ & \underline{$128.4 \pm 0.3$} & n/a & n/a \\
ZaidNet GradVis & $122 \pm 3$ & n/a & $117 \pm 3$ & $128.1 \pm 0.2$ & n/a & n/a \\
ZaidNet Input $*$ Grad & $122 \pm 3$ & n/a & $116 \pm 3$ & $128.1 \pm 0.2$ & n/a & n/a \\
ZaidNet Saliency & $122 \pm 3$ & n/a & $117 \pm 3$ & $128.1 \pm 0.2$ & n/a & n/a \\\midrule
ALL (ours) & \best{\underline{$126.4 \pm 0.2$}} & \best{\underline{$127.96 \pm 0.06$}} & \underline{$125 \pm 1$} & $128.3 \pm 0.2$ & \best{\underline{$5.6 \pm 0.2$}} & \best{\underline{$1.39 \pm 0.05$}} \\\bottomrule
\end{tabular}
    \end{adjustbox}
    \label{tab:rev_dnno_auc}
\end{table}
\begin{table}
    \centering
    \caption{Performance comparison between various leakage localization algorithms according to the template attack feature selection test (\textbf{smaller is better}) described in Sec. \ref{appsec:performance_metrics}. Results are reported as mean $\pm$ standard deviation over 5 random seeds. The best result is \best{boxed} and the best deep learning result is \bestdl{underlined}. We consider a result to be `best' if its mean lies inside of the error bars of the result with the highest mean.}
    \begin{adjustbox}{width=\linewidth}
        \begin{tabular}{lcccccc}
\toprule
& \multicolumn{2}{c}{\textbf{2nd-order datasets}} & \multicolumn{4}{c}{\textbf{1st-order datasets}} \\
\textbf{Method} & ASCADv1 (fixed) & ASCADv1 (random) & DPAv4 (Zaid vsn.) & AES-HD & OTiAiT & OTP (1024-bit) \\
\cmidrule{1-1} \cmidrule(lr){2-3} \cmidrule(lr){4-7}
Random & $1293 \pm 1000$ & $56421 \pm 40000$ & $449 \pm 100$ & $25000.000$ & $1.2 \pm 0.1$ & $1.44 \pm 0.02$ \\\midrule
SNR & $5496.010$ & $54834.990$ & $2.590$ & $17159.690$ & \best{$1.058$} & $1.385$ \\
SOSD & $7733.870$ & $3237.250$ & $91.410$ & $17520.530$ & $1.059$ & $1.398$ \\
CPA & $3826.440$ & $100000.000$ & $10.540$ & $18974.510$ & $1.101$ & $1.385$ \\\midrule
GradVis & $686 \pm 100$ & $1162 \pm 1000$ & $2.7 \pm 0.1$ & $20014 \pm 6000$ & $1.4 \pm 0.3$ & $1.378 \pm 0.007$ \\
Saliency & $726 \pm 100$ & $1412 \pm 2000$ & $2.7 \pm 0.1$ & $19438 \pm 6000$ & $1.14 \pm 0.02$ & $1.379 \pm 0.005$ \\
Input $*$ Grad & $675 \pm 100$ & $1194 \pm 2000$ & $2.6 \pm 0.1$ & $19893 \pm 6000$ & $1.14 \pm 0.02$ & $1.378 \pm 0.003$ \\
LRP & $675 \pm 100$ & $1194 \pm 2000$ & $2.6 \pm 0.1$ & $19893 \pm 6000$ & $1.14 \pm 0.02$ & $1.378 \pm 0.003$ \\
OccPOI & $787 \pm 100$ & $942 \pm 200$ & $71 \pm 30$ & $25000.000$ & \underline{$1.08 \pm 0.03$} & $1.47 \pm 0.05$ \\
1-Occlusion & $667 \pm 100$ & $1376 \pm 2000$ & $2.65 \pm 0.08$ & $20011 \pm 6000$ & $1.14 \pm 0.02$ & $1.379 \pm 0.003$ \\
$m^*$-Occlusion & $673 \pm 70$ & $727 \pm 400$ & $9 \pm 1$ & \best{\underline{$16283 \pm 10$}} & $1.17 \pm 0.02$ & $1.382 \pm 0.007$ \\
$1$-Occlusion$^2$ & $709 \pm 100$ & $1086 \pm 1000$ & $2.65 \pm 0.08$ & $20222 \pm 6000$ & $1.14 \pm 0.02$ & $1.378 \pm 0.003$ \\
$m^*$-Occlusion$^2$ & $642 \pm 60$ & $710 \pm 400$ & $9 \pm 1$ & \best{\underline{$16033 \pm 700$}} & $1.16 \pm 0.03$ & $1.381 \pm 0.008$ \\\midrule
WoutersNet 1-Occlusion & $6454 \pm 4000$ & n/a & $2.9 \pm 0.6$ & $20278 \pm 3000$ & n/a & n/a \\
WoutersNet $m^*$-Occlusion & $4408 \pm 4000$ & n/a & $11.7 \pm 0.6$ & \best{\underline{$16124 \pm 300$}} & n/a & n/a \\
WoutersNet GradVis & $6230 \pm 4000$ & n/a & $2.9 \pm 0.5$ & $19539 \pm 5000$ & n/a & n/a \\
WoutersNet Input $*$ Grad & $4988 \pm 4000$ & n/a & $3.0 \pm 0.6$ & $20546 \pm 4000$ & n/a & n/a \\
WoutersNet Saliency & $5878 \pm 4000$ & n/a & $2.8 \pm 0.5$ & $20151 \pm 4000$ & n/a & n/a \\\midrule
ZaidNet 1-Occlusion & $2236 \pm 2000$ & n/a & $2.6 \pm 0.3$ & $20696 \pm 2000$ & n/a & n/a \\
ZaidNet $m^*$-Occlusion & $2485 \pm 2000$ & n/a & $9 \pm 2$ & \best{\underline{$16124 \pm 300$}} & n/a & n/a \\
ZaidNet GradVis & $2560 \pm 3000$ & n/a & $3.0 \pm 0.8$ & $22790 \pm 3000$ & n/a & n/a \\
ZaidNet Input $*$ Grad & $3234 \pm 3000$ & n/a & $2.5 \pm 0.3$ & $21600 \pm 3000$ & n/a & n/a \\
ZaidNet Saliency & $2295 \pm 2000$ & n/a & $2.4 \pm 0.4$ & $22561 \pm 3000$ & n/a & n/a \\\midrule
ALL (ours) & \best{\underline{$459 \pm 40$}} & \best{\underline{$394 \pm 20$}} & \best{\underline{$2.22 \pm 0.01$}} & $17582 \pm 5000$ & $1.11 \pm 0.02$ & \best{\underline{$1.363 \pm 0.007$}} \\\bottomrule
\end{tabular}
    \end{adjustbox}
    \label{tab:ta_mttd}
\end{table}

In Tables \ref{tab:full_oracle_agreement}, \ref{tab:fwd_dnno_auc}, \ref{tab:rev_dnno_auc}, \ref{tab:ta_mttd} we compare the performance of \all{} with our baseline methods according to the oracle agreement, forward DNN occlusion, reverse DNN occlusion, and template attack feature selection tests, respectively. As a sanity check against our supervised neural net architecture and training procedure, we also include results for \blgradvis{}, \blsaliency{}, \bllrp{}, \blinputxgrad{}, \bloccl{}, and \blmoccls{} computed using the models of \cite{zaid2020} and their simplified versions from \cite{wouters2020} distributed \href{https://github.com/KULeuven-COSIC/TCHES20V3_CNN_SCA}{here}.

We find that \all{} outperforms all prior deep learning-based leakage localization algorithms on all datasets except for DPAv4 according to the oracle agreement metric. The first-order parametric methods outperform the deep learning methods on the first-order datasets but generally do poorly on the second-order ASCADv1 datasets due to not being sensitive to second-order associations. Unsurprisingly, \blsnr{} and \blsosd{} achieve near-random performance on ASCADv1-fixed and \blsnr{} and \blcpa{} achieve near-random performance on ASCADv1-variable. Surprisingly, \blcpa{} performs fairly well on ASCADv1-fixed and \blsosd{} performs fairly well on ASCADv1-variable. We are not sure why this is the case, but conjecture it is because these methods have some natural proclivity to `rule out' measurements which are not at useful points in time for these particular datasets (e.g. those which are not close to a clock edge). Note that this surprisingly-strong performance compared to the deep learning baselines does not appear to carry over to the evaluations with the DNN occlusion tests or the template attack feature selection test.

On average, \all{} is the best method on the majority of datasets according to the reverse DNN occlusion test and template attack test, but results are mixed according to the forward DNN occlusion test. However, we find that the DNN occlusion tests have high variance, and in general there is a large overlap in error bars (note the large number of boxed and underlined methods in Tables \ref{tab:fwd_dnno_auc} and \ref{tab:rev_dnno_auc}).

\paragraph{Distribution of performance during hyperparameter sweeps}\label{appsec:model-selection-ablation}

\begin{figure}
    \centering
    \includegraphics[width=\linewidth]{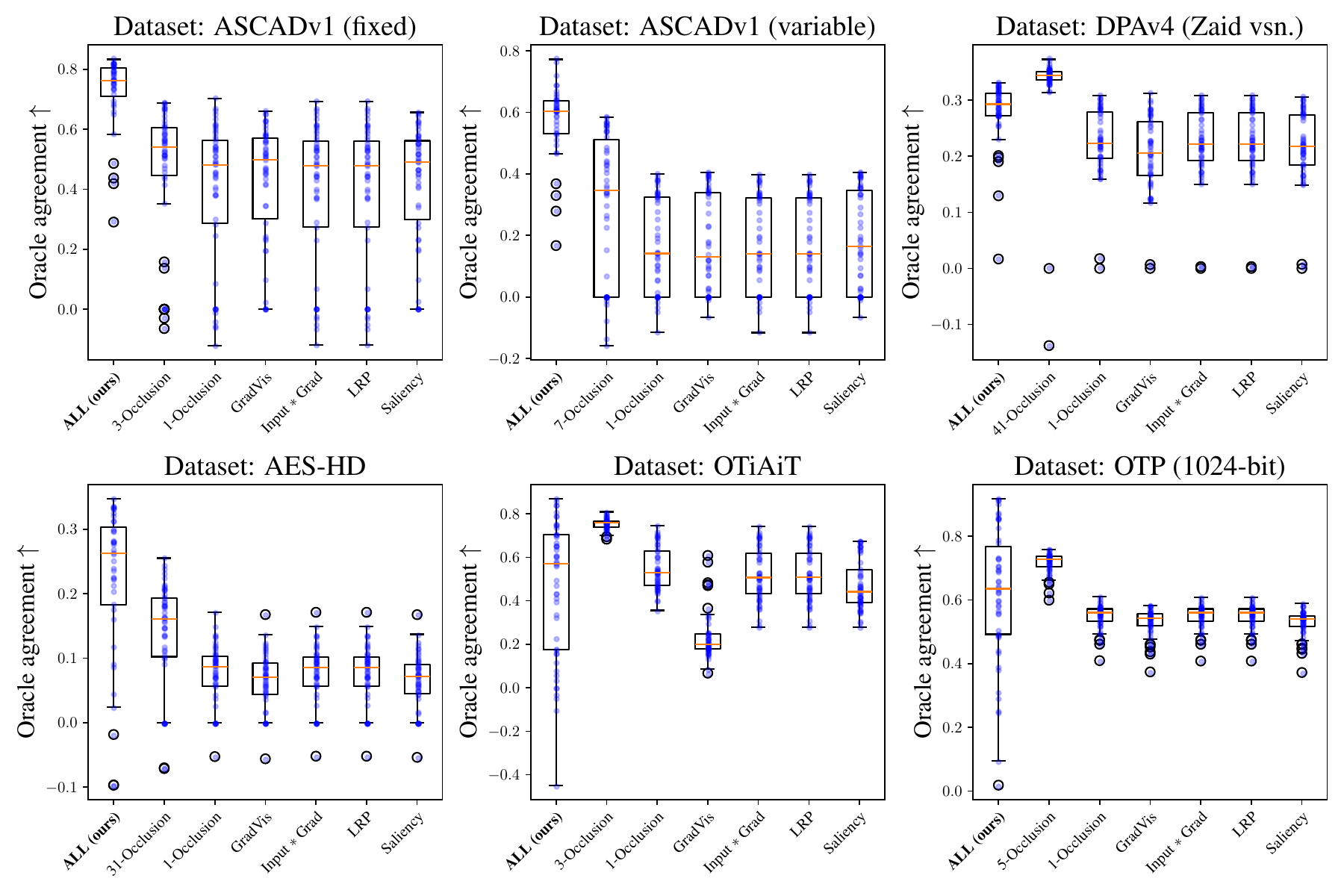}
    \caption{Distribution of performance of \all{} and selected baseline methods during a 50-trial random hyperparameter search. Note that \all{} generally outperforms baselines over a wide range of configurations. Blue dots denote individual samples, and boxes extend from first quartile to third quartile with a line at the median and whiskers extending to the furthest dot lying within $1.5\times$ the interquartile range from the box.}
    \label{fig:oracle_agreement_boxplots}
\end{figure}

In Fig. \ref{fig:oracle_agreement_boxplots} we show the distribution of performance during the random hyperparameter searches for \all{} and selected baselines. We see that \all{} convincingly outperforms all baselines besides \blmoccls{} on all datasets. Additionally, it has a higher peak performance than \blmoccls{} on every dataset except for DPAv4, and a higher median performance on ASCADv1-fixed, ASCADv1-variable, and AES-HD. As previously noted and considered again in subsequent ablation studies, we can further improve the performance of \all{} relative to \blmoccls{} by mimicking its smoothing effect with stride-1 average-pooling.

\subsubsection{Ablation studies}\label{appsec:ablation}

\begin{figure}
    \centering
    \includegraphics[width=\linewidth]{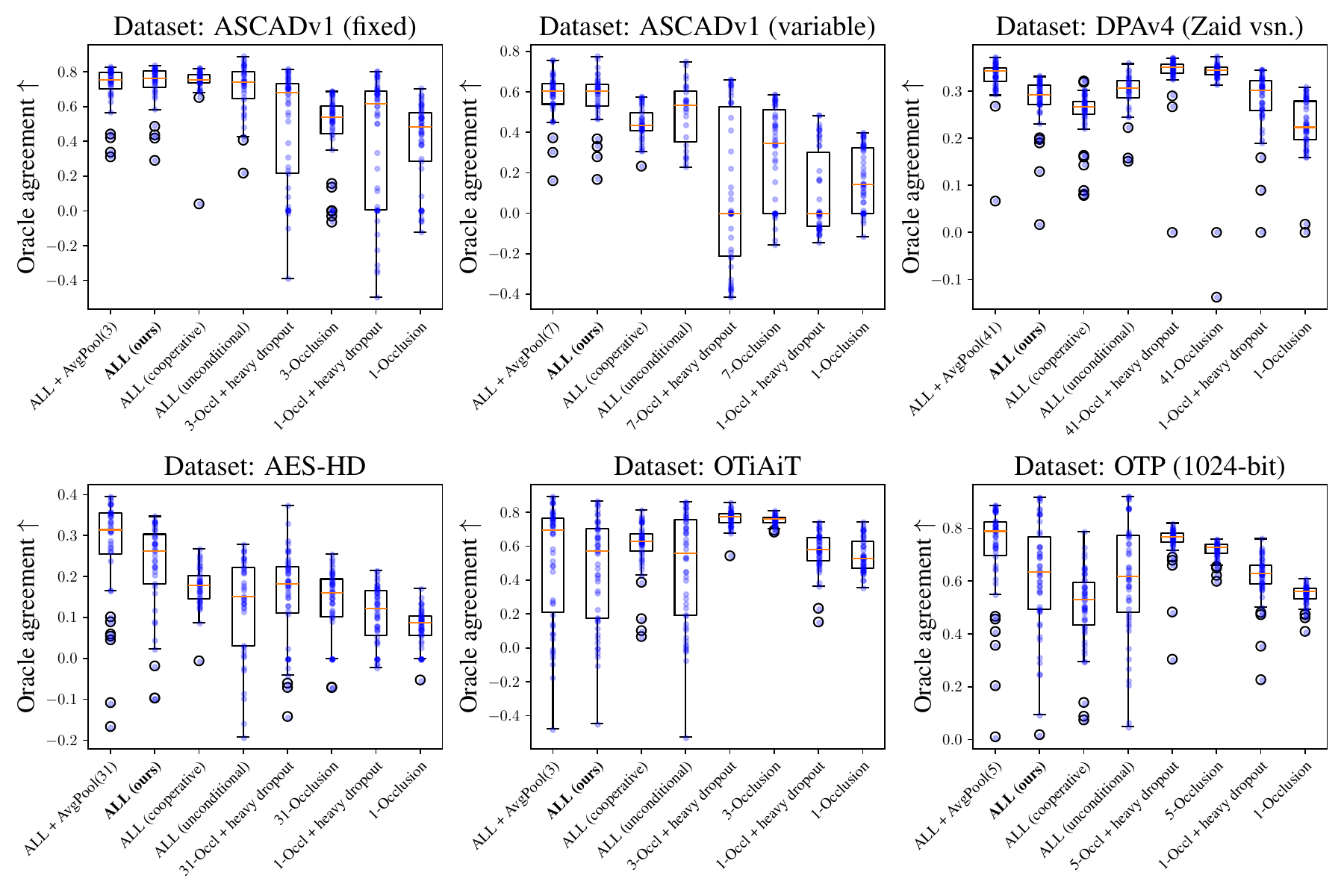}
    \caption{Ablation studies where we evaluate the influence of individual aspects of \all{} on its performance gains relative to prior work, as described in Sec. \ref{appsec:ablation}. These are plots of the distribution of oracle agreement values after a 50-trial random hyperparameter search with each ablation in place. `\all{} (ours)' denotes our method without modification. `\all{} + AvgPool($m^*$)' denotes an average-pooled version of \all{} to mimic the smoothing effect of \blmoccls{}. `\all{} (cooperative)' denotes \all{} with the adversarial objective replaced by a `cooperative' objective where both the classifier and noise distribution are trained to maximize the performance of the classifier. `\all{} (unconditional)' denotes \all{} without the occlusion masks being fed as an auxiliary input to the classifier. `$m$-Occl + heavy dropout' denotes \blmoccl{} with the input dropout to the classifier chosen from $\{0.05, 0.1, \dots, 0.95\}$ rather than $\{0.0, 0.1\}$ as in our other experiments. `\blmoccl{}' denote the results with the unmodified \blmoccl{} techniques.}
    \label{fig:ablation}
\end{figure}

\all{} has many differences from prior work, so here we run ablation studies to evaluate the impact of some of the important individual differences. For each ablated design decision, we run a new 50-trial hyperparameter sweep and plot the distribution of performance in terms of oracle agreement. Results are shown in Fig. \ref{fig:ablation}, with the salient results from Fig. \ref{fig:oracle_agreement_boxplots} copied over for reference. We ablate the following design decisions:

\paragraph{Heavy input dropout for the supervised classifiers used by baselines} See the distributions labeled `$m^*$-Occl + heavy dropout'. The \all{} classifier is trained on occluded inputs with the possible `heaviness' of the occlusion chosen spanning a wide range and chosen as a hyperparameter to optimize our proxy for oracle agreement. In contrast, the baseline methods are based on `interpreting' fixed classifiers which have been trained with input dropout chosen from $\{0.0, 0.1\}$ to optimize classification performance. A plausible conjecture is that \all{} has strong performance because the heavy input corruption `encourages' the classifier to compensate by leveraging a wider variety of input-output associations, whereas because the supervised classifiers train on uncorrupted or lightly-corrupted inputs, they have no such `incentive'. We test this assumption by tuning the classifiers with input dropout chosen from $\{0.05, 0.1, \dots, 0.95\}$ (the same search space that \all{} uses for $\overline{\gamma}$). We plot the performance distribution for $m^*$-occlusion and $1$-occlusion. For clarity we omit GradVis, Saliency, LRP and Input $*$ Grad, but these have similar trends as $1$-occlusion. We did not test OccPOI or second-order occlusion because they are costly and in our prior experiments second-order $m$-occlusion performs similarly to $m$-occlusion and OccPOI performs poorly compared to baselines.

We see that this heavy input dropout leads to significant improvements to the maximum and median performance on all datasets, though \all{} still convincingly outperforms 1-occlusion and $m$-occlusion except on DPAv4 and OTiAiT. To our knowledge no prior work has explored the effect of regularization strategies on leakage localization performance of methods which `interpret' supervised classifiers, and this result suggests that such research may be fruitful.

\paragraph{Adversarial \texorpdfstring{$\to$}{->} \texorpdfstring{`Cooperative'}{'Cooperative'} leakage localization} See the distributions labeled `\all{} (cooperative)'. The intuition behind \all{} is that we train a noise distribution to distribute a fixed amount of noise to minimize the performance of a classifier, and we then interpret noisier measurements as leakier. Along the lines of this intuition, we could also train it to distribute noise to \textit{maximize} the performance of a classifier, then interpret less-noisy measurements as leakier. We try this approach and find that it typically degrades performance relative to the adversarial version of the algorithm. We conjecture that this is because the adversarial approach encourages the classifier to rely on a diverse assortment of input-output associations, whereas the cooperative approach does the opposite.

\paragraph{Omitting the noise conditioning to the classifier} See the distributions labeled `\all{} (unconditional)'. We feed the occlusion mask as an auxiliary input to the \all{} classifiers, motivated by our theory which views it as a family of classifiers trained in an amortized manner via conditioning a single neural net. We test omitting this auxiliary input and find mixed results. On ASCADv1-fixed and DPAv4, \all{} achieves stronger performance without this auxiliary input, whereas on AES-HD it achieves stronger performance with this input and on ASCADv1-variable, OTiAiT, and OTP results are approximately the same. For future work it is likely justifiable to simplify \all{} by omitting this conditioning.

\paragraph{Average pooling \texorpdfstring{\all{}}{ALL} to mimic the smoothing effect of \texorpdfstring{$m^*$-Occlusion}{m*-Occlusion}} See the distributions labeled `\all{} + AvgPool($m^*$)'. Recall that we have chosen the occlusion window size $m^*$ by sweeping it over successive odd numbers until finding a local maximum in oracle agreement. We find that this consistently improves performance. One plausible explanation for the performance improvement is that large-window occlusion has a `smoothing' effect which amounts to an assumption that temporally-close measurements have similar leakiness. This is likely true for some datasets, and as $1$-occlusion is always included in the window size search space, $m^*$ occlusion will never degrade performance relative to $1$-occlusion. To test this explanation we average pool \all{} with stride 1, kernel size $m^*$, and zero-padding to preserve dimensionality, which creates a similar smoothing effect. Our results seem to support this explanation, with average pooling significantly improving the performance of \all{} on the DPAv4 and AES-HD datasets, where $m^*$-occlusion also has significantly stronger performance than $1$-occlusion. Note that using oracle agreement to choose an occlusion/pooling window size causes data contamination, so while it is fair to compare results for $m^*$-average-pooled \all{} to those of $m^*$-occlusion, they cannot fairly be compared to any other baseline.

\subsubsection{Theoretical and empirical computational cost of deep learning methods}

In table \ref{tab:empirical_computational_complexity} we list the theoretical computational complexity of the considered methods, as well as the measured wall clock time to run them. While for fairness our experiments use an equal number of training steps for \all{} and supervised learning, note that in general we suspect the latter will converge in fewer training steps; thus, these measurements likely overestimate the practical wall-clock time of supervised training. Additionally, note that OccPOI takes significantly more time than $m$-occlusion despite having the same computational complexity; this is because it requires $\Omega(T)$ \textit{sequential} forward passes through the model, whereas all forward passes may be done in parallel for $m$-occlusion. There is additional variance for the runtime of this method because we choose the attack dataset size to be as small as possible while comfortably allowing the classifier to attain a correct-key rank of 0 -- e.g. AES-HD takes the longest because we use the full attack dataset. We omit the parametric statistics-based methods from consideration, but these are done on the CPU and take a negligible amount of time compared to the deep learning methods.
\begin{table}[h]
    \centering
    \caption{Comparison of the computational cost of methods considered in our work. We denote by $C_F$ and $C_B$ the cost of a forward and backward pass through our neural net respectively, $N$ the dataset size, $n_{\mathrm{sup}}$ and $n_{\mathrm{all}}$ the number of epochs required for supervised learning and \all{} respectively, and $T$ the data dimensionality. We omit the parametric statistics-based baseline methods because they are done on the CPU and take a negligible amount of time relative to the deep learning methods. Runtimes are reported as mean $\pm$ standard deviation over 5 runs. $^{\dagger}$These methods are used to `interpret' a trained supervised classifier, but for comparison we report the cost of running them \textit{after} training the classifier. In practice one would also incur the cost of supervised training (first row).}
    \resizebox{\linewidth}{!}{\begin{tabular}{llcccccc}
        \toprule
        \textbf{Method} & \textbf{Total FLOPS} & \multicolumn{6}{c}{\textbf{A6000 minutes per trial}} \\
        & & ASCADv1 (fixed) & ASCADv1 (random) & DPAv4 (Zaid) & AES-HD & OTiAiT & OTP \\
        \cmidrule{3-8}
        Supervised training & $\Theta(Nn_{\mathrm{sup}}(C_F + C_B))$ & $2.09 \pm 0.01$ & $3.68 \pm 0.02$ & $1.98 \pm 0.04$ & $1.78 \pm 0.03$ & $0.17 \pm 0.02$ & $0.448 \pm 0.009$ \\
        GradVis$^{\dagger}$ & $\Theta(N(C_F + C_B))$ & $0.0675 \pm 0.0008$ & $0.263 \pm 0.002$ & $0.0080 \pm 0.0001$ & $0.0384 \pm 0.0003$ & $0.0080 \pm 0.0001$ & $0.0666 \pm 0.0005$ \\
        Saliency$^{\dagger}$ & $\Theta(N(C_F + C_B))$ & $0.078 \pm 0.001$ & $0.300 \pm 0.003$ & $0.0086 \pm 0.0002$ & $0.048 \pm 0.002$ & $0.0085 \pm 0.0002$ & $0.0881 \pm 0.0004$ \\
        Input $*$ Grad$^{\dagger}$ & $\Theta(N(C_F + C_B))$ & $0.080 \pm 0.002$ & $0.3023 \pm 0.0007$ & $0.0086 \pm 0.0002$ & $0.049 \pm 0.002$ & $0.0086 \pm 0.0002$ & $0.086 \pm 0.003$ \\
        LRP$^{\dagger}$ & $\Theta(N(C_F + C_B))$ & $0.080 \pm 0.001$ & $0.3045 \pm 0.0008$ & $0.0088 \pm 0.0002$ & $0.050 \pm 0.002$ & $0.00861 \pm 0.00007$ & $0.087 \pm 0.005$ \\
        $m$-Occlusion$^{\dagger}$ & $\Theta(NC_F T)$ & $0.1235 \pm 0.0004$ & $0.952 \pm 0.003$ & $0.0644 \pm 0.0003$ & $0.1781 \pm 0.0007$ & $0.018 \pm 0.002$ & $0.247 \pm 0.002$ \\
        $2^{\mathrm{nd}}$-order $m$-Occlusion$^{\dagger}$ & $\Theta(NC_F T^2)$ & $16.6 \pm 0.1$ & $327 \pm 1$ & $81.1 \pm 0.5$ & $68.4 \pm 0.6$ & $4.42 \pm 0.02$ & $70.6 \pm 0.3$ \\
        OccPOI$^{\dagger}$ & $\Omega(NC_F T)$ & $2.29 \pm 0.09$ & $7.6 \pm 0.4$ & $0.709 \pm 0.009$ & $36.5 \pm 0.6$ & $0.042 \pm 0.002$ & $0.0437 \pm 0.0004$ \\
        \all{} (Ours) & $\Theta(Nn_{\mathrm{all}}(C_F + C_B))$ & $3.2 \pm 0.2$ & $4.7 \pm 0.2$ & $2.44 \pm 0.05$ & $2.28 \pm 0.05$ & $0.200 \pm 0.005$ & $0.323 \pm 0.004$ \\
        \bottomrule
    \end{tabular}}
    \label{tab:empirical_computational_complexity}
\end{table}

\section{Limitations}\label{appsec:limitations}

For real side-channel leakage datasets we lack ground truth knowledge about the leakiness of each measurement, and all evaluation metrics considered in our paper have limitations. In particular:
\begin{itemize}
    \item The `oracle' leakage assessments used in the main paper ignore leakage of order greater than 1 except where higher-order leakage may be decomposed into first-order leakage of multiple variables, similarly to the analysis of \cite{egger2022}. This relies on careful analysis of implementations by humans, and is subject to error and oversights. Additionally, the SNR is not guaranteed to detect even first-order leakage -- it is sensitive only to the influence of the secret variable $Y$ on the \textit{mean} of each $X_t \mid Y,$ and will not detect cases where the distribution of $X_t \mid Y$ changes with the mean remaining fixed (e.g. if $X_t$ is Gaussian distributed with $Y$-dependent variance and $Y$-independent mean).
    \item The DNN occlusion tests and similar metrics proposed by \cite{hettwer2020} are sensitive only to the associations between $\vecfnt{X}$ and $Y$ that the neural net exploits. The superior performance of \all{} compared to prior deep learning-based algorithms, as well as work in other domains such as \cite{geirhos2020, hermann2020}, suggests that DNNs are prone to exploiting some but not all of the associations at their disposal. Additionally, such metrics may be biased in favor of deep learning methods due to both leveraging SGD-trained DNNs.
    \item Evaluation via feature selection for a Gaussian template attack as done by \cite{masure2019, yap2023} is sensitive only to the top $\approx 10$ leakiest measurements identified by an algorithm and ignores all others, which is a major limitation. Additionally, for second-order datasets a template attack will be unsuccessful unless \textit{both} of a leaking pair of variables (e.g. $r_3$ and $\Sbox(w_3 \oplus k_3) \oplus r_3$ for ASCADv1) are leaked through the selected measurements, and because no method considered in our work except for second-order occlusion has any ability to discern the particular variable leaking through a measurement, this would rely on random chance.
\end{itemize}
We believe it is important for work in this domain to use a variety of evaluation strategies to compensate for these individual limitations. This is similar to image synthesis research where it is common to use precision, recall, and FID score to avoid the individual limitations of each of these metrics.

\all{} (alongside the other deep learning-based method we consider) has major advantages over manual analysis and simpler parametric methods. However, it also comes with its own limitations:
\begin{itemize}
    \item Deep learning methods such as ours have a far larger computational cost than parametric methods. Additionally, \all{} is more computationally-expensive than the prior deep learning algorithms apart from OccPOI and second-order occlusion. This increased cost is not fully reflected in our reported runtimes because while for fairness we have run both \all{} and supervised learning-based methods for the same number of training steps and post-hoc early-stopped the latter, in practice we find that supervised learning usually converges to good solutions in fewer training steps than \all{}.
    \item Deep learning methods such as ours require hyperparameter tuning and will give incorrect leakiness estimates if poorly tuned. We view \all{} as \textit{complementary} with traditional approaches rather than as a replacement. Traditional approaches can cheaply exploit domain/implementation knowledge and detect many types of leakage for a low computational cost. \all{} can then be used to search for additionally leaking measurements not detected by these approaches.
\end{itemize}
Additionally, a limitation of the literature on deep side-channel leakage localization which we do not address in this work is that experiments are done at a smaller scale than the state of the art in deep side-channel attacks, and the considered algorithms would likely need to be significantly scaled up for practical use. The largest-scale experiments in this and prior work consider the ASCADv1-variable dataset, which consists of 300k 1400-measurement power traces and can be attacked with simple MLP and CNN architectures. The 1400-length power traces are extracted from longer 250k-length traces via downsampling and cropping in the general vicinity of known leaky instructions which were themselves located with a white box analysis. We believe that a critical direction for future work in this area is scaling existing methods to larger-scale architectures applied to uncropped high-dimensionality datasets, such as the transformer architecture of \cite{bursztein2023} and the raw ASCADv2 dataset \citep{masure2023}, which consists of 800k 1M-measurement power traces.

Our experiments consider mainly temporally-synchronized power trace datasets, which we believe is reasonable because in practice implementation designers are likely able to collect synchronized traces, as the dataset authors have done (e.g. by using the clock line of the hardware to trigger an oscilloscope). In practice a violation of this condition would mean that leakage appears `spread out' (e.g. see row 3 of Fig. \ref{fig:synthetic_aes_experiments}), making it more-challenging for designers to identify its source.

A limitation of \all{}, alongside the parametric statistical methods and OccPOI, is that they produce only a single vector summarizing leakage over the entire dataset, rather than for individual traces. As observed by \cite{wouters2020}, the neural net attribution methods can also be used to assign leakiness estimates to individual traces, though to our knowledge no work has systematically studied this ability. This would be a useful direction for future work, and would likely require innovation in performance evaluation strategies beyond those introduced in this paper or prior work.

\end{document}